\definecolor{mediumblue}{RGB}{100, 149, 237}
\def\@fnsymbol#1{\ensuremath{\ifcase#1\or \dagger\or \ddagger\or
   \mathsection\or \mathparagraph\or \|\or **\or \dagger\dagger
   \or \ddagger\ddagger \else\@ctrerr\fi}}
\newcommand{\ebt}{\widetilde{\vct{e}}}
\newcommand{\xd}{\bar{\x}}
\newcommand{\Hmm}{\Hb^{\rm{mm}}}
\newcommand{\pih}{\hat{\pi}}
\newcommand{\ph}{\hat{p}}
\newcommand{\pbh}{\hat{\pb}}
\newcommand{\CE}{\operatorname{CE}}
\newcommand{\CEin}{\operatorname{CE}_{\rm{in}}}
\newcommand{\CEinstar}{\operatorname{CE}_{\rm{in},\star}}
\newcommand{\ent}{\Hc}
\newcommand{\Wmm}{\W^{\rm{mm}}}
\newcommand{\NTP}{\text{NTP}\xspace}
\newcommand{\NTPH}{\text{NTP}$_{\ent}$\xspace}
\newcommand{\Eb}{\mtx{E}}
\newcommand{\What}{\widehat{\W}}
\newcommand{\ellb}{\ell}
\newcommand{\ellbb}{\boldsymbol{\ell}}
\newcommand{\ellbhat}{\widehat{\ellbb}}
\newcommand{\Ts}{\mathscr{F}}
\newcommand{\Tc}{\mathcal{T}}
\newcommand{\Lbmm}{\Lb^{\rm{mm}}}
\newcommand{\GHmmo}{\overline{\G}_{\Hb}^{\rm{mm}}}
\newcommand{\GWmmo}{\overline{\G}_{\W}^{\rm{mm}}}
\newcommand{\Lbtina}{\smatbar}
\newcommand{\Lbin}{\Lb^{\rm{in}}}
\newcommand{\Ebin}[1]{\Eb_{\rm{in},#1}}
\newcommand{\Ebint}[1]{\widetilde{\Eb}_{\rm{in},#1}}
\newcommand{\Ebout}[1]{\Eb_{\rm{out},#1}}
\newcommand{\sqrtm}[1]{#1^{\frac{1}{2}}}
\newcommand{\rankset}{\mathcal{R}}
\newcommand{\smat}{\Sb}
\newcommand{\smatbar}{\widetilde{\smat}}
\newcommand{\cosa}[2]{{\operatorname{cos}}\big(#1,#2\big)}
\newcommand{\Qcs}{\mathcal{P}_{\Ts}}
\newcommand{\Qcsperp}{\mathcal{P}_{\perp}}
\newcommand{\GW}{\G_\W}
\newcommand{\GH}{\G_{\Hb}}
\newcommand{\PMI}{\textbf{PMI}}
\newcommand{\Lb}{\vct{L}}
\newcommand{\Lbhat}{\widehat{\vct{L}}}
\newcommand{\rank}[1]{\operatorname{rank}\left(#1\right)}
\newcommand{\laz}{{\la\rightarrow0}}
\newcommand{\Binf}{{B\rightarrow\infty}}
\newcommand{\Rinf}{{R\rightarrow\infty}}
\setlist[itemize]{leftmargin=5mm}
\newcommand{\vb}{{\vct{v}}}
\newcommand{\Vb}{{\mtx{V}}}
\newcommand{\sft}[1]{\mathbb{S}(#1)}
\newcommand{\sfti}[2]{\mathbb{S}_{#1}(#2)}
\newcommand{\X}{{\mtx{X}}}
\newcommand{\Pbb}{{\mtx{P}}}
\newcommand{\vct}[1]{\bm{#1}}
\newcommand{\mtx}[1]{\bm{#1}}
\newcommand{\tsn}[1]{{\left\vert\kern-0.25ex\left\vert\kern-0.25ex\left\vert #1 
    \right\vert\kern-0.25ex\right\vert\kern-0.25ex\right\vert}}
\definecolor{darkred}{RGB}{150,0,0}
\definecolor{darkgreen}{RGB}{0,150,0}
\definecolor{darkblue}{RGB}{0,0,200}
\newtheorem{theorem}{Theorem}
\newtheorem{lemma}{Lemma}
\newtheorem{corollary}{Corollary}
\newtheorem{proposition}{Proposition}
\newtheorem{definition}{Definition}
\newcommand{\Rb}{\mathbf{R}}
\newcommand{\diag}[1]{\operatorname{diag}(#1)}
\DeclareMathOperator{\tr}{tr}
\newcommand{\synth}{\texttt{Synthetic}}
\newcommand{\simTS}{\texttt{Simplified TinyStories}}
\newcommand{\TS}{\texttt{TinyStories}}
\newcommand{\Eqref}[1]{Eq.~\eqref{#1}}
\newcommand{\appropto}{\mathrel{\vcenter{
  \offinterlineskip\halign{\hfil$##$\cr
    \propto\cr\noalign{\kern2pt}\sim\cr\noalign{\kern-2pt}}}}}
\newcommand{\cut}[1]{\textcolor{red}{}}
\newcommand{\W}{{\vct{W}}}
\newcommand{\Ab}{{\vct{A}}}
\newcommand{\Xb}{{\vct{X}}}
\newcommand{\Yb}{\mathbf{Y}}
\newcommand{\corr}[1]{\textsc{corr}(#1)}
\newcommand{\ssim}[2]{\textsc{sim}\big(#1,#2\big)}
\newcommand{\ssimstar}[2]{\textsc{sim}\big(#1,#2\big)}
\newcommand{\Ub}{\vct{U}}
\newcommand{\G}{\vct{G}}
\newcommand{\Hb}{{\vct{H}}}
\newcommand{\Sb}{\vct{S}}
\newcommand{\A}{\vct{A}}
\newcommand{\pb}{\vct{p}}
\newcommand{\lambdab}{\boldsymbol{\lambda}}
\newcommand{\thetab}{\boldsymbol{\theta}}
\newcommand{\x}{\vct{x}}
\newcommand{\ub}{\vct{u}}
\newcommand{\w}{{\vct{w}}}
\newcommand{\eb}{\vct{e}}
\newcommand{\z}{\vct{z}}
\newcommand{\ab}{\vct{a}}
\newcommand{\oneb}{\mathbf{1}}
\newcommand{\hb}{\vct{h}}
\newcommand{\Vc}{\mathcal{V}}
\newcommand{\Fc}{\mathcal{F}}
\newcommand{\Sc}{{\mathcal{S}}}
\newcommand{\Xc}{\mathcal{X}}
\newcommand{\Nc}{\mathcal{N}}
\newcommand{\Rc}{\mathcal{R}}
\newcommand{\Lc}{\mathcal{L}}
\newcommand{\Pc}{\mathcal{P}}
\newcommand{\Hc}{\mathcal{H}}
\newcommand{\beq}{\begin{equation}}
\newcommand{\eeq}{\end{equation}}
\newcommand{\bea}{\begin{align}}
\newcommand{\eea}{\end{align}}
\newcommand{\R}{\mathbb{R}}
\newcommand{\E}{\mathbb{E}}
\newcommand{\nn}{\notag}
  \newcommand{\Sigmab}{\boldsymbol\Sigma}
  \newcommand{\la}{{\lambda}}                     
  \newcommand{\eps}{\epsilon}
\newcommand{\ind}[1]{\mathds{1}[#1]}
\DeclarePairedDelimiterX{\inp}[2]{\langle}{\rangle}{#1, #2}
\newcommand{\Id}{\mathds{I}}
\newcommand{\ones}{\mathds{1}}
\providecommand{\norm}[1]{\lVert#1\rVert}
\providecommand{\abs}[1]{\left\lvert#1\right\rvert}
\newcounter{myitemcounter}
\renewcommand\p@myitemcounter{Q}
\def\thanks#1{\protected@xdef\@thanks{\@thanks
        \protect\footnotetext{#1}}}
\title{Implicit Geometry of Next-token Prediction: \\From Language Sparsity Patterns to Model Representations
}
\author{Yize Zhao$^{\dagger}$, Tina Behnia$^{\dagger}$, Vala Vakilian \& Christos Thrampoulidis \thanks{\hspace{-16pt}$^{\dagger}$ Equal contribution.} \\
Department of Electrical \& Computer Engineering\\
University of British Columbia\\
Vancouver, Canada\\
\texttt{\{zhaoyize,tina.behnia,vaalaa,cthrampo\}@ece.ubc.ca} \\
}
\begin{document}



\maketitle

\begin{abstract}
Next-token prediction (NTP) over large text corpora has become the go-to paradigm to train large language models. Yet, it remains unclear how NTP influences the mapping of linguistic patterns to geometric properties of the resulting model representations. We frame training of large language models as soft-label classification over sparse probabilistic label vectors, coupled with an analytical approximation that allows unrestricted generation of context embeddings. This approach links NTP training to rank-constrained, nuclear-norm regularized optimization in the logit domain, offering a framework for analyzing the geometry of word and context embeddings. In large embedding spaces, we find that NTP implicitly favors learning logits with a sparse plus low-rank structure.  While the sparse component captures the co-occurrence frequency of context-word pairs, the orthogonal low-rank component, which becomes dominant as training progresses, depends solely on the sparsity pattern of the co-occurrence matrix. Consequently, when projected onto an appropriate subspace, representations of contexts that are followed by the same set of next-tokens collapse—a phenomenon we term subspace-collapse. We validate our findings on synthetic and small-scale real language datasets. Finally, we outline potential research directions aimed at deepening the understanding of NTP's influence on the learning of linguistic patterns and regularities.\footnote{Code available at: \url{https://github.com/YizeZhao/Implicit_Geometry_of_NTP}.}
\end{abstract}


\vspace{-10pt}
\section{Introduction}
\vspace{-10pt}

Next-token prediction (NTP) is the preferred training paradigm for state-of-the-art language models. The process, elegantly simple, uses a large training corpus to minimize, for each context $\z_{<t}\in\Vc^{t-1}$ of $t-1$ preceding tokens, the cross-entropy (CE) loss between the model's predicted conditional probability distribution over potential next tokens from a vocabulary $\Vc$ and the one-hot encoded actual next token $z_t\in\Vc$. The model's conditional distribution is defined through a softmax map applied to logits $\ellbb_{<t}(\W,\thetab)=\W\hb_{\thetab}(\z_{<t})$, which are generated by mapping \emph{context embeddings} $\hb_{\thetab}(\z_{<t})\in\R^d$—the neural network's $d$-dimensional  representations of contexts $\z_{<t}$—using a matrix $\W\in\R^{|\Vc|\times d}$ of \emph{word embeddings}.
\looseness=-1

\input{Arxiv_NewTempl/sections/figure_m404_heatmaps}

Rooted in the foundational works of \citet{shannon1948mathematical} and inspired by the ``co-occurrence statistics'' and ``distributional hypothesis'' \citep{harris1954distributional}, the NTP paradigm suggests that a word's meaning is defined by its context. This principle underlies both classical vector-space models \citep{schutze2008introduction} and neural language models \citep{neural_language_models_1,neural_language_models_2,neural_language_models_3,bengio2000taking,word2vec_1}, propelling the development of today's sophisticated large language models \citep{radford2018improving,radford2019language,brown2020language}. However, a fundamental question remains: \looseness=-1
\begin{center}
    \emph{How does the NTP learning objective shape the relationship between \\ language statistics and the geometry of model representations?}
\end{center}
This inquiry, which we term the \emph{implicit geometry} of NTP—so named because the NTP objective does not explicitly impose any such relationship—explores how distances and angles within the neural network's $d$-dimensional representational space correlate with linguistic patterns at the end of training. %

We postulate that understanding this implicit geometry  is essential to sheding light on functional principles of large language models, since NTP is used for training across diverse architectures, from LSTMs to transformers and state-space models. Specifically, exploring how optimization under NTP shapes representations of words and contexts, which empirically mirror complex human-like patterns, not only has scientific merit but could also enhance model interpretability and explainability. This is crucial for understanding how biases in training data influence learned representations and how such biases can be mitigated. At the same time, revealing how implicit geometry correlates with language statistics could help us reverse-engineer optimization algorithms and the training loss itself to achieve more desirable structures of model representations, addressing challenges such as statistical imbalances in language data. Finally, understanding how state-of-the-art models internalize language to form representations might also enhance our grasp of language itself. \looseness=-1

Answering the question of implicit geometry is inherently challenging, as representations emerge from training complex models (in terms of architecture and size) on diverse, large-scale datasets (varying in source, size, and tokenization) with differing optimization hyperparameters (e.g., learning rate, weight decay, number of iterations). Nonetheless, we demonstrate that characterizing the implicit geometry—at least at a “macroscopic level”—is feasible in a doubly-asymptotic regime of sufficiently large models and extended training periods. \looseness=-1

Specifically, this paper develops an analytical framework to characterize the implicit geometry induced by NTP training on language datasets. Drawing inspiration from seminal studies on the geometry of deep model representations in image recognition \citep{NC}, our framework distinguishes itself from previous studies on language representations by not concentrating on specific architectures such as transformers. Instead, we assume that the model has adequate representation capacity and undergoes effective optimization, making it possible to minimize the NTP loss to its entropy lower-bound. This approach isolates the influence of NTP itself—rather than architectural nuances—in shaping the implicit geometry of the language model. 

The framework reveals the key role that sparsity patterns in language statistics  \citep{ntp} play in shaping the implicit geometry. Building on and extending recent work \citep{ntp}, we demonstrate that the recurrence of only a few tokens from the entire vocabulary as next-tokens in specific contexts leads to an implicit bias in NTP training. This bias favors a matrix of logits that develops a \emph{sparse plus low-rank} structure during training.
The sparse component of this matrix captures the probabilities of co-occurring words and contexts, while the dominant low-rank component reflects the sparsity pattern of the co-occurrence matrix. {Consequently, embeddings of contexts followed by similar sets of words—regardless of their frequencies—increasingly align during training. Simultaneously, the sparse component of the logits ensures accurate prediction of the correct frequencies for each context, which is crucial for achieving the entropy lower bound.} Importantly, both the sparse and low-rank components can be computed  purely from the input data, enabling us to predict the geometric structure of a model's representations trained on the same data a priori, without the need for training; see Fig. \ref{fig:simplified_intro} for an illustration.
Overall, our framework introduces a novel perspective, distinct from traditional analyses of word representations like those in the Word2Vec model \citep{levy}. \looseness=-1

\subsection{Methodology}\label{sec:intro_methodology}
Our methodology integrates three foundational modeling concepts as follows:


 Firstly, following \citep{ntp} we frame NTP as soft-label classification with CE loss applied to \emph{sparse} probabilistic label vectors. This isolates $m$ \emph{distinct}  contexts, which could be repeated multiple times throughout the training corpus, and assigns to each a sparse $V=|\Vc|$-dimensional conditional-probability label vector $\pbh_j$, reflecting the frequency of each token following context $j\in[m]$. The sparsity of $\pbh_j$ indicates that certain tokens, which we refer to as \emph{off-support} tokens for the specific $j$-th context, never follow this context.

The second concept facilitates a tractable analysis of context embeddings by assuming expressive (enough) neural networks can produce unconstrained embeddings $\hb_j\in\R^d$, independent of the architecture's specific complexities \citep{yang2017breaking,mixon2020neural}. This redefines NTP as a minimization, over word and context embedding matrices $\W$ and $\Hb$, of the NTP loss across a training set $\Tc$ determined by the \emph{sparse} matrix of next-token conditional probabilities $$\Pbb=\Pbb(\Tc)
=\left[\pbh_{1},\pbh_{2},\ldots,\pbh_{m}\right]
\in[0,1]^{V\times m}.$$
Overall, this leads to the following log-bilinear model:
\begin{align}\label{eq:NTP intro}
\min_{\W\in\R^{V\times d},\,\Hb\in\R^{d\times m}}\, \Lc_{\rm{NTP}}\big(\W\Hb;\Pbb\big)\,.
 \vspace{-2pt}
\end{align}
This way, our goal to study word-word, context-context, and word-context geometric relationships becomes that of characterizing the Gram matrices $\GW=\W\W^\top$ and $\GH=\Hb^\top\Hb$, as well as the logit matrix $\Lb=\W\Hb$ at the minimizers of \Eqref{eq:NTP intro}. This task is complicated by the non-convex nature of the minimization and the sparsity of the probabilistic label vectors. Specifically, as we show, the sparsity of $\Pbb$ may lead to multiple minimizers, potentially making geometric characterization ambiguous.

To address this, we leverage a third concept: focusing on specific minimizers identified through the \emph{regularization path} \citep{rosset2003margin}. This entails following the solution trajectory of the empirical risk minimization in \Eqref{eq:NTP intro} as an additive ridge regularization for $\W$ and $\Hb$ diminishes to zero.  For the purpose of comparing the analysis outcomes to our numerical evaluations, we interpret the prediction obtained from the regularization path analysis as a proxy
for the solution found by gradient-based optimization when the NTP loss approaches its empirical entropy lower bound.




\subsection{Summary of findings}\label{sec:summary}
\vspace{-5pt}
\begin{wrapfigure}{r}{0.4\textwidth}
  \centering
  \vspace{-30pt}\includegraphics[width=0.38\textwidth]{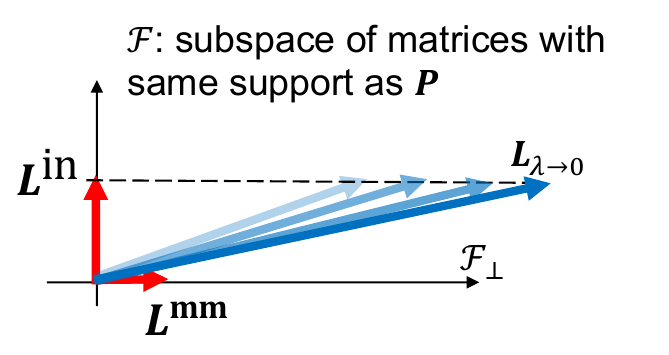}
  \captionsetup{width=0.4\textwidth}
  \vspace{-5pt}\caption{In the limit of vanishing regularization, the ridge-regularized \NTP objective solution $\Lb_{\la}$ (aka the logit matrix) decomposes into two orthogonal components. The component $\Lbin$ inherits the sparsity of  the matrix $\Pbb$ of next-token conditional probabilities. In the orthogonal complement, $\Lb_\la$ diverges in norm but converges in direction to $\Lbmm$, which results from a low-rank-promoting nuclear-norm minimization that depends only on the sparsity pattern of $\Pbb$.}
  \label{fig:Lla}
\end{wrapfigure}
Building on the above methodology, our analytical framework leads to the following results:
\noindent\textbf{Formulation in  logit space.}~Sec.~\ref{sec:setup} presents an equivalent formulation of the \NTP objective $$\Lc_{\rm{NTP}}\big(\W\Hb;\Pbb\big)+\la\|\W\|^2+\la\|\Hb\|^2$$ into the logit space, given in terms of a rank-constrained and nuclear-norm regularized minimization. From this, word and context matrices can be obtained through matrix factorization of logits $\Lb$. 

%
\vspace{-3pt}
\noindent\textbf{Logit Convergence.}~
Focusing on $d\geq V$\footnote{This does \emph{not} constrain $d$ relative to the number of distinct contexts $m$, which could be significantly larger than the vocabulary size $V$.  Also,  despite large embedding dimensions, our results still reveal how the geometry of learned word embeddings encodes fine-grained similarities between words as materialized in the patterns of language data; {e.g., refer to Fig.~\ref{fig:sim_W_lable_404} for a visualization.}
\looseness=-1}, in Sec.~\ref{sec:analysis}, we demonstrate that the logit matrix $\Lb_\la$ for regularization $\la\rightarrow 0$ (which is proxy for training iteration $k\rightarrow\infty$), behaves for some $R(\la)\rightarrow\infty$ as shown by the following claim (see Fig. \ref{fig:Lla} for visualization):
\vspace{-3pt}

\makeatletter
\newcommand{\itemref}[1]{%
	\item[#1]\protected@edef\@currentlabel{#1}%
}
\makeatother

\begin{enumerate}

\itemref{\textbf{(C1)}}\label{P logits} \textbf{Logits' sparse plus low-rank decomposition:}  As $\laz$, for some $R(\la)\rightarrow\infty$:
\begin{align}\label{eq:Lb intro}
~~~\Lb_\la~~~\approx \underbrace{~~~\Lbin~~~}_{\text{\textbf{sparse} component:}\atop\text{sets in-support token probabilities}}~ + ~~~~~ R(\la)\cdot\underbrace{~~~\Lbmm~~~}_{\text{\textbf{low-rank} component:}\atop\text{separates in- from off-support tokens}},
\end{align}
where the orthogonal components \(\Lbin\) and \(\Lbmm\) have distinct operational roles.
\end{enumerate}
\(\Lbin\) inherits the sparsity of the data matrix \(\Pbb\) and encodes information regarding frequencies of in-support tokens. In contrast, \(\Lbmm\) is independent of these frequencies and  is  influenced only by the sparsity pattern \(\Sb \in \{0,1\}^{V \times m}\). Thus, its value is guided by both the observed `company' of each context (in-support tokens), but not their frequency, and, by the `company' it lacks (off-support tokens). {More specifically, 
$\Lbmm$ functions to maximize the logit margin between in-support and off-support tokens, where the margin is defined in terms of the nuclear norm that promotes low rank, rather than the Euclidean norm. To highlight this property, we refer to 
$\Lbmm$
as the `low-rank' component, though it could also be described as the `max-margin' component.} The decomposition in \Eqref{eq:Lb intro} also shows the logits become unbounded in norm when projected on the subspace of $\Lbmm$.\looseness=-1

%


\noindent\textbf{Context and word embeddings' geometry.}
Similar to logits, word/context embeddings $\W$ and $\Hb$ also grow in norm as $\laz$. This occurs in a way that simultaneously guarantees the resulting logit matrix abides by Claim \ref{P logits},
which leads to the following additional claims:\looseness=-1
\begin{enumerate}
    
    \itemref{\textbf{(C2)}}
    \label{P directional} \textbf{Directional convergence:} Word and context embedding matrices converge \emph{directionally}  to matrices $\Wmm:=\Ub\Sigmab^{1/2}\Rb$,  $\Hmm:=\Rb^\top\Sigmab^{1/2}\Vb^\top$, where $\Rb$ is a rotation matrix and $\Ub\Sigmab\Vb^\top$ is the singular value decomposition (SVD) of the low-rank max-margin component $\Lbmm$ in \Eqref{eq:Lb intro}. Concretely, letting $\overline{\A}:=\A/\|\A\|$ for any matrix $\A$:
    \begin{align}    \overline{\GW}\rightarrow\Ub\overline{\Sigmab}\Ub^\top\quad\text{and}\quad
        \overline{\GH}\rightarrow\Vb\overline{\Sigmab}\Vb^\top\,.
    \end{align}
    \itemref{\textbf{(C3)}}\label{P  collapse}  \textbf{Subspace collapse:} Since $\Lbmm$  depends only on the sparsity patterns of next-token distributions (not their frequencies), a consequence of \ref{P directional} is a property termed \emph{subspace collapse}: embeddings $\hb_j$,  $\hb_{j
    '}$ of contexts $j\neq j'\in[m]$ that are followed by the same set of next-tokens (although their frequencies may differ), converge to the same limiting direction, i.e.,  for all $j,j'\in[m]$
    \begin{align}\label{eq:}
    {\rm{support}}(\pbh_j)={\rm{support}}(\pbh_{j'}) ~\Longrightarrow\cos\big({\hb_j},{\hb_{j'}}\big)={\hb_j^\top\hb_{j'}}\big/{\big(\|\hb_j\|\,\|\hb_{j'}\|\big)}\rightarrow\, 1.
    \end{align}
    \itemref{\textbf{(C4)}}
    \label{P soft-labels}  
    \textbf{Soft-label interpolation:} When projected on the  subspace of in-support tokens, the logits $\W\Hb$ interpolate the corresponding soft-labels ensuring that the NTP loss reaches the entropy lower bound. Concretely, for all $j\in[m]$,
    \begin{align}\label{eq:interpolation}
        (\w_z-\w_{z'})^\top\hb_j = \log\left({\ph_{j,z}}/{\ph_{j,z'}}\right),\quad \forall z,z' \in {\rm{support}}(\pbh_j)\,.
    \end{align}
    Since the subspace of in-support tokens is orthogonal to the subspace of $\Lbmm$, the subspace collapse \ref{P  collapse} does \emph{not} prevent logits $\w_z^\top\hb_j$ and $\w_z^\top\hb_{j'}$ of in-support tokens $z\in{\rm{support}}(\pbh_j)={\rm{support}}(\pbh_{j'})$ to interpolate (in the sense of \Eqref{eq:interpolation}) potentially different conditional probabilities (soft-labels) $\ph_{j,z}$ and $\ph_{j',z}$, respectively. \looseness=-1
\end{enumerate}

\begin{figure*}[t]
    \vspace{-25pt}
	\hspace{20pt} 
    \resizebox{0.85\textwidth}{!}{
    \begin{subfigure}{0.74\textwidth}
		\centering
		\begin{tikzpicture}
			\node at (-0,-1.4) 
			{\includegraphics[scale=0.20]{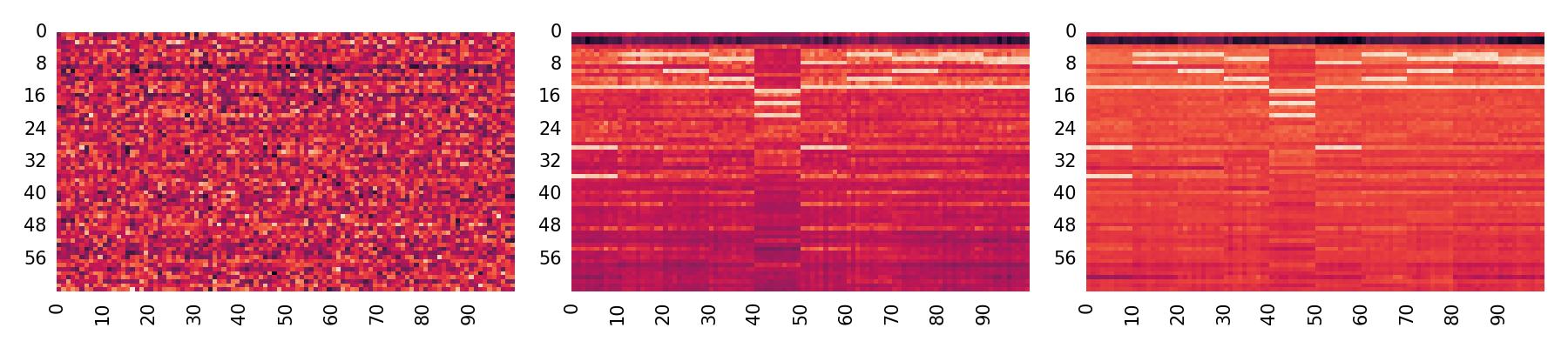}};
			\node at (-3.,-0.2) [scale=0.7]{Epoch 0};
            \node at (-0.0,-0.2) [scale=0.7]{Epoch 100};
            \node at (3.,-0.2) [scale=0.7]{Epoch 600};
			\node at (-0.0,0.3) [scale=0.9]{\textbf{(a) Training checkpoints}};
			\node at (6.2,0.3) [scale=0.9]{\textbf{(b) Proxy}};

            \node at (6.4,-0.2) [scale=0.8]{$\smatbar$};
			\node at (-4.7,-1.25) [scale=0.8, rotate=90]{$\Lb$};
		\end{tikzpicture}
    \end{subfigure}
    \hspace{-15pt} \begin{subfigure}{0.24\textwidth}
		\centering
		\begin{tikzpicture}
			\node at (-0,-1.6) 
			{\includegraphics[scale=0.2]{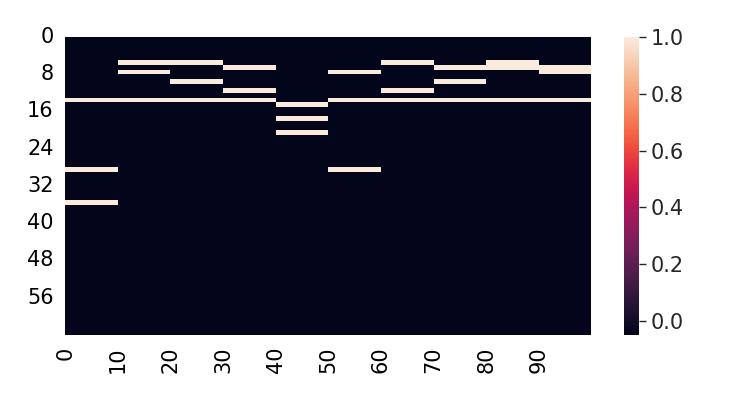}};
		\end{tikzpicture}
    \end{subfigure}
    }
    \\
    
    \hspace{20pt}
    \resizebox{0.85\textwidth}{!}{ \begin{subfigure}{0.74\textwidth}
		\centering
		\begin{tikzpicture}
			\node at (-0,-1.6) 
			{\includegraphics[scale=0.2]{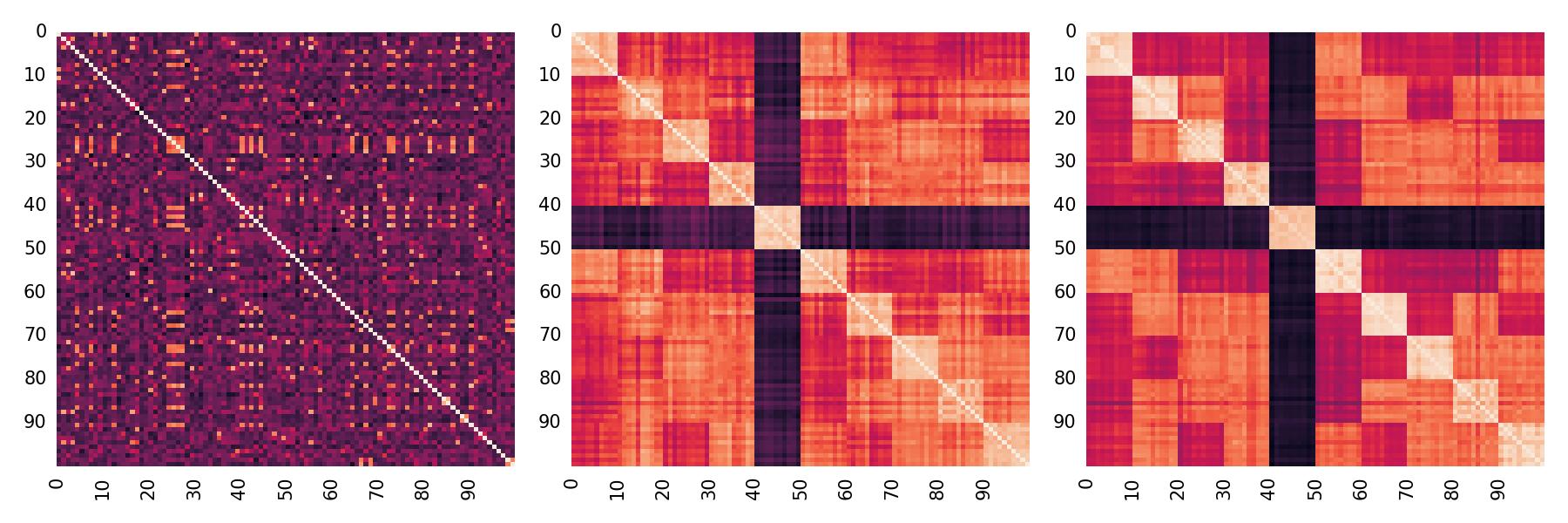}};
            \node at (6.4,0.1) [scale=0.7]{$\corr{{\smatbar}}$};
            \node at (-3.,0.1) [scale=0.7]{Epoch 0};
            \node at (-0.0,0.1) [scale=0.7]{Epoch 100};
            \node at (3.,0.1) [scale=0.7]{Epoch 600};
            \node at (-4.7,-1.5) [scale=0.8, rotate=90]{$\corr{\Hb}$};
		\end{tikzpicture}
    \end{subfigure}
    \hspace{-15pt} \begin{subfigure}{0.24\textwidth}
		\centering
		\begin{tikzpicture}
			\node at (-0,-1.4) 
			{\includegraphics[scale=0.2]{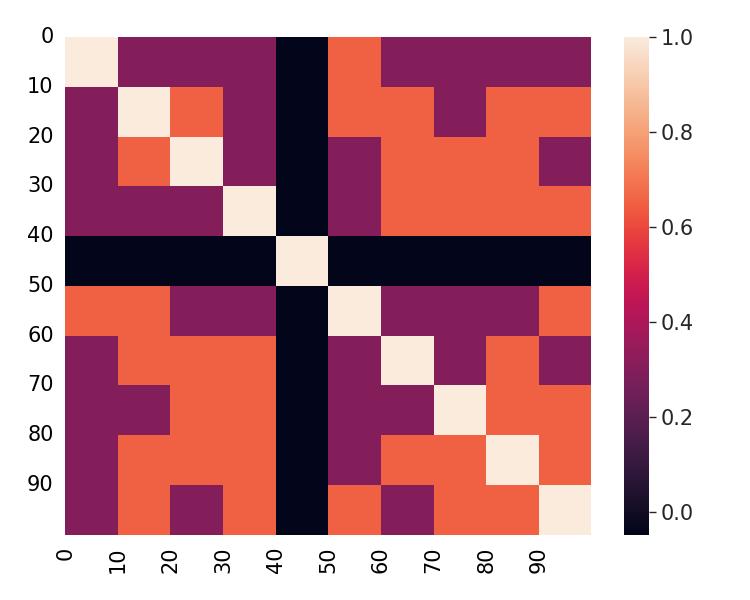}};
		\end{tikzpicture}
    \end{subfigure}
    }
    \captionsetup{width=\textwidth}
    \caption{
    Similar to Fig.~\ref{fig:simplified_intro}, this time on a 12-layer TF trained on a subset of $100$ stories from the \TS~dataset. Here, computing the theoretical prediction $\Hmm$ is computationally expensive. Thus, we compare the embeddings geometry with the Proxy \ref{P proxy}. Details in Sec.~\ref{sec:exp_main}. \looseness=-1}
    %
	\label{fig:intro_TinyStories} 
\end{figure*}

\noindent\textbf{Experiments.} Sec.~\ref{sec:exp_main} presents experiments on controlled settings that validate our findings. 
An example is shown in Fig.~\ref{fig:simplified_intro}: The experiment involves training a 4-layer transformer (TF) on a training set extracted and curated from the \TS~dataset \citep{eldan2023tinystories}, with the following characteristics: a vocabulary size $V=104$ and a total number of contexts $n\approx3050$, out of which $m\approx400$ are distinct. We choose TF embedding dimension $d=128>V$ and train until the empirical entropy lower bound is reached within an order of $10^{-4}$. The leftmost Panel (a) depicts the geometry of the context embeddings $\Hb$ (Top), word embeddings $\W$ (Middle), and logits $\Lb$ (Bottom) learned by the TF at the end of training.
Panel (b) compares these with the same quantities learned by the log-bilinear model in \Eqref{eq:NTP intro} evaluated on the same training dataset. The apparent resemblance of the patterns validates that the proposed log-bilinear analysis model is a good proxy for the TF model. Panel (c) compares the TF output to our analytical predictions: Comparing TF's context/word embeddings and logits to $\Hmm$/$\Wmm$ and $\Lbmm$ confirms Claims \ref{P directional} and \ref{P logits}. 
The bright yellow regions in the Gram matrices of embeddings help visualize the subspace collapse Claim \ref{P  collapse}. See Sec.~\ref{sec:exp_main} for additional details and verification of Claim \ref{P soft-labels}. \looseness=-1

Finding the analytical prediction \ref{P directional}, illustrated in Panel (c), requires solving a semi-definite program for $\Lbmm$ and computing its SVD. In large scales, this can be computationally prohibitive, motivating the following heuristic proxy for the embedding geometry.\looseness=-1
\begin{enumerate}
    \itemref{\textbf{(P)}}
    \label{P proxy} \textbf{Empirical proxy for directional convergence:} Let $\smatbar:=(\Id_V-\frac{1}{V}\ones_V\ones_V^\top)\smat$ be the (column-wise) centered support-set matrix. The directional component of
    word and context embedding matrices can be well-approximated by $\smatbar$ as follows:
    \looseness=-1
    \begin{align}    \Wmm{\Wmm}^\top\,\approx\,\smatbar\smatbar^\top\quad\text{and}\quad
        {\Hmm}^\top\Hmm\,\approx\,\smatbar^\top\smatbar\,.
    \end{align}
\end{enumerate}
Panel (d) of Fig.~\ref{fig:simplified_intro} suggests proxy \ref{P proxy} as a good enough approximator of the embeddings structure that  simply only depends on the sparsity pattern $\smat$ of the next-tokens' conditional probability matrix. 

A second example supporting proxy \ref{P proxy}
is shown in Fig.~\ref{fig:intro_TinyStories}: Here, we train a larger TF on a subset of the \TS, where computing $\Lbmm$ and its SVD is expensive. Instead, we observe that the structure of the support set closely captures the structure of the learned embeddings: Along the diagonal blocks, where contexts with the exact same support sets are situated, the context embeddings align closely (subspace collapse in Claim \ref{P  collapse}). Conversely, when support sets  have zero intersection (dark entries in Panel (b)), context embeddings exhibit low correlation. This finding is consistent with classical intuitions of the ``distributional assumption'' of words/contexts and is formalized by our framework via the implicit bias of \NTP to promote low-rank logits subject to margin conditions for in/off-support tokens.  \looseness=-1

\section{Related work}
We pinpoint three main related research areas;  see App.~\ref{app:related} for more in-depth discussion.

First, our research conceptually mirrors the seminal work by \cite{levy} who framed the Skip-Gram with Negative Sampling (SGNS) training objective of Word2Vec  \citep{word2vec_1,word2vec_2} as weighted matrix factorization. Specifically for large $d$, they demonstrated that SGNS implicitly factorizes the pointwise mutual information (PMI) matrix. Their analysis relies on the fact that Word2Vec architecture already is a log-biliniear model, while the log-bilinear model in \eqref{eq:NTP intro} is only used by us as an analytical proxy for more complex architectures.
Also, different to them, we focus on \NTP, which employs softmax instead of sigmoids as in SGNS. (However, our analysis applies also to SGNS; see App.~\ref{sec:word2vec}). More importantly, we contribute a fresh perspective on this line of inquiry by: {(i)} confronting the sparsity of probabilistic labels head-on, identifying that it leads to diverging embeddings, a scenario where setting the loss gradient to zero, as in \cite{levy}, is infeasible; {(ii)} examining embeddings through the lens of the regularization path—a surrogate for gradient descent optimization— which unveils that embeddings emerge from the factorization of 
$\Lbin+R\Lbmm$, where 
$\Lbin$ captures frequencies akin to the PMI, while the directional component 
$\Lbmm$, becoming dominant as weights diverge, reflects the explicit sparsity patterns of the context-word co-occurrences. We envision this fresh perspective could similarly  motivate further research leveraging the geometric insights of embeddings to uncover linguistic phenomena such as the linear relationships underlying word analogies \citep{word2vec_2,glove}.

Second, our exploration of NTP's regularization path is inspired by the \emph{implicit bias/regularization} research on the preferred solutions of optimizers like GD in overparameterized systems. \citet{ji2020directional} showed that GD's  trajectory in linear one-hot encoding models aligns with the regularization path, providing a lens for examining GD dynamics.  \cite{ntp} recently extended this analysis to NTP, framing it as sparse soft-label classification. By lifting their assumption of fixed context embeddings, we delve into the more complex non-convex domain, establishing explicit connections between the implicit geometry and linguistic patterns, which we also verify empirically.


Finally, our research intersects with the study of \emph{neural-collapse (NC)}, which delves into the geometry of last-layer features and weights in deep networks trained in the interpolating regime \citep{NC}, utilizing the unconstrained festures model (UFM) for analysis \citep{mixon2020neural,fang2021exploring,zhu2021geometric}. Our work is particularly aligned with \cite{seli}: We also examine the UFM's regularization path but within the \NTP framework, extending their one-hot classification findings, which can be seen as special cases of ours. Additionally, our findings have some parallels with the multilabel NC geometry explored in \citet{multilabel_nc,fisher2024pushing}. However, our 
setting is more general raising stringent assumptions on the label distribution. For detailed comparison, see App.~\ref{app:related}. \looseness=-1
\section{Formulation}\label{sec:setup}
%

\noindent\textbf{Notations.}~
Throughout, lowercase and uppercase bold letters (e.g., $\ab$ and $\A$) represent vectors and matrices, respectively. 
We use $\overline{\ab}$ and $\overline{\A}$ to denote vectors/matrices normalized by their Euclidean norm. 
We denote $\A[i,j]$ the $(i,j)$-th entry of matrix $\A$ and $\ab_j$ its $j$-th column. We let $\Rc(\Ab)$ and $\Nc(\Ab)$ its range-space (aka column space) and null-space, respectively. $\inp{\cdot}{\cdot}$ and $\norm{\cdot}$ denote Euclidean inner product and  norm, respectively. We use $\|\cdot\|_*$ to denote the nuclear-norm (i.e. sum of singular values).
%
$\Id_V$ represents the identity matrix of size $V$ and $\ones_V$, the all ones vector of size $V\times 1$ (subscripts are removed when clear from context). $\Delta^{V-1}$ denotes the $V$-dimensional unit simplex and $\sft{\cdot}:\R^V\rightarrow\Delta^{V-1}$ the softmax map:
$$
\sft{\ab} = [\sfti{1}{\ab},\ldots,\sfti{V}{\ab}]^\top,~~ \text{ with }~~ \sfti{v}{\ab}=\frac{\exp(\eb_v^\top\ab)}{\sum_{v'\in[V]}\exp({\eb_{v'}^\top\ab})}\,,
$$
where $\eb_v$ is the $v$-th standard basis vector in $\R^V$. We also denote $\ebt_{j}$ the $j$-th standard basis vector in $\R^m$.
All logarithms are natural logarithms (base $e$). 

\subsection{NTP objective as sparse soft-label classification}\label{sec:ntp as soft}

We let $\Vc=[V]:=\{1,\ldots,V\}$ represent a finite vocabulary of tokens (we use the terms `word' and `token' interchangeably). We denote by $\z_{1:t}=(z_1,\ldots,z_t)$ a sequence of $t$ tokens $z_t\in\Vc$ and focus, for simplicity, on prediction of the last $T$-th token $z:=z_T$ given context $\x:=\z_{1:T-1}$. For this, we assume 
access to a training set consisting of $n$ sequences $\Tc_n:=\{(\x_i,z_i)\}_{i\in[n]}$, such that  $\x_i\in\Xc:=\Vc^{T-1}$ and $z_i\in\Vc$. This is used to train model $f_{\thetab'}:\Xc\rightarrow\Vc, f_{\thetab'}(\x)=\W \hb_{\thetab}(\x)$ parameterized by $\thetab'=\{\W,\thetab\}$, where  $\W\in\R^{V\times d}$ is a decoding matrix and  $\thetab$ parameterizes a map $\hb_{\thetab}:\Xc\rightarrow\R^d$ from contexts  to $d$-dimensional embeddings. We impose no restrictions on the specific form of the embedding map, which may, for example, be produced by an MLP, an LSTM, or a TF.
We refer to row $\w_v,v\in\Vc$ of $\W$ as \textbf{word embedding}  of token $v$ and $\hb_{\thetab}(\x)$ as \textbf{context embedding} of context $\x.$ The model is found by minimizing the empirical CE loss
$
\CE({\thetab'})
= \frac{1}{n}\sum_{i\in[n]}-\log\left(\sfti{z_i}{f_{\thetab'}(\x_i)}\right)
$.

Following \cite{ntp}, we reframe the NTP training objective as classification over $m\leq n$ \emph{distinct} contexts, each associated with a \emph{sparse} probabilistic label vector $\pbh_j\in\Delta^{V-1}$. Concretely, 
 we denote $\xd_1,\ldots,\xd_m$ the $m\leq n$ \emph{distinct} contexts among the (large number of) total $n$ contexts $\Tc_n$. Also, we let $\pih_j=\frac{1}{n}\sum_{i\in[n]}\ind{\x_i=\xd_j}$ denote the empirical probability of distinct context $\xd_j$. Accordingly,   let $\pbh_j\in\Delta^{V-1}$ denote the probability vector of conditional next-token distribution, i.e., for all $z\in\Vc$: 
$
\ph_{j,z} := \frac{\sum_{i\in[n]: \x_i=\xd_j}\ind{z_i=z}}{\sum_{i\in[n]} \ind{x_i=\xd_j}}, \,\, j\in[m].
$
In words, $n\cdot\pih_j\cdot\ph_{j,z}$ is the number of occurrences of token $z$ as a follow-up to context $\xd_j$. 
Define  the support sets of these probability vectors as
$
\Sc_{j}:=\{ z\in\Vc \,|\, \ph_{j,z}>0\}$ and let $ S_j:=|\Sc_{j}|.
$

It is also convenient to define the \textbf{conditional probability matrix} $$\Pbb=[\pbh_1,\ldots,\pbh_m]\in\R^{V\times m}\,,$$ and its corresponding \textbf{support matrix} $\smat\in\{0,1\}^{V\times m}$, such that $\smat[z,j] =1$, \emph{iff} $z\in\Sc_j$.
In typical scenarios, $S_j<V$; thus, $\Pbb$ and $\smat$ are \emph{sparse} matrices. For given context index $j\in[m]$, we say word $z$ is \textbf{in-support token} if $\smat[z,j]=1 \text{ (eqv. } z\in\Sc_j)$; otherwise, we say $z$ is \textbf{off-support token}.
%

 With the above notation, the training loss becomes \citep{ntp},
\begin{align}     \label{eq:CE2}
    \CE({\thetab'})=-\sum\nolimits_{j\in[m]}\pih_j\sum\nolimits_{z\in\Vc}\ph_{j,z}\log\left(\sfti{z}{\W \hb_{\thetab}(\xd_j)}\right)\,.
\end{align}
This is lower bounded by the  empirical $T$-gram entropy (referred to hereafter as entropy) of the data \citep{shannon1948mathematical}, i.e., for all $\thetab'$:
$$\CE(\thetab')\geq \ent:= -\sum\nolimits_{j\in[m]}\pih_j\sum\nolimits_{z\in\Vc}\ph_{j,z}\log\left(\ph_{j,z}\right).$$

\subsection{Unconstrained features model for NTP training}\label{sec:ce_h}
To gain insights into the geometry of solutions to CE minimization in \Eqref{eq:CE2}, we assume sufficient model expressivity, allowing us to optimize embeddings freely, instead of abiding by their architecture-specific parameterization. {We formalize this concept below.}
\begin{definition}[NTP-UFM]
    The unconstrained features model (UFM) for NTP training over training set $\Tc_m = \{\pih_j,\pbh_{j}\}_{j\in[m]}$ refers to the following log-bilinear optimization problem:
    \begin{align}\label{eq:ufm}
        \min\nolimits_{\W,\Hb} ~~\CE(\W\Hb)+\frac{\la}{2}\|\W\|^2 + \frac{\la}{2}\|\Hb\|^2,\tag{NTP-UFM}
    \end{align}
    where minimization is over word and context embedding matrices $\W\in\R^{V\times d}$ and $\Hb:=[\hb_1,\ldots,\hb_m]\in\R^{d\times m}$,  and 
    $  \CE(\W\Hb):= -\sum_{j\in[m]}\pih_j\sum_{z\in\Sc_j}\ph_{j,z}\log\left(\sfti{z}{\W \hb_j}\right)\,.
    $
\end{definition}
The key difference of \ref{eq:ufm} compared to Eq.~\eqref{eq:CE2} is that embeddings $\hb_{\thetab}(\xd_j)$ are now optimized unconstrainedly through free variables $\hb_j$.  Furthermore, we have introduced ridge regularization.\footnote{For simplicity, we apply equal regularization to $\W$ and $\Hb$. While our results can easily extend to the general scenario, such extensions do not yield additional insights.}
    Although our focus is on understanding the geometry arising from minimizing the unregularized CE objective in Eq.~\eqref{eq:CE2}, we utilize ridge regularization as a proxy to examine the behavior of gradient descent.\footnote{The equivalence between GD and regularization paths has been rigorously established in certain convex settings \citep{ji2020gradient}, but it may not hold generally. Our setting is non-convex, so we use this equivalence as a heuristic proxy; see future work discussion in Sec.~\ref{sec:conclusion}.}
%
Specifically, we use the vanishing-regularization solution of \ref{eq:ufm} as a proxy for the parameters $(\W_k,\Hb_k)$ learned in large iteration $k\rightarrow\infty$ through GD  on the unregularized objective \eqref{eq:CE2}.
\begin{definition}[Regularization path] The regularization path of the \ref{eq:ufm} minimization corresponds to the solution $(\W_\la,\Hb_\la)$ of the respective log-bilinear program in the limit of vanishing regularization $\la\rightarrow0$.
\end{definition}

\subsection{Reformulation in terms of logits}
We now introduce a reformulation of  \ref{eq:ufm} utilizing the logit matrix $\mathbf{L} = \mathbf{WH}$. This  is grounded on the well-known fact concerning the nuclear norm of a matrix \citep{srebro2004maximum, Fazel}:\looseness=-1
$
\|\Lb\|_* = \min_{\Lb=\W\Hb} \frac{1}{2}\|\W\|^2 + \frac{1}{2}\|\Hb\|^2 \,$.\looseness=-1
\begin{lemma}\label{lem:relax}
Denote logit matrix $\Lb=[\ellbb_1,\ldots,\ellbb_m]\in\R^{V\times m}$ and let $\Lb_\la$ be a minimizer of
\begin{align}\label{eq:ufm relax}
        \min\nolimits_{\,\Lb \,:\, \rank{\Lb}\leq d} ~~ \Big\{-\sum\nolimits_{j\in[m]}\pih_j\sum\nolimits_{z\in\Sc_j}\ph_{j,z}\log\left(\sfti{z}{\ellbb_j}\right)+\la\|\Lb\|_*\Big\},
    \end{align}
    with SVD $\Lb_\lambda=\Ub\Sigmab\Vb^\top$, where $\Ub\in\R^{V\times r}, \Sigmab\in\R^{r\times r}, \Vb\in\R^{m\times r}$ and $r=\rank{\Lb_\lambda}\leq d$. 
    Then:
\begin{enumerate}[noitemsep,leftmargin=*]
    \item [(i)] The optimal cost of \ref{eq:ufm}   is the same as that of \Eqref{eq:ufm relax}.
    \vspace{3pt}
    \item [(ii)] $(\W_\lambda,\Hb_\lambda)$ minimizes \ref{eq:ufm} if and only if there exists a minimizer $\Lb_\lambda$ of \Eqref{eq:ufm relax} such that: \looseness=-1 
    $$\W_\la\Hb_\la^\top=\Lb_\lambda,\,~~     
    \W_\la\W_\la^\top = \Ub\Sigmab\Ub^\top,\,~~ \text{\emph{and}} \quad \Hb_\la^\top\Hb_\la = \Vb\Sigmab\Vb^\top\,.$$
\end{enumerate}    

\end{lemma}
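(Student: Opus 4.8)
The plan is to establish the two-way correspondence between minimizers of \ref{eq:ufm} and of \Eqref{eq:ufm relax} using the variational characterization of the nuclear norm, $\|\Lb\|_* = \min_{\Lb=\W\Hb} \tfrac12\|\W\|^2 + \tfrac12\|\Hb\|^2$, and then upgrade the equality of optimal costs to the stated Gram-matrix identities by tracking when that variational minimum is attained.

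First I would prove part (i). Fix any feasible pair $(\W,\Hb)$ with $\W\in\R^{V\times d}$, $\Hb\in\R^{d\times m}$, set $\Lb=\W\Hb$, and note $\rank{\Lb}\leq d$, so $\Lb$ is feasible for \Eqref{eq:ufm relax}. Since $\CE(\W\Hb)$ depends on $(\W,\Hb)$ only through $\Lb$, and since $\|\Lb\|_* \leq \tfrac12\|\W\|^2+\tfrac12\|\Hb\|^2$, the objective of \Eqref{eq:ufm relax} at $\Lb$ is at most the objective of \ref{eq:ufm} at $(\W,\Hb)$; taking infima gives $\mathrm{val}(\ref{eq:ufm relax})\leq\mathrm{val}(\ref{eq:ufm})$. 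Conversely, given any feasible $\Lb$ for \Eqref{eq:ufm relax} with $r=\rank{\Lb}\leq d$ and SVD $\Lb=\Ub\Sigmab\Vb^\top$, set $\W=\Ub\Sigmab^{1/2}$ (padded with zero columns to width $d$) and $\Hb=\Sigmab^{1/2}\Vb^\top$ (padded with zero rows to height $d$); then $\W\Hb=\Lb$, and $\tfrac12\|\W\|^2+\tfrac12\|\Hb\|^2 = \tfrac12\tr(\Sigmab)+\tfrac12\tr(\Sigmab)=\|\Lb\|_*$, so the \ref{eq:ufm} objective at $(\W,\Hb)$ equals the \Eqref{eq:ufm relax} objective at $\Lb$, giving the reverse inequality. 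Hence the optimal costs coincide, and moreover the balanced factorization $\W=\Ub\Sigmab^{1/2}R,\ \Hb=R^\top\Sigmab^{1/2}\Vb^\top$ (any orthogonal $R$) is cost-optimal whenever $\Lb$ is.

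For part (ii), the ``if'' direction follows from the construction above: if $\Lb_\la$ minimizes \Eqref{eq:ufm relax} and $(\W_\la,\Hb_\la)$ is any balanced factorization with $\W_\la\Hb_\la=\Lb_\la$, $\W_\la\W_\la^\top=\Ub\Sigmab\Ub^\top$, $\Hb_\la^\top\Hb_\la=\Vb\Sigmab\Vb^\top$, then it achieves \ref{eq:ufm}'s optimal cost by part (i), hence is a minimizer. For ``only if'', suppose $(\W_\la,\Hb_\la)$ minimizes \ref{eq:ufm}; put $\Lb_\la=\W_\la\Hb_\la$, which is feasible for \Eqref{eq:ufm relax} with cost $\CE(\Lb_\la)+\tfrac12\|\W_\la\|^2+\tfrac12\|\Hb_\la\|^2 \geq \CE(\Lb_\la)+\la\|\Lb_\la\|_*\geq\mathrm{val}(\ref{eq:ufm relax})=\mathrm{val}(\ref{eq:ufm})$. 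Equality throughout forces $\Lb_\la$ to minimize \Eqref{eq:ufm relax} and, crucially, forces $\tfrac12\|\W_\la\|^2+\tfrac12\|\Hb_\la\|^2=\|\Lb_\la\|_*$, i.e.\ $(\W_\la,\Hb_\la)$ attains the variational minimum in the nuclear-norm identity. The main obstacle is this last step: I must show that attaining equality in $\|\W\Hb\|_*\leq\tfrac12\|\W\|^2+\tfrac12\|\Hb\|^2$ pins down the Gram matrices to $\W_\la\W_\la^\top=\Ub\Sigmab\Ub^\top$ and $\Hb_\la^\top\Hb_\la=\Vb\Sigmab\Vb^\top$ (not merely the norms). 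This is a standard but non-trivial fact about the equality case: writing $\Lb_\la=\Ub\Sigmab\Vb^\top$, equality in the AM--GM / trace inequalities used to prove the nuclear-norm formula (e.g.\ $\tr(\W^\top\Ub\Sigmab\Vb^\top\Hb^\top)\cdot 2 \le \|\W\|^2+\|\Hb\|^2$ combined with $\|\Lb\|_* = \max_{\|A\|\le 1,\|B\|\le 1}\tr(A^\top \Lb B)$-type bounds) holds iff $\W_\la$ and $\Hb_\la^\top$ share the singular vectors $\Ub,\Vb$ with singular values $\Sigmab^{1/2}$ up to a common orthogonal factor; this yields exactly the claimed Gram identities. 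I would carry this out by taking the SVD of $\W_\la=\Util\Stil\Pb^\top$, expanding $\|\W_\la\|^2+\|\Hb_\la\|^2-2\|\Lb_\la\|_*$ as a sum of squares that vanishes only in the asserted configuration, and reading off the Gram matrices.
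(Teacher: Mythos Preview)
Your proposal is correct and follows essentially the same approach as the paper: both directions of (i) come from the variational nuclear-norm identity, and (ii) reduces to characterizing equality in $\|\W\Hb\|_*\le\tfrac12\|\W\|^2+\tfrac12\|\Hb\|^2$. The only minor difference is in how that equality case is pinned down: the paper writes $\|\Lb\|_*=\tr(\Ub^\top\W\Hb\Vb)$, applies Cauchy--Schwarz to get $\le\tfrac12\|\Ub^\top\W\|^2+\tfrac12\|\Hb\Vb\|^2$, then uses $\|\Ub\|_2,\|\Vb\|_2\le1$ to reach $\tfrac12\|\W\|^2+\tfrac12\|\Hb\|^2$, and reads off the Gram identities directly from the three equality conditions (Cauchy--Schwarz equality gives $\Ub^\top\W=\Vb^\top\Hb^\top$, hence $\Ub^\top\W\W^\top\Ub=\Sigmab$, and the norm equalities kill any $\Ub_\perp$ component). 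This is a bit more direct than your proposed sum-of-squares expansion via the SVD of $\W_\la$, but both routes are standard and lead to the same conclusion.
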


The constraint $\rank{\Lb}\leq d$ in \Eqref{eq:ufm relax} ensures the logit matrix can be factorized as $\W\Hb$ with inner factor dimension being $d$. Thus, in general, \Eqref{eq:ufm relax} is non-convex. \looseness=-1


\section{Analysis of unconstrained features model for NTP training}\label{sec:analysis}
This section analyzes the regularization path of \Eqref{eq:ufm relax} and, subsequently, of \ref{eq:ufm}. Throughout, we assume $d\geq V$,\footnote{This can be slightly improved to $d\geq V-1$: Strong convexity of ridge-regularization and translational invariance of softmax imply any minimizer $\What$ of \ref{eq:ufm} is centered, i.e., $\ones_V^\top\W_\la=0$. This implies $\ones_V^\top\Lb=0$ can be added in \Eqref{eq:ufm relax} without changing Lemma \ref{lem:relax}.}
in which case the constraint $\rank{\Lb}\leq d$ becomes redundant. {All proofs are deferred to the appendix and  numerical evaluations to Sec.~\ref{sec:ufmexp}.}

\vspace{-5pt}
\subsection{NTP-SVM logits}\label{sec:ntp svm}
\vspace{-3pt}
The following nuclear norm-minimization problem plays a key role in our results.\looseness=-1

\begin{definition} Define the \NTP-SVM logit matrix $\Lbmm$ as solution to the following optimization:
    \begin{align}\label{eq:ufm-svm}
            \Lbmm &\in \arg\min\nolimits_{\Lb\in\R^{V\times m}
            }~\|\Lb\|_* \tag{$\text{NTP-SVM}_\star$}
            \\
            &\quad~~~~\text{\emph{subj. to}}~~\Lb[z,j]-\Lb[z',j]=0,~~\forall j\in[m], z\neq z'\in\Sc_j, \label{eq:svm equalities}
            \\
            &\quad\quad\quad\quad\quad~~~~\Lb[z,j]-\Lb[v,j]\geq 1,~~\forall j\in[m], z\in\Sc_j, v\notin \Sc_j \,.\label{eq:svm inequalities}
        \end{align}    
\end{definition}


The optimization in \ref{eq:ufm-svm} is a semidefinite program (SDP) and always feasible, as shown by the straightforward feasibility of the centered support matrix $\smatbar=(\Id_V-\frac{1}{V}\ones_V\ones_V^\top)\smat$. Its solution $\Lbmm$ depends solely on the support matrix $\smat$ of the data. Interestingly, we prove in Sec.~\ref{sec:ntpsvm_sol_main} that $\smatbar$ solves \ref{eq:ufm-svm} under perfect symmetry assumptions on $\smat$. More generally, our empirical results suggest that $\smatbar$ closely approximates the true solution $\Lbmm$, supporting the use of Proxy \ref{P proxy} as explained in Sec.~\ref{sec:candidate_main}. 

{We remark that while the constraints are defined for all pairs of tokens, for each context $j$, the number of linearly independent constraints imposed by Eqs.~\eqref{eq:svm equalities} and \eqref{eq:svm inequalities} is $V-1$. This reduction is possible because the constraints can be reformulated relative to an anchor $z_j\in\Sc_j$ given the equality constraints \eqref{eq:svm equalities}.}


%
Finally, when $S_j=1, \forall j\in [m]$, then \Eqref{eq:svm equalities} vanishes and \Eqref{eq:svm inequalities} lower bounds the margin between the correct token and the rest. This constraint appears in the classical hard-margin SVM, which explains our naming. Instead, the NTP setting results in additional equality constraints and minimizes nuclear-norm rather than Euclidean norm.

\looseness=-1

\subsection{Regularization-path analysis of logits}

We now characterize the solutions of \eqref{eq:ufm relax} when regularization vanishes and $d\geq V$.\looseness=-1

\begin{theorem}\label{thm:reg path}
    Suppose $d\geq V$. In the limit of vanishing regularization, the solution $\Lb_\la$ of optimization \eqref{eq:ufm relax} diverges in norm and converges in direction to $\Lbmm$. Formally, $\lim_{\la\rightarrow0} \|\Lb_\la\| = +\infty$, and,  there exists minimizer $\Lbmm$ of \ref{eq:ufm-svm} such that 
    $$
        \lim_{\laz}\,\,\big\|{\frac{\Lb_\la}{\|\Lb_\la\|_*} - \frac{\Lbmm}{\norm{\Lbmm}_*}}\big\|=0\,.
    $$
\end{theorem}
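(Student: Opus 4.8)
The plan is to analyze the regularization path of \eqref{eq:ufm relax} by exploiting the scale separation between the cross-entropy term and the nuclear-norm penalty, following the strategy of regularization-path / implicit-bias analyses (e.g.\ \citet{ji2020directional,rosset2003margin,ntp}). The key structural fact is that the equality constraints \eqref{eq:svm equalities} carve out a subspace $\Ts$ (spanned, per context $j$, by directions supported on $\Sc_j$), and the feasible direction set for \ref{eq:ufm-svm} lives in the orthogonal complement $\Ts^\perp$ modulo the equality-constraint subspace. On $\Ts$, the cross-entropy can be driven to its per-context entropy lower bound at \emph{bounded} logit magnitude (any logits matching the log-ratios of \Eqref{eq:interpolation} on each support set suffice), whereas separating in-support from off-support tokens — which is what makes $\sfti{z}{\ellbb_j}\to$ the correct value and kills the off-support mass — requires the logits to \emph{diverge} along $\Ts^\perp$. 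So the optimal $\Lb_\la$ should split, up to lower-order terms, as a bounded ``interpolating'' piece plus a diverging multiple $R(\la)$ of a direction solving the norm-minimal separation problem, which is exactly \ref{eq:ufm-svm}.

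Concretely, the steps I would carry out are: \textbf{(1) Divergence of the norm.} Show $\|\Lb_\la\|\to\infty$ as $\la\to0$. This is a lower-bound argument: if $\|\Lb_\la\|$ stayed bounded along a subsequence, the CE term would be bounded strictly above the entropy $\ent$ (since with bounded logits the softmax cannot concentrate and off-support tokens retain $\Omega(1)$ mass for at least one $j$), so the objective value would be $\ent+\Omega(1)+O(\la)$; but plugging in a feasible test matrix $\Lbin + t\cdot\smatbar$ with $t=t(\la)\to\infty$ slowly (say $t=\log(1/\la)$) gives objective $\ent + o(1) + \la t\|\smatbar\|_* \to \ent$, a contradiction. \textbf{(2) Upper bound on the nuclear norm growth rate.} Using the same test matrix and optimality of $\Lb_\la$, deduce $\la\|\Lb_\la\|_* \le o(1)$, and more precisely that $\la\|\Lb_\la\|_* \to 0$ while $\|\Lb_\la\|_*\to\infty$; also that the CE part of the objective tends to $\ent$. \textbf{(3) Direction extraction.} Let $\Lbhat_\la := \Lb_\la/\|\Lb_\la\|_*$; by compactness of the nuclear-norm unit ball, pass to a subsequential limit $\Lb_\infty$. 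Show $\Lb_\infty$ is feasible for \ref{eq:ufm-svm}: the inequality constraints \eqref{eq:svm inequalities} follow because finite gaps $\Lb_\la[z,j]-\Lb_\la[v,j]$ would let off-support mass survive and keep CE above $\ent$ — after normalizing by $\|\Lb_\la\|_*\to\infty$ this forces $\Lb_\infty[z,j]-\Lb_\infty[v,j]\ge 0$, and a sharper accounting (comparing the CE decay rate $\exp(-c\|\Lb_\la\|_*)$ against the penalty) upgrades $\ge0$ to the normalized version of $\ge 1$; the equalities \eqref{eq:svm equalities} follow because any persistent imbalance $\Lb_\la[z,j]-\Lb_\la[z',j]$ on the support would, via \Eqref{eq:interpolation}-type reasoning, either violate the entropy lower bound or be suboptimal for the nuclear norm (one can strictly decrease $\|\Lb\|_*$ by averaging over the support without hurting CE). \textbf{(4) Optimality of the limiting direction.} Show $\Lb_\infty$ minimizes $\|\cdot\|_*$ over the feasible set: if some feasible $\Lbmm$ had strictly smaller nuclear norm, then for large $R$ the matrix $\Lbin + R\,\Lbmm$ achieves CE within $e^{-\Omega(R)}$ of $\ent$ at nuclear-norm cost $\approx R\|\Lbmm\|_* + O(1)$, which beats $\Lb_\la$'s cost $\text{CE}(\Lb_\la)+\la\|\Lb_\la\|_*$ for a suitable coupling $R=R(\la)$ — contradiction. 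Finally, since the whole sequence has all subsequential limits equal to an optimal feasible direction, and Lemma~\ref{lem:relax} plus strong convexity of the regularizer pin down the relevant SVD factors, conclude $\|\Lbhat_\la - \Lbmm/\|\Lbmm\|_*\|\to 0$.

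The main obstacle I anticipate is \textbf{Step (3)/(4): showing the limiting direction is not merely feasible with $\ge 0$ separation but actually the \emph{margin-one, nuclear-norm-minimal} solution.} The subtlety is quantitative: the cross-entropy penalizes small logit gaps only logarithmically-to-exponentially, so a crude normalization argument only yields the homogeneous relaxation (margin $\ge 0$), which is degenerate (the zero matrix is feasible). One must show the optimal $\Lb_\la$ cannot ``waste'' nuclear norm: the correct balance is that $\text{CE}(\Lb_\la) - \ent \asymp \sum_j \exp(-\gamma_j)$ where $\gamma_j$ is the realized off-support margin for context $j$, and trading a marginal unit of nuclear norm for margin must be stationary at the optimum — this is where the precise rate $R(\la) \sim \log(1/\la)$ and the reduction to \ref{eq:ufm-svm} with margin exactly $1$ come from (the constant $1$ is a normalization; any positive constant rescales $R(\la)$). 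I would handle this by a two-sided sandwich: a test-matrix upper bound on the optimal cost as in Step (2), and a matching lower bound via a Lagrangian/KKT or Fenchel-duality argument for \eqref{eq:ufm relax} that identifies the dual variables on the support (reproducing \Eqref{eq:interpolation}) and off the support (reproducing the KKT conditions of \ref{eq:ufm-svm}), letting $\la\to0$. This is essentially a nuclear-norm, soft-label generalization of the exponential-tail implicit-bias arguments, and keeping the matrix (rather than vector) geometry — in particular that the minimizing direction is the same across all subsequences so the limit is genuine and not just subsequential — is the delicate part.
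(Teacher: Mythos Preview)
Your proposal is essentially correct and follows the same high-level strategy as the paper: a test-matrix upper bound using $\Lbin + R\cdot(\text{feasible direction})$, combined with a contradiction showing that any solution not aligned with an \ref{eq:ufm-svm} minimizer incurs CE loss exceeding that upper bound. Your identification of the main obstacle (margin $\ge 0$ versus margin $\ge 1$ after normalization) is exactly right, and your proposed fix via exponential-rate matching is precisely what the paper does.

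The execution differs in two ways worth noting. First, the paper passes to the equivalent \emph{ball-constrained} problem $\min_{\|\Lb\|_*\le B}\CE(\Lb)$ and sends $B\to\infty$, rather than working with $\la\to 0$ directly; this is cosmetic but simplifies bookkeeping. Second, and more substantively, the paper avoids your compactness/subsequential-limit route entirely. Instead of extracting a limit $\Lb_\infty$ and then upgrading its feasibility, it argues directly: rescale $\Lbhat_B$ by a factor $R'$ chosen so that $\|\Lbhat_B/R'\|_*$ is just \emph{below} $\|\Lbmm\|_*$; then $\Lbhat_B/R'$ cannot be \ref{eq:ufm-svm}-feasible, so some constraint is violated by a fixed $\delta>0$, and a case split (equality-constraint violation vs.\ inequality-constraint violation) yields lower bounds on $\CE(\Lbhat_B)-\ent$ of order $R'\delta$ or $e^{-R'(1-\delta)}$ respectively, both of which beat the $e^{-R}$ upper bound since $R'$ and $R$ grow at the same rate. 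This cleanly sidesteps the ``degenerate margin-$0$ relaxation'' issue you worried about, because the scaling is tied to $\|\Lbmm\|_*$ from the start rather than to $\|\Lbhat_B\|_*$. Your route would also work, but the paper's trick of scaling to force a constraint violation is a bit more direct.
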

Thus, with vanishing regularization, \ref{eq:ufm-svm} captures the structure of the logits. By  Lemma \ref{lem:relax}, it also captures the structure of $\W$ and $\Hb$. We explore this further in Sec.~\ref{sec:embeddings geo from logits}. 

Now, recall that \ref{eq:ufm-svm} only depends on the support matrix of the training data, raising the questions: How do the probabilities $\ph_{j,z}$ influence the learned logit matrix and what implications this has for the loss? The theorem below complements Thm.~\ref{thm:reg path} and addresses these questions. To state the result, define the following matrix subspace, as the span of rank-one matrices each being zero except entries $(z,j)$ and $(z',j)$, which equal $1$ and $-1$ respectively for all context indices $j\in[m]$ and in-support word indices $z,z'\in\Sc_j$:
\begin{align}
    \Ts&=\Ts(\smat)
    =\operatorname{span}\big(\big\{\, (\eb_z-\eb_{z'})\,\ebt_{j}^\top
    \,:\,z\neq z'\in\Sc_j, j\in[m]\,
\big\}\big)\nn\subset\R^{V\times m}
    \,.
\end{align}
Note for  any $\Lb$,  the projection $\Qcs(\Lb)$ onto $\Ts$ is sparse  with  support matching $\smat$.

\begin{theorem}\label{thm:reg path finite}
    Under the setting of Theorem \ref{thm:reg path}, the loss approaches its lower bound $\lim_{\laz}\CE(\Lb_\la)=\ent$. Additionally, $\lim_{\laz}\Qcs(\Lb_\la)=\Lbin$, where $\Lbin\in\Ts$ is the unique $\Lb\in\Ts$ that guarantees differences of logits equal log-odds, i.e., 
    $$\Lb[z,j]-\Lb[z',j]=\log\left(\frac{\ph_{j,z}}{\ph_{j,z'}}\right),\text{ } \forall z\neq z'\in\Sc_j, j\in[m]\,.
    $$
\end{theorem}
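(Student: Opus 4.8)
\textbf{Proof proposal for Theorem~\ref{thm:reg path finite}.}

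The plan is to decompose any candidate logit matrix along $\Ts$ and its orthogonal complement $\Ts^\perp$, and to argue that the regularization path must, in the limit, (i) drive the loss to the entropy floor $\ent$, which forces the $\Ts$-component to be exactly $\Lbin$, and (ii) send the $\Ts^\perp$-component to infinity in the direction of $\Lbmm$ (this last part is Theorem~\ref{thm:reg path}, which I may assume). I would organize the argument around the fact that $\CE$ is a sum over contexts $j$ and that, for fixed $j$, the softmax probabilities $\sfti{z}{\ellbb_j}$ for $z \in \Sc_j$ depend on $\ellbb_j$ only through the pairwise differences $\ellbb_j[z]-\ellbb_j[z']$ (softmax is translation-invariant) and through how much mass leaks to off-support coordinates. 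Writing $\ellbb_j = \Qcs(\Lb)_j + \Qcsperp(\Lb)_j$, the $\Ts$-part controls the in-support differences, and as the $\Ts^\perp$-part grows in the $\Lbmm$ direction (whose defining constraints~\eqref{eq:svm inequalities} make in-support logits exceed off-support ones by a growing margin) the off-support softmax mass for each in-support $z$ vanishes.

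The key steps, in order: First, establish the lower bound decomposition: for each $j$, $-\sum_{z\in\Sc_j}\ph_{j,z}\log \sfti{z}{\ellbb_j} = -\sum_{z\in\Sc_j}\ph_{j,z}\log \ph_{j,z} + \KL(\pbh_j \,\|\, \sft{\ellbb_j}) + (\text{correction from off-support leakage})$; more carefully, the per-context loss minus the per-context entropy is a nonnegative quantity that is zero if and only if $\sfti{z}{\ellbb_j} = \ph_{j,z}$ for all $z \in \Sc_j$, which in particular requires zero off-support mass and exact log-odds matching on-support. Second, exhibit a feasible sequence achieving $\ent$ in the limit: take $\Lb = \Lbin + R\,\Lbmm$ and let $R\to\infty$; the equality constraints~\eqref{eq:svm equalities} on $\Lbmm$ guarantee $R\Lbmm$ contributes nothing to in-support differences, so those are governed entirely by $\Lbin$ (which by construction gives the right log-odds), while the inequality constraints~\eqref{eq:svm inequalities} force the off-support mass to decay like $e^{-R}$; hence $\CE(\Lbin + R\Lbmm) \to \ent$ and, by optimality of $\Lb_\la$ together with $\la\|\Lbin+R\Lbmm\|_* \to 0$ as $\la\to 0$ with $R$ chosen growing slowly, $\CE(\Lb_\la) \to \ent$. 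Third, use $\CE(\Lb_\la)\to\ent$ to pin down $\Qcs(\Lb_\la)$: since the per-context excess loss $\to 0$, continuity/strict convexity arguments give $\sft{\ellbb_{\la,j}} \to \pbh_j$ restricted to $\Sc_j$, hence $\ellbb_{\la,j}[z] - \ellbb_{\la,j}[z'] \to \log(\ph_{j,z}/\ph_{j,z'})$ for $z,z'\in\Sc_j$; since $\Qcsperp(\Lb_\la)$ lies in $\Ts^\perp$ and contributes zero to these in-support differences (the rank-one generators of $\Ts$ are $(\eb_z - \eb_{z'})\ebt_j^\top$, so $\langle M, (\eb_z-\eb_{z'})\ebt_j^\top\rangle = M[z,j]-M[z',j] = 0$ for $M\in\Ts^\perp$), the differences are entirely carried by $\Qcs(\Lb_\la)$, and since $\Lbin$ is the unique element of $\Ts$ with those prescribed differences (uniqueness because the difference map is injective on $\Ts$ once we quotient appropriately — $\Ts$ is spanned precisely by the differences, so the linear map $\Lb \mapsto (\Lb[z,j]-\Lb[z',j])_{z,z'\in\Sc_j,j}$ is injective on $\Ts$), we conclude $\Qcs(\Lb_\la)\to\Lbin$.

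I expect the main obstacle to be the third step: carefully controlling $\Qcs(\Lb_\la)$ while $\Qcsperp(\Lb_\la)$ simultaneously diverges. One must ensure that the divergence of the norm (Theorem~\ref{thm:reg path}) does not destabilize the finite, convergent $\Ts$-component — in particular that the vanishing off-support leakage does not force compensating distortions in the on-support differences. The clean way to handle this is to note that the KL/excess-loss term for context $j$ is a function that separately penalizes (a) mismatch of in-support log-odds and (b) nonzero off-support mass, and that both must vanish individually as the total excess vanishes; then the in-support log-odds of $\Lb_\la$ converge to the target values regardless of what $\Qcsperp$ is doing, because those log-odds are exactly the inner products of $\Lb_\la$ with the generators of $\Ts$, i.e., they equal the corresponding inner products of $\Qcs(\Lb_\la)$ alone. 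A secondary technical point is verifying that $\Lbin$ is well-defined and unique as claimed, which follows from the observation that specifying all consecutive differences $\Lb[z,j]-\Lb[z',j]$ within each support set, together with membership in $\Ts$ (which already forces off-support entries to be zero and forces the within-$\Sc_j$ column to be determined up to an additive constant that is itself fixed by $\Ts$ being the span of zero-sum difference patterns), uniquely determines $\Lb$.
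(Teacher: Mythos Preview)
Your proposal is correct and follows essentially the same approach as the paper: the construction $\Lbin + R\,\Lbmm$ to reach $\ent$, the $\Ts/\Ts^\perp$ decomposition, and the observation that in-support log-odds depend only on $\Qcs(\Lb)$ are precisely the paper's key steps. The paper streamlines your third step by introducing an auxiliary ``in-support loss'' $\CEin(\Lb)$ (softmax normalized over $\Sc_j$ only), which satisfies $\CE(\Lb)\geq\CEin(\Lb)=\CEin(\Qcs(\Lb))\geq\ent$ with unique minimizer $\Lbin$ on $\Ts$, so that $\CE(\Lb_\la)\to\ent$ directly yields $\Qcs(\Lb_\la)\to\Lbin$ without a separate continuity/strict-convexity argument.
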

The loss reaches the lower-bound $\ent$ because on the subspace $\Ts$, $\Lb_\la$ satisfies the log-odds linear equations of the theorem. This gives rise to Claim \ref{P soft-labels}: the model outputs the correct probabilities for in-support tokens. 
There is no contradiction of this claim to Thm.~\ref{thm:reg path} because $\Lbmm$ belongs to the orthogonal complement $\Fc_\perp$ (which follows from \Eqref{eq:svm equalities}; see App. \ref{app:notations recap}). 

Combining the two theorems, $\Lb_\la$ converges to $\Lbin$ in $\Ts$ and diverges in $\Ts_\perp$, where it converges directionally to $\Lbmm$. This justifies Claim \ref{P logits}. 
\subsection{Geometry of embeddings}\label{sec:embeddings geo from logits}
In view of Lemma \ref{lem:relax} we can obtain  the embeddings by factorizing the logit matrix. Since logits diverge, the same is true for the embeddings, which also converge directionally. 
\begin{corollary}\label{cor:WH}
    Denote $(\W_\la,\Hb_\la)$ any minimizer of \ref{eq:ufm}.  Consider the SVD $\Lbmm=\Ub\Sigmab\Vb^\top$. Then, {using $\overline{\,\cdot\,}$ notation to denote normalized quantities}, in the limit of vanishing regularization
    $$\overline{\W_\la\W_\la^\top}\rightarrow \Ub\overline{\Sigmab}\Ub^\top=:\GWmmo,\,~~\overline{\Hb_\la^\top\Hb_\la}\rightarrow \Vb\overline{\Sigmab}\Vb^\top=:\GHmmo,\,~~ \text{\emph{and}} \quad  \overline{\W_\la\Hb_\la}\rightarrow\Ub\overline{\Sigmab}\Vb^\top\,.$$
\end{corollary}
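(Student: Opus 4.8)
The plan is to derive Corollary~\ref{cor:WH} from Theorem~\ref{thm:reg path} (directional convergence of the logits) together with the exact correspondence of Lemma~\ref{lem:relax}; the only genuine subtlety is that the SVD factors of $\Lb_\la$ need not be unique, which is circumvented by working with the SVD-independent objects $\W_\la\W_\la^\top=(\Lb_\la\Lb_\la^\top)^{1/2}$ and $\Hb_\la^\top\Hb_\la=(\Lb_\la^\top\Lb_\la)^{1/2}$, where $(\cdot)^{1/2}$ denotes the unique PSD square root.

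First I would upgrade the convergence in Theorem~\ref{thm:reg path} — stated with nuclear-norm normalization — to the $\|\cdot\|$-normalization used in the corollary. Since $\Lbmm\neq0$ (e.g.\ by the strict margin constraint~\eqref{eq:svm inequalities}, as soon as some $\Sc_j\subsetneq\Vc$), setting $\A_\la:=\Lb_\la/\|\Lb_\la\|_*\to\A:=\Lbmm/\|\Lbmm\|_*\neq0$ and noting $\overline{\Lb_\la}=\A_\la/\|\A_\la\|$, $\overline{\Lbmm}=\A/\|\A\|$, continuity of $\|\cdot\|$ and of division by a nonzero scalar gives $\overline{\Lb_\la}\to\overline{\Lbmm}$. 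This already yields the logit statement: by the logit identity of Lemma~\ref{lem:relax}, $\W_\la\Hb_\la=\Lb_\la$, so $\overline{\W_\la\Hb_\la}=\overline{\Lb_\la}\to\overline{\Lbmm}=\Ub\overline{\Sigmab}\Vb^\top$ after substituting the SVD $\Lbmm=\Ub\Sigmab\Vb^\top$.

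Next I would treat the Gram matrices. By Lemma~\ref{lem:relax}(ii), a minimizer $(\W_\la,\Hb_\la)$ of \ref{eq:ufm} is matched to a minimizer $\Lb_\la=\Ub_\la\Sigmab_\la\Vb_\la^\top$ of \eqref{eq:ufm relax} with $\W_\la\W_\la^\top=\Ub_\la\Sigmab_\la\Ub_\la^\top$; since $\Lb_\la\Lb_\la^\top=\Ub_\la\Sigmab_\la^2\Ub_\la^\top$, the matrix $\W_\la\W_\la^\top$ is precisely $(\Lb_\la\Lb_\la^\top)^{1/2}$, an object depending on $\Lb_\la$ alone. Using positive homogeneity of the PSD square root, $(\Lb_\la\Lb_\la^\top)^{1/2}=\|\Lb_\la\|\,(\overline{\Lb_\la}\,\overline{\Lb_\la}^\top)^{1/2}$, so
\[
\overline{\W_\la\W_\la^\top}=\frac{(\overline{\Lb_\la}\,\overline{\Lb_\la}^\top)^{1/2}}{\big\|(\overline{\Lb_\la}\,\overline{\Lb_\la}^\top)^{1/2}\big\|}\,.
\]
Because $\A\mapsto\A\A^\top$, the PSD square root on the PSD cone, and $\|\cdot\|$ are all continuous, and the limit $\overline{\Lbmm}$ is nonzero (so the denominator stays bounded away from $0$), letting $\laz$ and invoking $\overline{\Lb_\la}\to\overline{\Lbmm}$ gives $\overline{\W_\la\W_\la^\top}\to(\overline{\Lbmm}\,\overline{\Lbmm}^\top)^{1/2}\big/\|(\overline{\Lbmm}\,\overline{\Lbmm}^\top)^{1/2}\|$, which by homogeneity again equals $(\Lbmm\Lbmm^\top)^{1/2}\big/\|(\Lbmm\Lbmm^\top)^{1/2}\|$. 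Substituting the SVD, $(\Lbmm\Lbmm^\top)^{1/2}=\Ub\Sigmab\Ub^\top$ and $\|\Ub\Sigmab\Ub^\top\|=\|\Sigmab\|$ (multiplying by a matrix with orthonormal columns on both sides leaves the Frobenius — or operator — norm unchanged), so the limit is $\Ub\overline{\Sigmab}\Ub^\top=\GWmmo$. The same computation with $\Lb_\la^\top\Lb_\la$ in place of $\Lb_\la\Lb_\la^\top$ gives $\overline{\Hb_\la^\top\Hb_\la}\to\Vb\overline{\Sigmab}\Vb^\top=\GHmmo$.

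The main obstacle is not a calculation but the observation that one cannot pass to the limit in $\Ub_\la\Sigmab_\la\Ub_\la^\top$ directly, since $\Ub_\la$ (an SVD factor of $\Lb_\la$) is ambiguous and need not converge when $\Lbmm$ has repeated singular values; the fix — identifying this product with $(\Lb_\la\Lb_\la^\top)^{1/2}$ and using continuity of the PSD square root (standard, e.g.\ since $t\mapsto\sqrt t$ is a uniform limit of polynomials on compact sets) — is what makes the argument go through. The only other point to state carefully is the harmless change of normalization between Theorem~\ref{thm:reg path} and the corollary, valid precisely because the limiting direction $\overline{\Lbmm}$ is nonzero.
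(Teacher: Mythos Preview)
Your proof is correct and follows the paper's intended approach—deriving the corollary from Theorem~\ref{thm:reg path} combined with Lemma~\ref{lem:relax}. The paper treats this corollary as an immediate consequence and does not spell out a proof; you have been more careful than the paper by (i) explicitly handling the change from nuclear-norm to Frobenius-norm normalization, and (ii) circumventing the potential non-uniqueness of the SVD factors of $\Lb_\la$ via the identification $\W_\la\W_\la^\top=(\Lb_\la\Lb_\la^\top)^{1/2}$ and continuity of the PSD square root, which is exactly the right way to make the argument rigorous.
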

%
This corollary supports Claim \ref{P directional}. Another direct consequence of it is Claim \ref{P  collapse}: embeddings of contexts whose support sets are identical asymptotically collapse to the same embedding as $\laz$. 

\begin{proposition}[Subspace collapse]\label{propo:collapse}
 Assume \ref{eq:ufm-svm} is feasible. There exists minimizer $\Hb=[\hb_1,\ldots,\hb_m]$  of \ref{eq:ufm} such that for all contexts $j,j'\in[m]$ with same support set $\Sc_j=\Sc_{j'}$, their embeddings are same in direction in the limit of vanishing regularization, i.e., $\lim_{\laz}\left\|\overline{\hb_j}-\overline{\hb_{j'}}\right\|=0$. 
\end{proposition}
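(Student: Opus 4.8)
The plan is to derive Proposition \ref{propo:collapse} as a consequence of Corollary \ref{cor:WH}, specifically from the structure $\overline{\Hb_\la^\top\Hb_\la}\rightarrow\Vb\overline{\Sigmab}\Vb^\top$, where $\Vb$ is the right singular factor of $\Lbmm$. First I would observe that a minimizer $\Hb_\la$ of \ref{eq:ufm} can, by Lemma \ref{lem:relax}(ii), be chosen so that $\Hb_\la^\top\Hb_\la$ equals $\Vb_\la\Sigmab_\la\Vb_\la^\top$ for the SVD $\Lb_\la=\Ub_\la\Sigmab_\la\Vb_\la^\top$ of the corresponding logit minimizer; by Theorem \ref{thm:reg path} this converges in direction to $\Vb\overline\Sigmab\Vb^\top$. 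It therefore suffices to work at the level of $\Lbmm$ and show: if $\Sc_j=\Sc_{j'}$ then the $j$-th and $j'$-th columns of $\Lbmm$ are equal, and hence the $j$-th and $j'$-th columns of $\Vb\overline\Sigmab^{1/2}$ (a valid square-root factor of the limiting Gram matrix) coincide, forcing the corresponding normalized embeddings to agree in the limit.

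The key step is the claim that equal support sets imply equal columns of $\Lbmm$. I would argue this from the feasibility set of \ref{eq:ufm-svm} together with its nuclear-norm objective. Suppose $\Sc_j=\Sc_{j'}=:\Sc$. Given any feasible $\Lb$, define $\Lb'$ by replacing both columns $j$ and $j'$ by their average $\tfrac12(\ellbb_j+\ellbb_{j'})$ and leaving all other columns unchanged. Because constraints \eqref{eq:svm equalities} and \eqref{eq:svm inequalities} for column $j$ depend only on $\Sc_j=\Sc$ — and likewise for $j'$ with $\Sc_{j'}=\Sc$ — and the equality constraints are linear while the inequality constraints are affine with the same right-hand side, the averaged column still satisfies all constraints for both $j$ and $j'$; so $\Lb'$ is feasible. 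Moreover $\Lb'=\Lb\Mb$ for a suitable doubly-stochastic-type averaging matrix $\Mb$ acting on columns (identity except on the $\{j,j'\}$ block, which is $\tfrac12\ones\ones^\top$), and since $\|\cdot\|_*$ is a matrix norm, $\|\Lb\Mb\|_*\le\|\Lb\|_*\,\|\Mb\|_{\mathrm{op}}=\|\Lb\|_*$ as $\|\Mb\|_{\mathrm{op}}=1$. Hence the minimum is attained with $\ellbb_j=\ellbb_{j'}$; combined with the fact (used elsewhere in the paper) that the relevant minimizer is selected consistently, every such minimizer $\Lbmm$ under consideration has equal columns on indices with equal supports. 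This argument parallels the column-symmetrization used to establish the support-only dependence of \ref{eq:ufm-svm}.

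From $\ellbb_j^{\rm mm}=\ellbb_{j'}^{\rm mm}$ I would conclude the proposition as follows. Writing $\Lbmm=\Ub\Sigmab\Vb^\top$, equality of columns $j,j'$ means $\Vb\Sigmab\Ub^\top(\eb_z-\text{anything})$... more directly: the $j$-th column of $\Lbmm$ is $\Ub\Sigmab\vb_j$ where $\vb_j$ is the $j$-th row of $\Vb$ (viewed as a vector), so $\Ub\Sigmab(\vb_j-\vb_{j'})=0$; since $\Ub$ has orthonormal columns and $\Sigmab\succ0$ on the rank-$r$ part, this gives $\vb_j=\vb_{j'}$, i.e. rows $j$ and $j'$ of $\Vb$ agree. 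Now take the square-root factor $\Hmm:=\overline\Sigmab^{1/2}\Vb^\top$ realizing $\overline{\Hb_\la^\top\Hb_\la}\rightarrow(\Hmm)^\top\Hmm$: its columns $\hb_j^{\rm mm}=\overline\Sigmab^{1/2}\vb_j$ and $\hb_{j'}^{\rm mm}=\overline\Sigmab^{1/2}\vb_{j'}$ are equal, in particular equal in direction. Finally, since $\overline{\Hb_\la^\top\Hb_\la}\to(\Hmm)^\top\Hmm$ entrywise, the normalized inner product $\cos(\hb_{\la,j},\hb_{\la,j'})=\frac{(\Hb_\la^\top\Hb_\la)[j,j']}{\sqrt{(\Hb_\la^\top\Hb_\la)[j,j']\cdot\text{diag terms}}}$ converges to the corresponding ratio for $(\Hmm)^\top\Hmm$, which equals $1$ because the $j$ and $j'$ columns of $\Hmm$ coincide (assuming $\hb_j^{\rm mm}\ne 0$, which holds whenever $\Sc_j\ne\Vc$ so that the margin constraints are active); equivalently $\|\overline{\hb_{\la,j}}-\overline{\hb_{\la,j'}}\|\to 0$.

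The main obstacle I anticipate is the interchange of limits and selection of minimizers: Theorem \ref{thm:reg path} only asserts convergence of $\Lb_\la/\|\Lb_\la\|_*$ to \emph{some} minimizer $\Lbmm$ of \ref{eq:ufm-svm}, and similarly Corollary \ref{cor:WH} fixes a particular SVD-based factorization; one must ensure the minimizer that the path actually selects is one whose columns respect the support-equality (handled by the averaging argument above, since it shows \emph{every} optimal-cost point can be replaced by one with this property, and the limit of the path inherits it by closedness of the set of such matrices), and that the chosen square-root factor $\Hmm$ is compatible with the normalization in Corollary \ref{cor:WH}. A secondary technical point is verifying the non-degeneracy $\hb_j^{\rm mm}\ne 0$ so that cosine similarity is well-defined in the limit; this follows from the inequality constraints \eqref{eq:svm inequalities} forcing $\ellbb_j^{\rm mm}\ne 0$ whenever $\Sc_j\subsetneq\Vc$, together with $\Hmm$ being a genuine factor of the non-zero matrix $\Lbmm$.
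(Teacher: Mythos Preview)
Your proposal is correct and follows essentially the same route as the paper: reduce to showing that \ref{eq:ufm-svm} admits a minimizer with equal columns at indices $j,j'$ whenever $\Sc_j=\Sc_{j'}$, then read off the collapse from the SVD factorization of Corollary~\ref{cor:WH}/Lemma~\ref{lem:relax}. The only substantive difference is how you bound the nuclear norm of the column-averaged matrix: the paper introduces the flipped matrix $\Lbmm_{\rm flip}$ (columns $j,j'$ swapped), observes it is also optimal since permutation preserves $\|\cdot\|_*$, and then uses convexity $\|\Lbmm_{\rm avg}\|_*\le\tfrac12\|\Lbmm\|_*+\tfrac12\|\Lbmm_{\rm flip}\|_*=\|\Lbmm\|_*$; you instead write $\Lb'=\Lb\Mb$ with $\|\Mb\|_{\rm op}=1$ and invoke $\|\Lb\Mb\|_*\le\|\Lb\|_*\|\Mb\|_{\rm op}$. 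Both are valid and equally short.

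One sentence in your write-up overstates the conclusion: ``every such minimizer $\Lbmm$ under consideration has equal columns'' does not follow from the averaging argument (it only shows \emph{some} minimizer has this property), but you correctly identify and resolve this in your obstacle paragraph---the proposition only asserts existence, so exhibiting the averaged minimizer suffices, exactly as the paper does. Your non-degeneracy check ($\ellbb_j^{\rm mm}\ne 0$ when $\Sc_j\subsetneq\Vc$) is also handled correctly.
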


This is consistent with our empirical observations in Figs.~\ref{fig:simplified_intro} and \ref{fig:intro_TinyStories}: As the loss approaches its lower bound, context embeddings with the same support set converge in the same direction and exhibit maximum correlation. This is indicated by the bright diagonal blocks in $\corr{\Hb}$ heatmaps.



It is important to reiterate that Prop.~\ref{propo:collapse} does not prevent the minimizer $\Lb_\la=\W_\la\Hb_\la$ from accurately recovering the distinct soft-label values $\pbh_j,\pbh_{j'}$ and achieving the empirical entropy lower-bound $\Hc$ as $\laz$: The collapse occurs on subspace $\Ts_\perp$, orthogonal to the data subspace $\Ts$ on which the logit matrix satisfies the log-odds constraints indicated in Thm.~\ref{thm:reg path finite}; see Secs.~\ref{sec:ufmexp} and \ref{sec:exp_main} for numerical verifications. \looseness=-1

\subsection{On the solution of \texorpdfstring{\ref{eq:ufm-svm}}{NTP-SVM*}}\label{sec:ntpsvm_sol_main}
%
To better understand how the SVD factorization informs the geometry, we investigate further how the structure of $\Lbmm$ (solution to \ref{eq:ufm-svm}) depends on the support matrix $\Sb$.

{We start in Sec. \ref{sec:special_case} with an idealized, perfectly symmetric setting where each of the 
$V \choose k$ contexts is followed by a unique set of 
$k$ words. While this configuration is admittedly too symmetric to closely mimic natural language, it fulfills two functions. First, it allows for explicit calculation of the implicit geometry—specifically, the angles and norms of embeddings. Second, and more importantly for our analysis, it serves as a basis in Sec. \ref{sec:candidate_main} for approximating $\Lbmm$ in more general scenarios, thereby motivating the heuristic proxy \ref{P proxy} in Sec. \ref{sec:proxy_main}.} 

\subsubsection{Special case: support sets of equal sizes}\label{sec:special_case}

\begin{proposition}\label{prop:all symmetric geometry}
    Fix $k\in[V-1]$ and suppose $\smat$ contains all $m={V\choose k}$ support sets of size $k$.  Then,
     \begin{enumerate}
     [leftmargin=*,noitemsep]
        \item[(i)] The logit matrix takes the form $\Lbmm=(\Id_V-\frac{1}{V}\ones\ones^\top)\Sb \triangleq \smatbar$.
        \item[(ii)] Word embeddings form equiangular tight frame (ETF) being equinorm and maximally separated. 
        \item[(iii)] Context embeddings are equinorm and the embedding $\hb_j$ of context $j$ is co-linear to $\sum_{z\in\Sc_j}\w_z$.
    \end{enumerate}
\end{proposition}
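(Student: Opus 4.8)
\textbf{Proof plan for Proposition~\ref{prop:all symmetric geometry}.}

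The plan is to exploit the full symmetry of the configuration: when $\smat$ contains all ${V \choose k}$ support sets, the constraint set of \ref{eq:ufm-svm} is invariant under the action of the symmetric group $\Sf_V$ permuting the $V$ tokens (with the induced permutation on the $m$ columns). Since the nuclear norm is unitarily invariant and the feasible set is convex, averaging any optimal $\Lb$ over this group orbit yields another optimum that is itself $\Sf_V$-equivariant. First I would verify that $\smatbar = (\Id_V - \tfrac1V\ones\ones^\top)\smat$ is feasible: the equality constraints \eqref{eq:svm equalities} hold because centering subtracts the same column-constant $|\Sc_j|/V = k/V$ from every entry of column $j$, so differences of in-support entries (which are $1-k/V$ each) vanish; the inequality constraints \eqref{eq:svm inequalities} hold because an in-support entry is $1-k/V$ and an off-support entry is $-k/V$, giving a gap of exactly $1$. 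Then I would argue optimality of $\smatbar$: by equivariance it suffices to show no equivariant feasible point has smaller nuclear norm, and I expect $\smatbar$ to have a very structured spectrum (it is, up to scaling, a projection of an incidence-type matrix), so one can compute $\smat\smat^\top$ and $\smatbar\smatbar^\top$ explicitly via combinatorial counting — $\smat\smat^\top[z,z'] = \binom{V-2}{k-2}$ for $z\ne z'$ and $\binom{V-1}{k-1}$ on the diagonal — identifying $\smatbar\smatbar^\top$ as a multiple of $\Id_V - \tfrac1V\ones\ones^\top$. This pins down the singular values of $\smatbar$ (all equal on the $(V-1)$-dimensional complement of $\ones$), and then a duality/KKT argument for the SDP, or an appeal to the symmetrization reduction together with a dual certificate built from the same projection, closes part (i).

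For part (ii), once $\Lbmm = \smatbar$ is established, Corollary~\ref{cor:WH} gives $\W\W^\top \propto \Ub\overline{\Sigmab}\Ub^\top$, the left-singular structure of $\smatbar$. Since $\smatbar\smatbar^\top$ is a multiple of $\Id_V - \tfrac1V\ones\ones^\top$, all nonzero singular values of $\smatbar$ coincide, so $\W\W^\top \propto \Id_V - \tfrac1V\ones\ones^\top$ (after the overall normalization). This is exactly the Gram matrix of a centered simplex ETF: the rows $\w_z$ are equinorm, and $\w_z^\top\w_{z'}$ is the same negative constant for all $z\ne z'$, which is the defining property of an ETF that is "maximally separated" in the sense of minimizing the maximum pairwise coherence. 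I would state this as the content of part (ii), citing the standard characterization of the simplex ETF via its Gram matrix.

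For part (iii), I would use the factorization identity from Lemma~\ref{lem:relax}: at a minimizer, $\W_\la\Hb_\la = \Lb_\la$ and the column spaces align so that $\hb_j$ is the (normalized) preimage of the $j$-th column of $\Lbmm$ under $\W$. Concretely, writing $\Lbmm = \W\Hb$ with $\W\W^\top$ a multiple of $\Id_V - \tfrac1V\ones\ones^\top$ and $\W$ having centered rows, one has $\hb_j = \W^\dagger \smatbar_j$ where $\smatbar_j = \smat_j - \tfrac kV\ones$ and $\smat_j = \sum_{z\in\Sc_j}\e_z$; since $\W^\dagger \propto \W^\top$ on the relevant subspace (because $\W\W^\top$ is a scalar multiple of the projection onto the row space of $\W$), this gives $\hb_j \propto \W^\top\smatbar_j = \W^\top\smat_j = \sum_{z\in\Sc_j}\w_z$ (the centering term drops out as $\W^\top\ones = 0$). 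Equinorm-ness of the $\hb_j$ follows because $\|\hb_j\|^2 \propto \smatbar_j^\top(\W\W^\top)^\dagger\smatbar_j$ depends only on $\|\smatbar_j\|^2 = k(1-k/V)$, which is the same for all $j$.

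\textbf{Main obstacle.} I expect the crux to be part (i) — rigorously certifying that $\smatbar$ is \emph{optimal} (not merely feasible) for the nuclear-norm SDP. Feasibility and the spectral computations are routine combinatorics, and parts (ii)--(iii) are then essentially corollaries of the SVD of $\smatbar$ plugged into Lemma~\ref{lem:relax} and Corollary~\ref{cor:WH}. The optimality argument requires either a clean symmetrization-plus-dual-certificate for the SDP (exhibiting a dual matrix supported appropriately on the off-support entries whose structure is forced by $\Sf_V$-equivariance) or an explicit lower bound on $\|\Lb\|_*$ over the feasible set via the constraint that $\Lb[z,j]-\Lb[v,j]\ge 1$ forces a certain trace/inner-product lower bound against $\smatbar$ itself (using $\|\Lb\|_* \ge \langle \Lb, \Qb\rangle$ for $\Qb$ in the unit spectral-norm ball). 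Getting that dual certificate to satisfy complementary slackness with the guessed primal is where the real work lies.
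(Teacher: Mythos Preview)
Your plan is correct and follows essentially the same route as the paper: compute $\smat\smat^\top$ by combinatorial counting, deduce that $\smatbar\smatbar^\top$ is a scalar multiple of $\Id_V-\tfrac1V\ones\ones^\top$ (so all nonzero singular values of $\smatbar$ coincide), and then use a dual certificate to certify optimality for part~(i); parts~(ii)--(iii) then follow from the SVD exactly as you describe.

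The one place you overestimate the difficulty is the ``main obstacle.'' The dual certificate in Proposition~\ref{propo:svm sufficient condition} asks that $\Ab:=\Ub\Vb^\top$ satisfy $\Ab[v,j]<0$ for all $v\notin\Sc_j$. Because you have already shown that $\Sigmab$ is a multiple of the identity on the $(V-1)$-dimensional subspace, $\Ub\Vb^\top$ is just $\smatbar$ rescaled by that common singular value. Hence $\Ab[v,j]\propto\smatbar[v,j]=-k/V<0$ for every off-support entry, and the certificate is verified in one line --- no symmetrization or separate KKT analysis is needed. The paper proceeds exactly this way, invoking Proposition~\ref{propo:svm sufficient condition} directly; your $\mathfrak{S}_V$-averaging argument would also work but is more machinery than required once the equal-singular-value observation is in hand.
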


In the symmetric setting of the proposition, we can analytically solve \ref{eq:ufm-svm} giving $\Lbmm$ as expressed in the statement (i). In fact, it is possible to obtain an explicit characterization of the SVD of $\Lbmm$, which enables the precise definition of embeddings' geometries in statements (ii) and (iii); see also Prop.~\ref{prop:sym geo app} in the appendix. 
%
%
%
%
%
For $k=1$ the embeddings recover the ETF geometry that has been previously shown for the setting of one-hot classification \citep{NC}. For general $k>1$, word embeddings continue forming an ETF, but the geometry of context embeddings changes although they remain equinorm. Embedding $\hb_j$ forms the same \emph{acute} angle with all in-support word vectors $\w_v, v\in\Sc_j$ and  the same \emph{obtuse} angle with all out-of-support word vectors $\w_v, v\notin\Sc_j$.  Additionally, for $k>1$, norms of word embedddings become larger than norms of context embeddings. See also Fig.~\ref{fig:symmetric_geo} for a visualization of these properties. \looseness=-1

\subsubsection{A simple candidate solution}\label{sec:candidate_main}
 Achieving an analytic expression for $\Lbmm$, as was possible in Prop.~\ref{prop:all symmetric geometry}, might not always be possible. This challenge primarily arises from the combinatorial complexity of the constraints that adhere to the sparsity pattern of $\Sb$. To mitigate this issue and potentially circumvent the need to solve the semidefinite program in \ref{eq:ufm-svm}—which becomes computationally intensive for large $V$ and $m$—we introduce a strategy that allows for a numerical verification of whether $\smatbar$ of Prop. \ref{prop:all symmetric geometry} solves \ref{eq:ufm-svm}.
%
Note that $\Lbtina$ readily satisfies the feasibility conditions for \ref{eq:ufm-svm}, meeting all inequality constraints with equality. \looseness=-1
 Below, we further introduce a dual-certificate condition that, when fulfilled by $\Lbtina$, ensures optimality. \looseness=-1 
\begin{proposition}\label{propo:svm sufficient condition}
    Let $\Lbtina=\Ub\Sigmab\Vb^\top$ denote the SVD of $\Lbtina=(\Id_V-\frac{1}{V}\ones\ones^\top)\smat$. Define, $\Ab:=\Ub\Vb^\top$. If $\forall j \in [m]$ and all $v\not\in\Sc_j$, it holds that  $\Ab[v,j]< 0$, then $\Lbtina$ solves \ref{eq:ufm-svm}.
\end{proposition}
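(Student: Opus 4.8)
The plan is to verify optimality of $\Lbtina$ for the semidefinite program \ref{eq:ufm-svm} by exhibiting a dual certificate, following the standard KKT/convex-duality recipe for nuclear-norm minimization subject to linear equality and inequality constraints. Since \ref{eq:ufm-svm} minimizes $\|\Lb\|_*$ over an affine slice intersected with a polyhedron, and $\Lbtina$ is already feasible (it meets all the equalities \eqref{eq:svm equalities} by the centering $\Id_V-\frac1V\ones\ones^\top$ acting within each column, and all inequalities \eqref{eq:svm inequalities} with equality), it suffices to produce Lagrange multipliers for which the first-order stationarity condition holds. Concretely, I would write the Lagrangian with a multiplier matrix $\M\in\Ts$ for the equality constraints (recall $\Ts$ is exactly the span of the $(\eb_z-\eb_{z'})\ebt_j^\top$ directions, so the equality constraints live in $\Ts$) and nonnegative multipliers $\nu_{z,v,j}\ge 0$ for the inequalities indexed by $j\in[m]$, $z\in\Sc_j$, $v\notin\Sc_j$. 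Stationarity of the Lagrangian at $\Lbtina$ demands that some subgradient of $\|\cdot\|_*$ at $\Lbtina$ equals the combination of constraint gradients; complementary slackness is automatic here since every inequality is active at $\Lbtina$.

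The key step is the choice of the subgradient. Writing the SVD $\Lbtina=\Ub\Sigmab\Vb^\top$, the subdifferential of the nuclear norm is $\partial\|\Lbtina\|_* = \{\Ub\Vb^\top + \Z : \Ub^\top\Z=0,\ \Z\Vb=0,\ \|\Z\|_{\mathrm{op}}\le 1\}$. I would take the candidate subgradient to be exactly $\Ab=\Ub\Vb^\top$ (i.e.\ $\Z=0$) — which is the minimal-norm subgradient — and then show that $\Ab$ can be decomposed as (component in $\Ts$) plus (component built from the inequality gradients $-(\eb_z-\eb_v)\ebt_j^\top$ with nonnegative weights). The inequality gradient for the constraint $\Lb[z,j]-\Lb[v,j]\ge 1$ is $-(\eb_z-\eb_v)\ebt_j^\top$; summing these with weights $\nu_{z,v,j}\ge0$ over $v\notin\Sc_j$ and $z\in\Sc_j$ produces, in column $j$, a vector supported appropriately. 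The hypothesis $\Ab[v,j]<0$ for all $v\notin\Sc_j$ is precisely what guarantees the off-support entries of $\Ab$ can be matched by a nonnegative combination of these gradients (each $-(\eb_z-\eb_v)$ contributes $+1$ at coordinate $v$ and $-1$ at coordinate $z\in\Sc_j$; choosing $z=z_j$ an anchor and $\nu_{z_j,v,j}=-\Ab[v,j]>0$ deposits exactly $\Ab[v,j]$ at coordinate $v$). What remains at the in-support coordinates — namely $\Ab[z,j]$ together with the $-\sum_v\nu_{z_j,v,j}$ spilled onto the anchor — must then be shown to lie in $\Ts$, i.e.\ to be supported on $\Sc_j$ within column $j$; this is where one uses that $\Ab=\Ub\Vb^\top$ inherits the row-space/column-space structure so that its columns, restricted to $\Sc_j$, are unconstrained by $\Ts$ (any vector supported on $\Sc_j$ lies in the column-$j$ slice of $\Ts$, provided $|\Sc_j|\ge 2$; the $S_j=1$ case is degenerate and handled separately). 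One also needs $\ones_V^\top\Ab = 0$ column-wise (equivalently $\ones_V^\top\Lbtina=0$, which holds by construction) so that the total per-column mass is consistent.

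The main obstacle I expect is the bookkeeping that the leftover in-support part of $\Ab$ — after subtracting the nonnegative inequality contributions — genuinely lands in $\Ts$ rather than leaking into $\Ts_\perp$. This requires checking that within each column $j$, the subgradient $\Ab$ has zero component along the $\Ts_\perp$-directions that are supported on $\Sc_j$; equivalently, that $\sum_{z\in\Sc_j}\Ab[z,j]$ accounts exactly for the anchor spillover $\sum_{v\notin\Sc_j}\nu_{z_j,v,j}=-\sum_{v\notin\Sc_j}\Ab[v,j]$. But this is forced by $\ones_V^\top\Ab[:,j]=0$: the in-support entries sum to the negative of the off-support entries, which is exactly the required balance. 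So once the centering identity $\ones_V^\top\Lbtina=0$ is invoked, the certificate closes. I would then conclude by verifying that with $\Z=0$ the operator-norm constraint $\|\Z\|_{\mathrm{op}}\le1$ is trivially satisfied and $\Ub^\top\Z=\Z\Vb=0$ hold, so $\Ab\in\partial\|\Lbtina\|_*$, completing the KKT verification and hence the optimality of $\Lbtina$.
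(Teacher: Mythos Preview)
Your approach is essentially the paper's: both are dual-certificate arguments for the nuclear-norm program, with the paper deriving the explicit dual via an SDP reformulation and then invoking complementary slackness, while you verify the KKT conditions directly at $\Lbtina$ using the subgradient $\Ab=\Ub\Vb^\top\in\partial\|\Lbtina\|_*$. The key ingredients coincide: $\Ab[v,j]<0$ furnishes strictly positive inequality multipliers, and the column-centering $\ones_V^\top\Ab=0$ (which follows since $\ones_V^\top\Lbtina=0$ forces $\ones_V^\top\Ub=0$) closes the in-support balance so the residual lands in $\Ts$.

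One bookkeeping correction: your sign convention is inconsistent. Stationarity reads $0\in\partial\|\Lbtina\|_*+\sum\nu\,\nabla g+\Ts$ with $\nabla g=-(\eb_z-\eb_v)\ebt_j^\top$, so the correct decomposition is $\Ab=\M'+\sum\nu\,(\eb_z-\eb_v)\ebt_j^\top$ with $\M'\in\Ts$, \emph{not} $\Ab=\M'+\sum\nu\,(-(\eb_z-\eb_v))\ebt_j^\top$ as you wrote. With the former, column $j$ at coordinate $v\notin\Sc_j$ gives $-\nu_{z_j,v,j}=\Ab[v,j]$, hence $\nu_{z_j,v,j}=-\Ab[v,j]>0$ as you intended; with the latter you would deposit $-\Ab[v,j]>0$ at $v$, not $\Ab[v,j]$. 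This is purely notational and does not affect the validity of the argument. Also, the $S_j=1$ case is not actually degenerate: the residual at the lone anchor $z_j$ is $\Ab[z_j,j]+\sum_{v\notin\Sc_j}\Ab[v,j]=\ones_V^\top\Ab\ebt_j=0$, so it vanishes automatically and no separate handling is needed.
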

Verifying the certificate's conditions only requires an SVD of $\Lbtina$. In situations where these SVD factors can be analytically determined, the certificate also enables a formal proof of optimality (e.g., Prop.~\ref{prop:all symmetric geometry}).

In more complex settings than that of Prop.~\ref{prop:all symmetric geometry}, whether $\Lbtina$ satisfies the dual certificate in Prop.~\ref{propo:svm sufficient condition} or not, and thus its optimality, depends on the sparsity pattern of $\Sb$. App.~\ref{sec:candidate_app} compares $\Lbtina$ to the optimal solution on a few small-scale examples. Although not necessarily optimal for all configurations of the support set, we conjecture that $\Lbtina$, gives an approximation with almost similar geometric properties of the optimal solution $\Lbmm$. This conjecture motivates a more efficient proxy for predicting the implicit geometry in the next section.
\subsubsection{A proxy for \texorpdfstring{\ref{eq:ufm-svm}}{NTP-SVM*} solutions}\label{sec:proxy_main}
%
{Considering $\Lbtina=(\Id_V-\frac{1}{V}\ones\ones^\top)\Sb$, i.e., the column-wise centered support set, as a close proxy of the optimal solution of \ref{eq:ufm-svm}, we provide a connection between the support sets and embeddings geometry. Suppose $\Lbmm\approx\Lbtina$. Then, by Thm.~\ref{thm:reg path},}
%
\begin{align}\label{eq:proxy1}
     \Wmm{\Wmm}^\top &= (\Lbmm{\Lbmm}^\top)^{\frac{1}{2}} \approx (\smatbar{\smatbar}^\top)^{\frac{1}{2}},\nn\\
    {\Hmm}^\top\Hmm &= ({\Lbmm}^\top\Lbmm)^{\frac{1}{2}} \approx ({\smatbar}^\top\smatbar)^{\frac{1}{2}}.
\end{align}
%
%
Computing the matrix square roots in \Eqref{eq:proxy1} is as expensive as an SVD decomposition which can be prohibitive for large matrices. Instead, we suggest the following more computationally efficient proxies for estimating the directional component of the implicit geometries:\looseness=-1
\begin{align}\label{eq:proxy2}
   \Wmm{\Wmm}^\top&\approx\smatbar\smatbar^\top=(\Id_V-\frac1V\ones_V\ones_V^\top)\smat\smat^\top(\Id_V-\frac1V\ones_V\ones_V^\top),\nn\\
     {\Hmm}^\top\Hmm&\approx\smatbar^\top\smatbar = \smat^\top(\Id_V-\frac1V\ones_V\ones_V^\top)\smat,
\end{align}
%
This leads to Proxy \ref{P proxy} introduced in Sec.~\ref{sec:summary}.

Ignoring for simplicity the projection to the $\Id_V-\frac1V\ones_V\ones_V^\top$ subspace, \Eqref{eq:proxy2} gives a simple explanation for similarity between embeddings: For given context $j\in[m]$, the corresponding embedding has higher correlation with contexts $j'$ whose support set intersection $|\Sc_j\cap \Sc_{j'}|$ is larger. Similarly, word embeddings are closer if the respective words appear together in more support sets: word embedding $\w_z$ is more similar to $\w_{z'}$ when $|\{j\,:\, \w_z, \w_{z'}\in\Sc_j,\,j\in[m]\}|$ is larger. The projection then fixes the bias of the parameters: since ridge-regularization is strongly convex and softmax is invariant to shift, any minimizer of \ref{eq:ufm} satisfies $\ones_V^\top\W=0$ ( see Lemma \ref{lem:W centered}).

\subsection{Numerical simulation: Verifying the theory}

\begin{figure*}[t]
	\vspace{-25pt}
	\centering
	\hspace{-70pt} 
    \begin{subfigure}{0.9\textwidth}
		\centering
		\begin{tikzpicture}[remember picture]
			\node at (-0,-0) 
            {\includegraphics[scale=0.13]{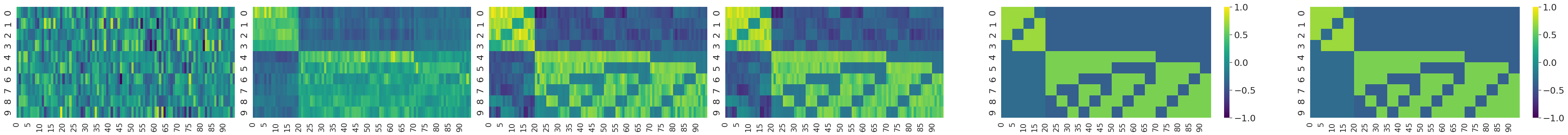}};
			\node at (-6.1,0.7) [scale=0.6]{Epoch 0};
			\node at (-3.9,0.7) [scale=0.6]{Epoch 100};
            \node at (-1.6,0.7) [scale=0.6]{Epoch 1000};
            \node at (0.5,0.7) [scale=0.6]{Epoch 3000};
			\node at (-3.3,1.2) [scale=0.8]{\textbf{(a) Training checkpoints}};
			\node at (3.,1.2) [scale=0.8]{\textbf{(b) Theory}};
			\node at (5.9,1.2) [scale=0.8]{\textbf{(c) Proxy}};

			\node at (3.,0.7) [scale=0.6]{$\Lbmm$};
			\node at (5.9,0.7) [scale=0.6]{$\smatbar$};
			\node at (-7.4,0.0) [scale=0.6, rotate=90]{$\Lb_k$};
		\end{tikzpicture}
    \end{subfigure}\vspace{-10pt}
 
	\vspace{6pt}
	\centering
	\hspace{5pt} 
	\hspace{-80pt} 
    \begin{subfigure}{0.9\textwidth}
		\centering
		\begin{tikzpicture}[remember picture]
			\node at (-0,-0) 
            {\includegraphics[scale=0.13]{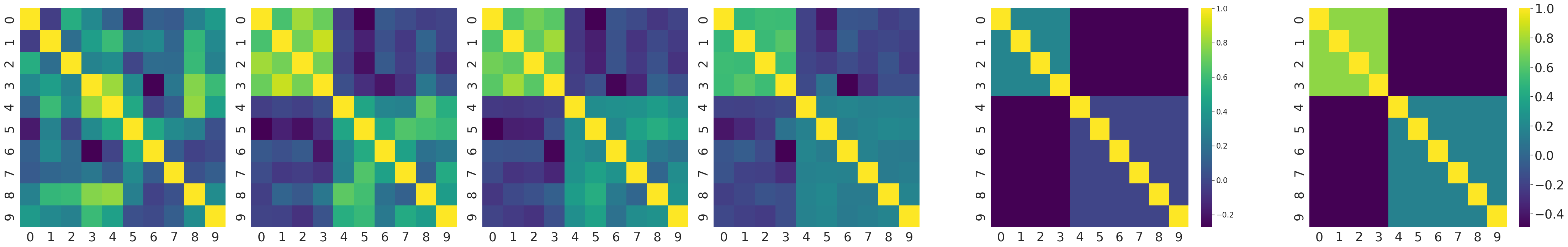}};
			\node at (-6.1,1.25) [scale=0.6]{Epoch 0};
			\node at (-3.9,1.25) [scale=0.6]{Epoch 100};
            \node at (-1.6,1.25) [scale=0.6]{Epoch 1000};
            \node at (0.4,1.25) [scale=0.6]{Epoch 3000};
			\node at (3.,1.25) [scale=0.6]{$\corr{{\Wmm}^\top}$};
			\node at (5.9,1.25) [scale=0.6]{$\corr{\smatbar^\top}$};
			\node at (-7.45,0.0) [scale=0.6, rotate=90]{$\corr{\W_k^\top}$};
		\end{tikzpicture}
    \end{subfigure}\vspace{-10pt}

    \vspace{7pt}
	\centering
	\hspace{15pt} 
	\hspace{-80pt} 
    \begin{subfigure}{0.9\textwidth}
		\centering
		\begin{tikzpicture}[remember picture]
			\node at (-0,-0) 
            {\includegraphics[scale=0.13]{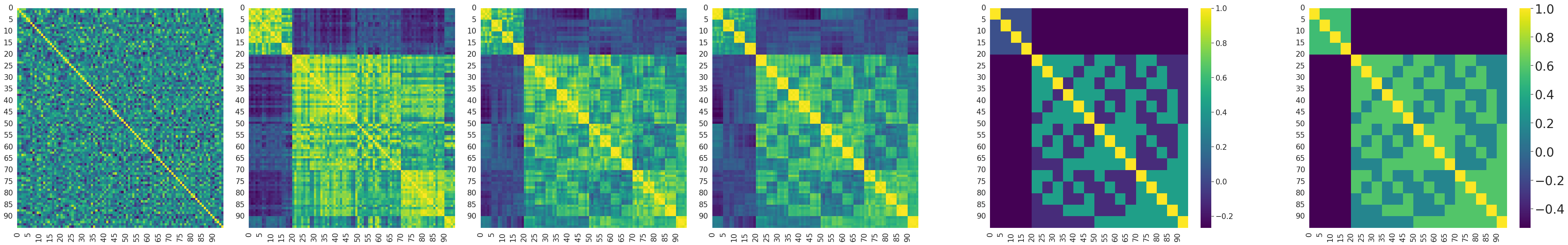}};
			\node at (-6.1,1.25) [scale=0.6]{Epoch 0};
			\node at (-3.9,1.25) [scale=0.6]{Epoch 100};
            \node at (-1.6,1.25) [scale=0.6]{Epoch 1000};
            \node at (0.4,1.25) [scale=0.6]{Epoch 3000};
			\node at (3.,1.25) [scale=0.6]{$\corr{{\Hmm}}$};
			\node at (5.9,1.25) [scale=0.6]{$\corr{{\smatbar}}$};
			\node at (-7.45,0.0) [scale=0.6, rotate=90]{$\corr{\Hb_k}$};
		\end{tikzpicture}
    \end{subfigure}
    \begin{tikzpicture}[remember picture,overlay]
        
   \draw[black, line width=2pt] (-3.5,7.5) -- (-3.5,0.15);
      \draw[black, line width=2pt] (-0.7,7.5) -- (-0.7,0.15);
   
    \end{tikzpicture}

			
			
    \vspace{-0pt}
	\vspace{-0pt}
 \captionsetup{width=\linewidth}
 \vspace{0pt}
    \caption{Evolution of \ref{eq:ufm} parameters $\Lb_k$, $\corr{\W_k^\top}$ and $\corr{\Hb_k}$ when training close to convergence to the empirical entropy $\Hc$ (See Fig.~\ref{fig:UFM_curves}). At the end of the training, the parameters align with the prediction of Thm.~\ref{thm:reg path} ((a) vs (b)). Additionally, the correlation patterns between the embeddings closely follow the similarities between the support sets ((a) vs (c)). See Sec.~\ref{sec:ufmexp} for details. 
    }
	\label{fig:UFM_heatmap}
	\vspace{20pt}
	\centering
    \hspace{-20pt}
    \begin{subfigure}{0.22\textwidth}
		\centering
		\begin{tikzpicture}[remember picture]
			\node at (-0,-0) 
            {\includegraphics[scale=0.27]{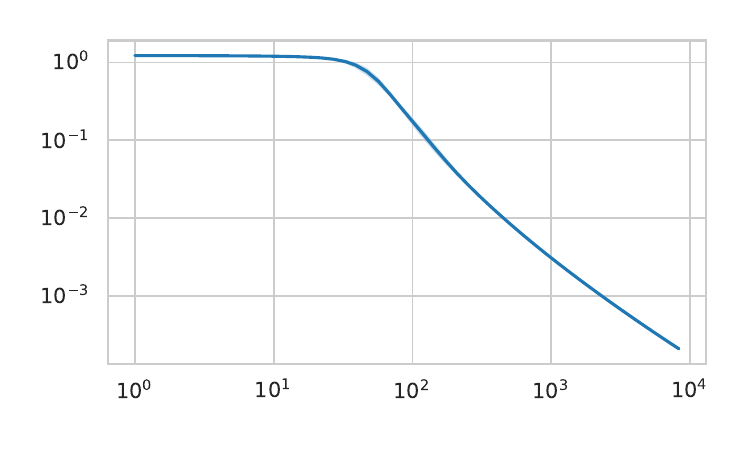}};
			
			\node at (0.1,1.2) [scale=0.8]{\textbf{(a)} Loss convergence};
           \node at (-1.9,0.2) [scale=0.7, rotate=90]{$\text{CE}(\Lb_k)-\Hc$};
             \node at (0.2,-1.2) [scale=0.6]{Epoch $(k)$};
		\end{tikzpicture}
    \end{subfigure}
    \hspace{10pt}
    \begin{subfigure}{0.22\textwidth}
		\centering
		\begin{tikzpicture}[remember picture]
			\node at (-0,-0) 
            {\includegraphics[scale=0.27]{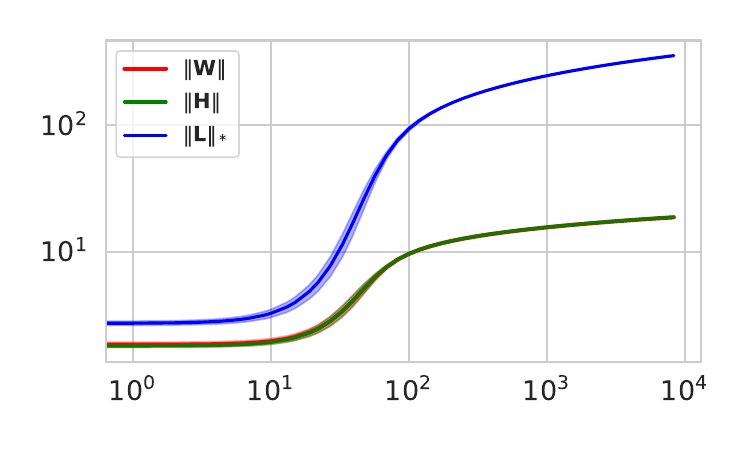}};
			
			\node at (0.1,1.2) [scale=0.75]{\textbf{(b)} Norm growth};
             \node at (0.2,-1.2) [scale=0.6]{Epoch ($k$)};
		\end{tikzpicture}
    \end{subfigure}
    \hspace{0pt}
    \begin{subfigure}{0.22\textwidth}
		\centering
		\begin{tikzpicture}[remember picture]
			\node at (-0,-0) 
            {\includegraphics[scale=0.27]{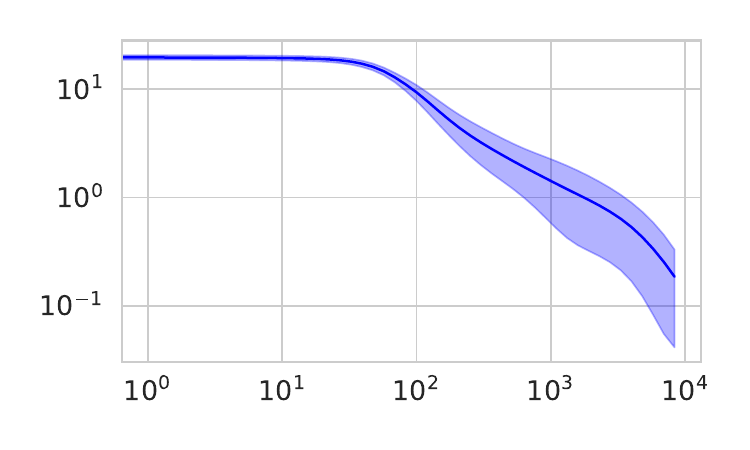}};
			
			\node at (0.1,1.2) [scale=0.75]{\textbf{(c)} $\norm{\Qcs(\Lb_k) - \Lbin}$};
             \node at (0.2,-1.2) [scale=0.6]{Epoch ($k$)};
		\end{tikzpicture}
    \end{subfigure}
    \hspace{10pt}
    \begin{subfigure}{0.22\textwidth}
		\centering
		\begin{tikzpicture}[remember picture]
			\node at (-0,-0) 
            {\includegraphics[scale=0.27]{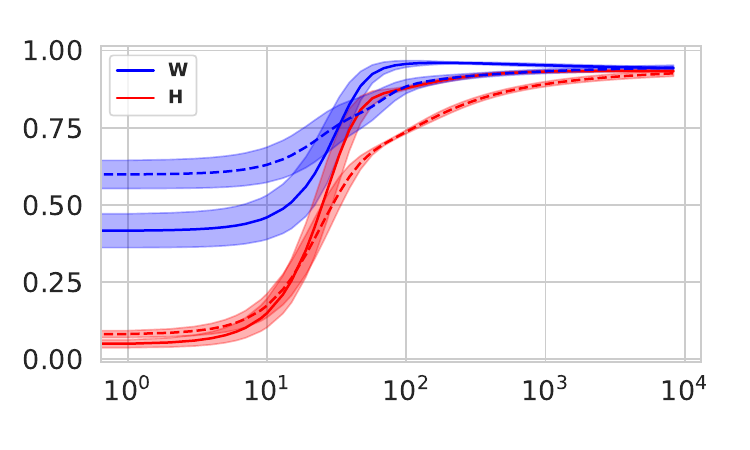}};
			
			\node at (0.1,1.2) [scale=0.75]{\textbf{(d)} Correlation with proxies};
             \node at (0.2,-1.2) [scale=0.6]{Epoch ($k$)};
		\end{tikzpicture}
    \end{subfigure}
    \captionsetup{width=\linewidth}
    \vspace{0pt}
    \caption{Numerical experiments on \ref{eq:ufm}. \textbf{(a):} CE converges closely to the empirical entropy $\Hc$, \textbf{(b):} Norms of the parameters grow, \textbf{(c):} While the parameters converge directionally to $\Lbmm$ (Fig.~\ref{fig:UFM_heatmap}), the projection of $\Lb_k$ on $\Fc$ converges to the finite component $\Lbin$ specified by the soft-labels $\pbh_j$, \textbf{(d):} Structural correlation between the learned embeddings and the proxy, i.e., $\ssim{\Hb}{\smatbar}, \ssim{\W^\top}{\smatbar^\top}$. The correlation with $\Hmm$ and ${\Wmm}^\top$ (instead of $\smatbar, \, \smatbar^\top$) is displayed with dashed lines for reference.\looseness=-1
    }
	\label{fig:UFM_curves}
	\vspace{0pt}
\end{figure*}

Through numerical experiments training on purely synthetic data, we confirm that the solution derived from (stochastic) gradient descent optimization of \ref{eq:ufm} is consistent with the analysis presented in Sec.~\ref{sec:analysis}. 

Specifically, we choose vocabulary size $V=10$, embedding dimension $d=10$, and $m=95$ training samples and generate (centered) support matrix $\smatbar$ as shown in Fig.~\ref{fig:UFM_heatmap}-(c). We also set $\pih_j = {1}/{m}$ and for this fixed support matrix $\smat$, we generate random soft labels $\pbh_{j}\in\Delta^{V-1}$ such that $\ph_{j,v}=0, v\not\in\Sc_j$.

We display the training evolution in Fig.~\ref{fig:UFM_curves}. We train \ref{eq:ufm} with SGD and small weight decay $\lambda=10^{-5}$ until reaching the empirical entropy lower-bound $\Hc\approx1.04$ within an order of $10^{-4}$ as shown in Fig.~\ref{fig:UFM_curves}-(a). This decrease in the loss is accompanied by a consistent increase in the parameter norms, as displayed in Panel (b). Despite the increase in norms, the projection of $\Lb_k$ onto the subspace $\Fc$ remains close to the finite component $\Lbin$ specified by Thm.~\ref{thm:reg path finite}. This is illustrated in Panel (c) and ensures the recovery of the soft-labels on the in-support tokens.  \looseness=-1


In Fig.~\ref{fig:UFM_heatmap}, we visualize the logits $\Lb_k=\W_k\Hb_k$, and cosine similarities $\corr{\W_k^\top}$ and $\corr{\Hb_k}$ of word and context embeddings, respectively. 
As shown in Panel (a) of Fig.~\ref{fig:UFM_heatmap}, as training continues the parameters recover the implicit geometry in Panel (b), which visualizes the prediction made by Thm.~\ref{thm:reg path} and Cor.~\ref{cor:WH}. In Panel (c), we also visualize the cosine similarity of $\smatbar$ and $\smatbar^\top$, which we introduced as a less expensive proxy of $\Wmm$ and $\Hmm$ (see Sec.~\ref{sec:proxy_main}). While the theory and proxy are not exactly the same, their heatmaps display similar structure. To verify the close relationship between the embeddings and the proxies quantitatively, we measure $\ssim{{\Hb}}{{\smatbar}}$ and $\ssim{{\W^\top}}{{\smatbar^\top}}$, where $\ssim{\Xb}{\Yb}$ measures the structural correlation between the two matrices (see Sec.~\ref{sec:metric}). Fig.~\ref{fig:UFM_curves}-(d) confirms the high correlation between the proxy \ref{P proxy} and the embeddings' implicit geometry. 
We also note that in the special case where the support sets of two contexts are identical, the heatmaps also verify the subspace collapse Claim \ref{P  collapse}: Along the diagonal blocks of $\corr{\Hb}$, where the contexts with similar support sets lie, the samples have maximum correlation and are aligned. \looseness=-1


We finally note that in this experiment $\Lbmm=\Lbtina$ (we verified that the dual-certificate condition in Prop.~\ref{propo:svm sufficient condition} holds). 

\label{sec:ufmexp}

\section{Experiments}\label{sec:exp_main}
%

\vspace{-5pt}
We now empirically validate our analysis on text data. 
We start with two small-scale synthetic datasets to examine Claims \ref{P logits}-\ref{P soft-labels}. 
We then experiment with a larger-scale dataset, where SVD calculation are computationaly expensive, we examine the Proxy \ref{P proxy}. \looseness=-1

\vspace{-5pt}
\subsection{Datasets}
\vspace{-5pt}
For a detailed description of the datasets, refer to App.~\ref{sec:exp_details}.\looseness=-1 

    \vspace{-2pt}
    \noindent{\textbf{\synth.}}~We manually create simple (context, next-token) pairs, with context length of size $T-1=5$. The dataset consists of $n=116$ samples, containing $m=16$ distinct contexts, and a vocabulary size of $V=30$. Each context has a fixed support set length of $S_j=3$ and the empirical entropy lower-bound is $\Hc= 1.6597$. 
    
    \vspace{-2pt}
    \noindent{\textbf{\simTS.}}~For a more realistic but still controlled setup, we curate the dataset from the \TS~corpus: 
    We derive contexts $\xd_j$ and support sets $\Sc_j$ by choosing the most frequent word-level contexts with length $T-1=5$. Here, $V=104$, $n\approx3050$ and $m\approx400$ with the empirical entropy lower-bound $\Hc= 1.0166$.\looseness=-1
    
    \vspace{-2pt}
    \noindent{\textbf{\TS.}}~For more standard data, we use 100 stories sampled from \TS. Unlike the other two datasets, we do not sample over the contexts and support sets. We use a fixed context length $T-1=6$ for training. In the final dataset, $V=128$,  $m\sim 10^5$, and the empirical entropy lower-bound is $\Hc=0.3112$. We choose a small context window and vocabulary size to make tracking the distinct contexts and their support sets computationally manageable.\looseness=-1

\vspace{-5pt}
\subsection{Models}
\vspace{-5pt}
We train decoder-only TF with 4 layers and $d=128$ for \synth~and \simTS, and 12 layers and $d=256$ for \TS. For training details, see App.~\ref{sec:exp_details}.\looseness=-1

\vspace{-2pt}
We verify two claims made in Sec.~\ref{sec:analysis}: 1) We assess whether numerically optimizing both the TF and \ref{eq:ufm} in each setup yields the same embedding geometries, 2) We evaluate the global solution of \ref{eq:ufm}, as specified by Thm.~\ref{thm:reg path} and Cor.~\ref{cor:WH}, and check whether it respects the same implicit geometry as the TF and the \ref{eq:ufm}.
For the first one, we train \ref{eq:ufm} directly on the label distribution of each dataset; this is computationally feasible only for the first two datasets. For the second claim, we find the global solution $\Lbmm$ of \ref{eq:ufm-svm} on each dataset using CVXPY \citep{diamond2016cvxpy} and $\Hmm$, $\Wmm$ using Cor.~\ref{cor:WH}. If not computationally possible, we only use the Proxy \ref{P proxy} as reference.\looseness=-1

\vspace{-5pt}
\subsection{Metrics.}\label{sec:metric}
\vspace{-5pt}
We define the following metrics for verifying our theoretical results:

\noindent\textbf{Visualization of embeddings' geometry.}~To visualize the geometry of the word and context embeddings, we plot the heatmap of the normalized Gram matrix $\corr{\W^\top}$ and $\corr{\Hb}$, where for matrix $\Xb$,  $\corr{\Xb}$ is a matrix whose $(i,j)$-th entry is the cosine similarity between the respective columns of $\Xb$, i.e., \looseness=-1
$$\big[\corr{\Xb}\big]_{i,j}={\x_i^\top\x_j}/{(\|\x_i\|\|\x_j\|)}.$$
To check Proxy \ref{P proxy}, we compare the context and word embeddings with $\corr{\smatbar}$ and $\corr{\smatbar^\top}$. \looseness=-1

\noindent\textbf{Quantifying geometric similarity.}~To measure the structural similarity between $\W$, $\Hb$ and $\smatbar^\top$, $\smatbar$, we use
\vspace{-2pt}
 $$   \ssim{\Xb}{\Yb} = ({\sigma_{\Xb\Yb} + \epsilon})/{(\sigma_\Xb \sigma_\Yb + \epsilon)},$$ as the correlation matrix between two matrices.\footnote{This is the Structural Similarity Index Measure (SSIM) used to measure the similarity between two images \citep{wang2004image}.}
Here, \(\sigma_{\Xb\Yb}\) is the covariance, and \(\sigma_\Xb\) and \(\sigma_\Yb\) are the standard deviations of \(\Xb\) and \(\Yb\), respectively, and $\epsilon$ is a small constant for stable division. A value of \(\ssim{\Xb}{\Yb} = 1\) indicates perfect structural correlation. 
To simplify the presentation, and with some abuse of notation, we denote the similarity metric for context and word embeddings as follows,
\vspace{-2pt}
\begin{align*}
    \ssimstar{\Hb}{\smatbar}:=\ssim{\corr{\Hb}}{\corr{\smat}}, \quad \ssimstar{\W^\top}{\smatbar^\top}:=\ssim{\corr{\W^\top}}{\corr{\smat^\top}}.
\end{align*}

\vspace{-5pt}
\noindent\textbf{Recovery of soft-labels.}~To verify   Thm.~\ref{thm:reg path finite}, we measure the distance $\|\Qcs(\Lb_k) - \Lbin\|$ of $\Lbin$ from the projection of $\Lb_k$ onto the subspace $\Fc$. 



    


\hspace{-20pt}
\begin{figure*}[t]
    \vspace{-30pt}
    \centering
    \resizebox{0.85\textwidth}{!}{
    \begin{tikzpicture}
        \node (img1) at (0,0) {\includegraphics[width=0.33\textwidth]{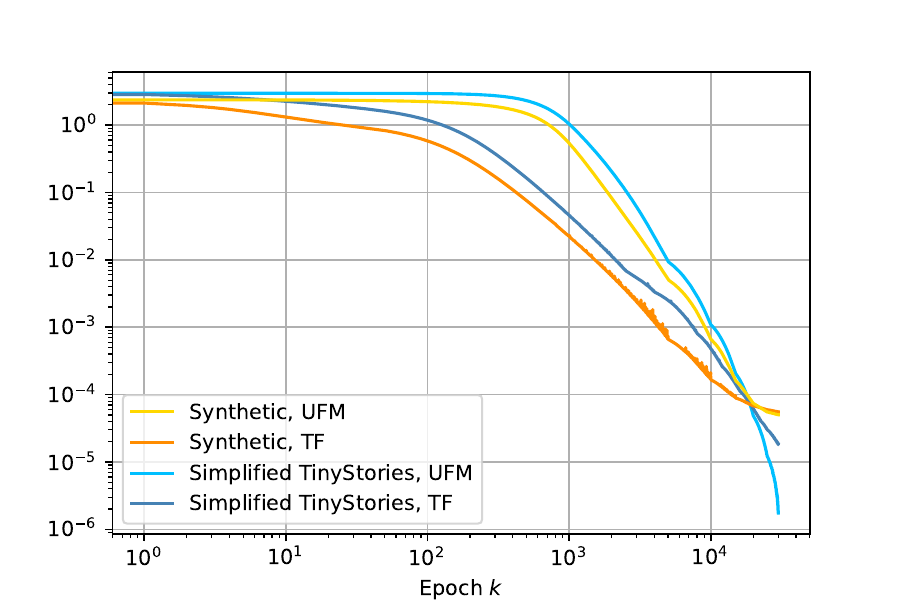}};
        \node (img2) at (5,0) {\includegraphics[width=0.33\textwidth]{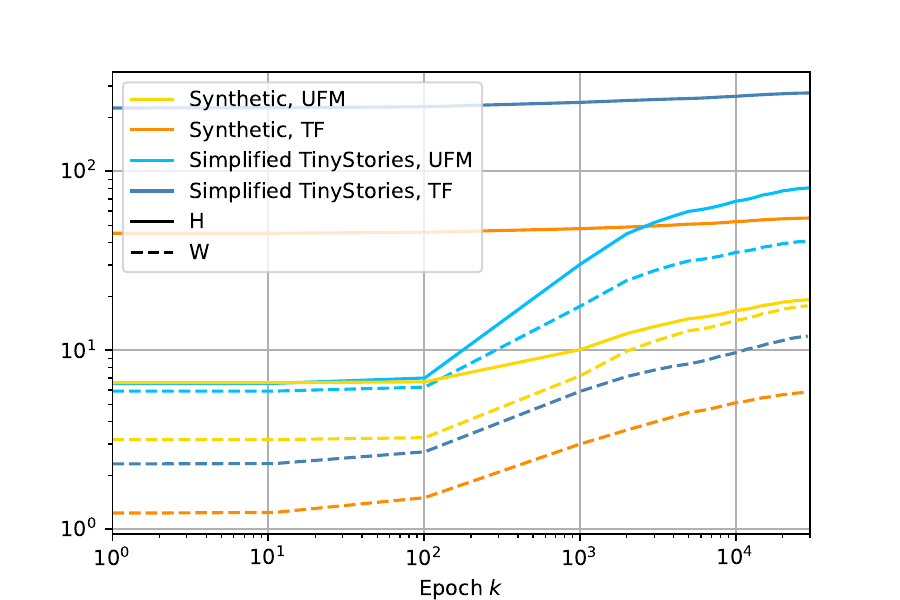}};
        \node (img3) at (10,0) {\includegraphics[width=0.33\textwidth]{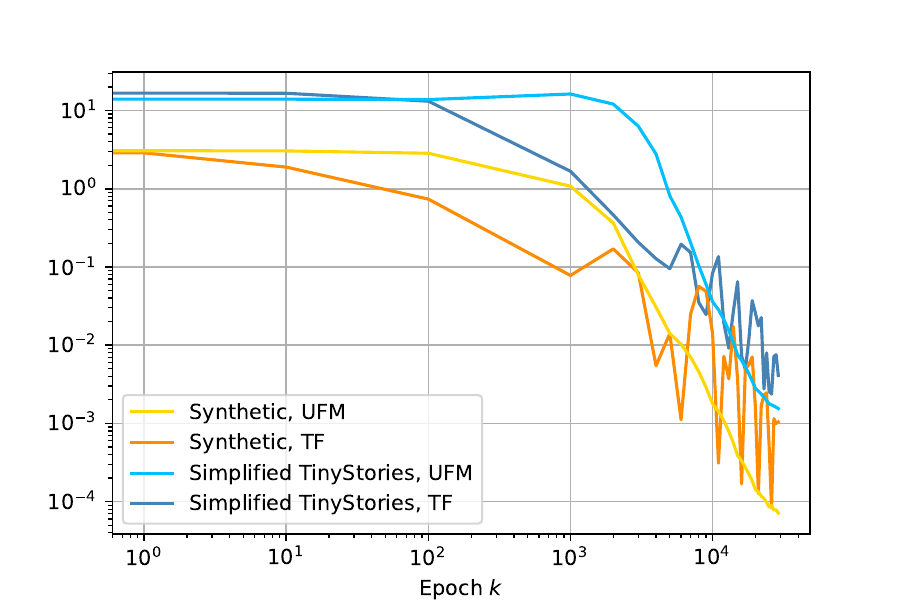}};
        
        \node (img4) at (2.5,-3.7)
        {\includegraphics[width=0.33\textwidth]{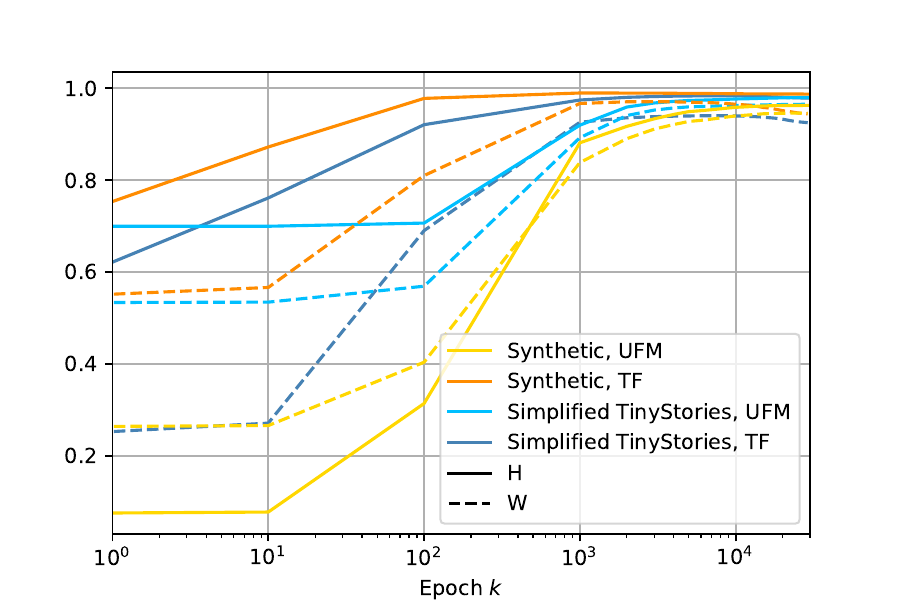}};
        \node (img5) at (7.5,-3.7) 
        {\includegraphics[width=0.33\textwidth]{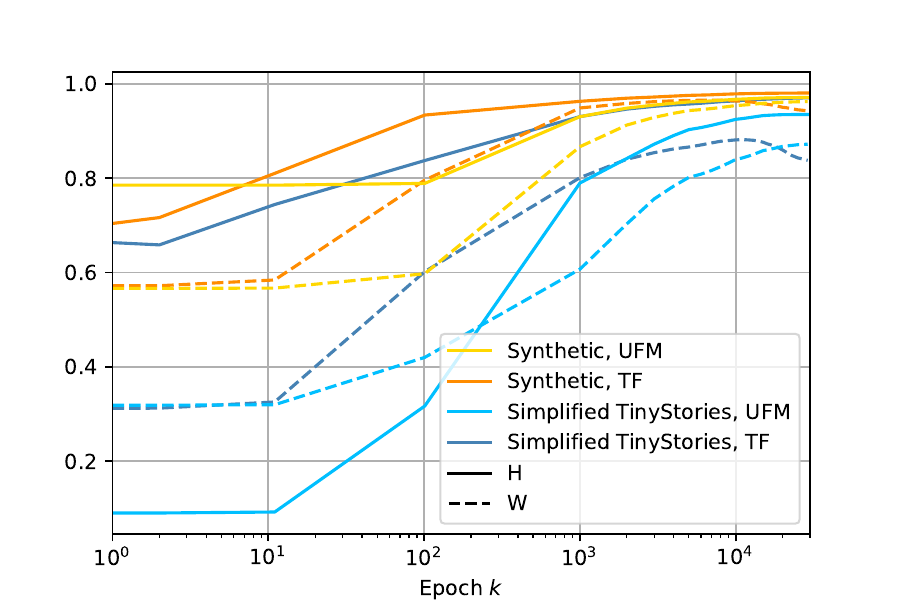}};
        
        \node[anchor=south, text width=0.3\textwidth, align=center, fill=white, fill opacity=1] at (0,-1.85) {\tiny Epoch$(k)$};
        \node[anchor=south, text width=3cm, align=center, fill=white, fill opacity=1] at (5,-1.85) {\tiny Epoch$(k)$};;
        \node[anchor=south, text width=3cm, align=center, fill=white, fill opacity=1] at (10,-1.85) {\tiny Epoch$(k)$};;
        \node[anchor=south, text width=0.5\textwidth, align=center, fill=white, fill opacity=1] at (2.5,-5.6) {\tiny Epoch$(k)$};;
        \node[anchor=south, text width=0.5\textwidth, align=center, fill=white, fill opacity=1] at (7.5,-5.6) {\tiny Epoch$(k)$};

        \node[anchor=south, text width=0.3\textwidth, align=center] at (0,1.2) {\textbf{(a)} Loss convergence};
        \node[anchor=south, text width=3cm, align=center] at (5,1.2) {\textbf{(b)} Norm growth};
        \node[anchor=south, text width=3cm, align=center] at (10,1.2) {\textbf{(c)} $\norm{\Qcs(\Lb_k) - \Lbin}$};
        \node[anchor=south, text width=0.5\textwidth, align=center] at (2.5,-2.6) {\textbf{(d)} Correlation with proxies};
        \node[anchor=south, text width=0.5\textwidth, align=center] at (7.5,-2.6) {\textbf{(e)} Correlation with theory};

        \node[anchor=east, rotate=90] at (-2.1,0.8) {\tiny $\text{CE}(\Lb_k)-\Hc$};
    \end{tikzpicture}
    }
    \vspace{-7pt}
    \captionsetup{width=\textwidth}
    \caption{Experiments on \synth~and \simTS~datasets. 
    \textbf{(a):} CE approaches $\Hc$, \textbf{(b):} Parameters' norms grow during training. \textbf{(c):} $\Lb_k$'s projection on the data subspace $\Fc$ converges to the sparse component $\Lbin$ specified by the soft-labels $\pbh_j$. \textbf{(d):} $\ssimstar{\Hb_k}{\smatbar}$ (solid) and $\ssimstar{\W_k^\top}{\smatbar^\top}$ (dashed). At the final stage the learned embeddings exhibit high similarity with proxy \ref{P proxy}. \textbf{(e):} Same as (d), this time comparing with theory, i.e., $\ssimstar{\Hb_k}{\Hmm}$ and $\ssimstar{\W_k^\top}{{\Wmm}^\top}$. Details in Sec.~\ref{sec:exp_main}.\looseness=-1
    }
    \label{fig:synthetic_curves}

    \vspace{10pt}
    \hspace{-30pt}
    \resizebox{0.95\textwidth}{!}{
    \begin{subfigure}{0.24\textwidth}
		\centering
		\begin{tikzpicture}
			\node at (-0,-1.4) 
			{\includegraphics[scale=0.2]{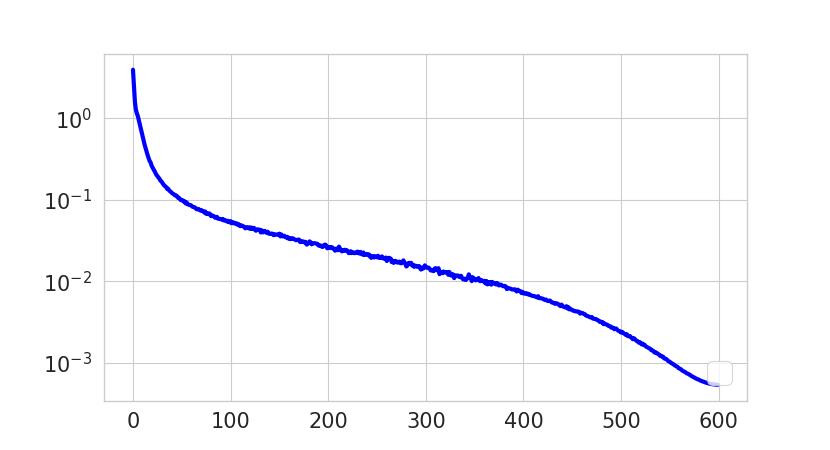}};
            \node at (0.1,-0.3) [scale=0.8]{\textbf{(a)} Loss convergence};
            \node at (-2.1,-1.5) [scale=0.8, rotate=90]{$\text{CE}(\Lb_k)-\Hc$};
            \node at (0.2,-2.7) [scale=0.6]{Epoch ($k$)};
		\end{tikzpicture}
    \end{subfigure}
    \hspace{-0pt} \begin{subfigure}{0.24\textwidth}
		\centering
		\begin{tikzpicture}
			\node at (-0,-1.4) 
			{\includegraphics[scale=0.2]{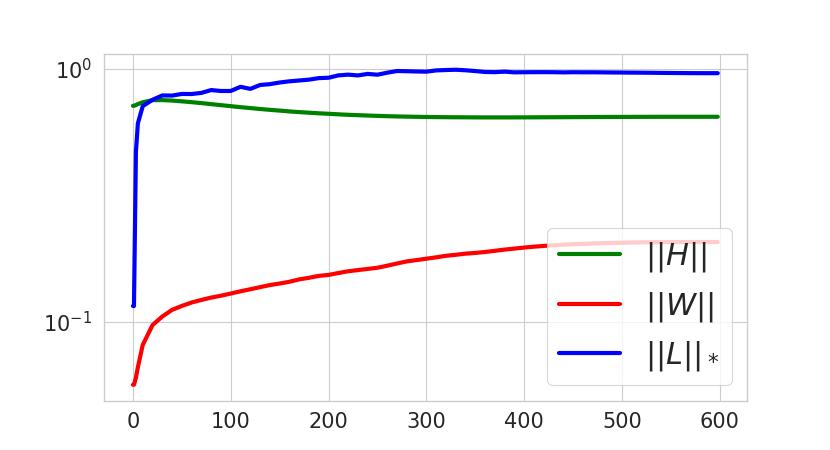}};
            \node at (0.2,-2.7) [scale=0.6]{Epoch ($k$)};
            \node at (0.1,-0.3) [scale=0.75]{\textbf{(b)} Norm growth};
		\end{tikzpicture}
    \end{subfigure}
    \hspace{-0pt} \begin{subfigure}{0.24\textwidth}
		\centering
		\begin{tikzpicture}
			\node at (-0,-1.4) 
			{\includegraphics[scale=0.2]{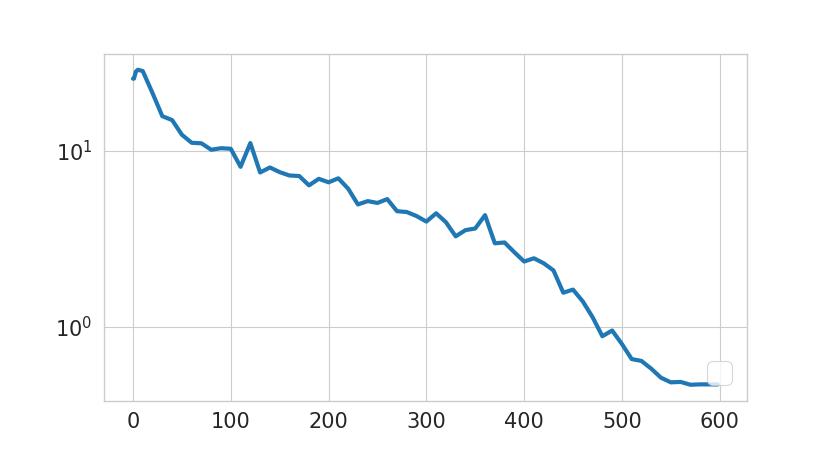}};
            \node at (0.1,-0.3) [scale=0.75]{\textbf{(c)} $\norm{\Qcs(\Lb_k) - \Lbin}$};
            \node at (0.2,-2.7) [scale=0.6]{Epoch ($k$)};
		\end{tikzpicture}
    \end{subfigure}
    \hspace{-0pt} \begin{subfigure}{0.24\textwidth}
		\centering
		\begin{tikzpicture}
			\node at (-0,-1.4) 
			{\includegraphics[scale=0.2]{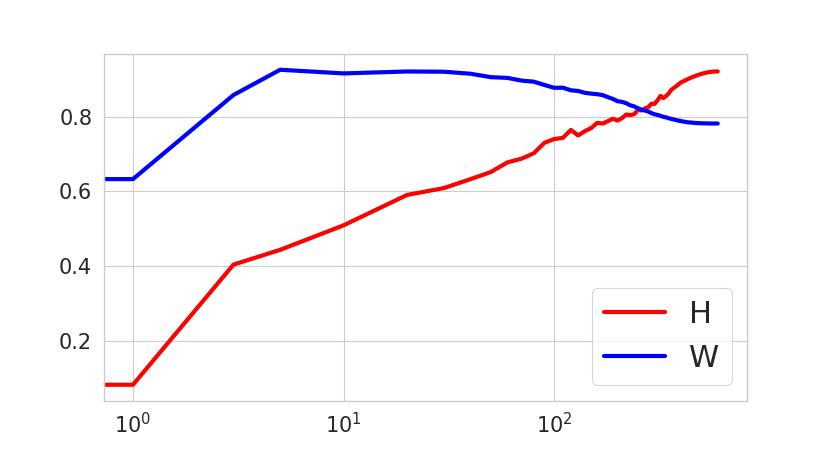}};
            \node at (0.1,-0.3) [scale=0.75]{\textbf{(d)} Correlation with proxies};
             \node at (0.2,-2.7) [scale=0.6]{Epoch ($k$)};
		\end{tikzpicture}
    \end{subfigure}
    }
    \vspace{-10pt}
    \captionsetup{width=\textwidth}
    \caption{
    Similar to Fig.~\ref{fig:UFM_curves} and \ref{fig:synthetic_curves}, this time on a 12-layer TF trained on a subset of $100$ stories from the \TS~dataset. \looseness=-1 
    %
    %
    }
	\label{fig:TinyStories_curves} 
\end{figure*}

\begin{figure}[h!]
    \centering
    \vspace{-30pt}
    \begin{tikzpicture}
        \node[inner sep=0] (image) at (0,0) {\includegraphics[width=\linewidth]{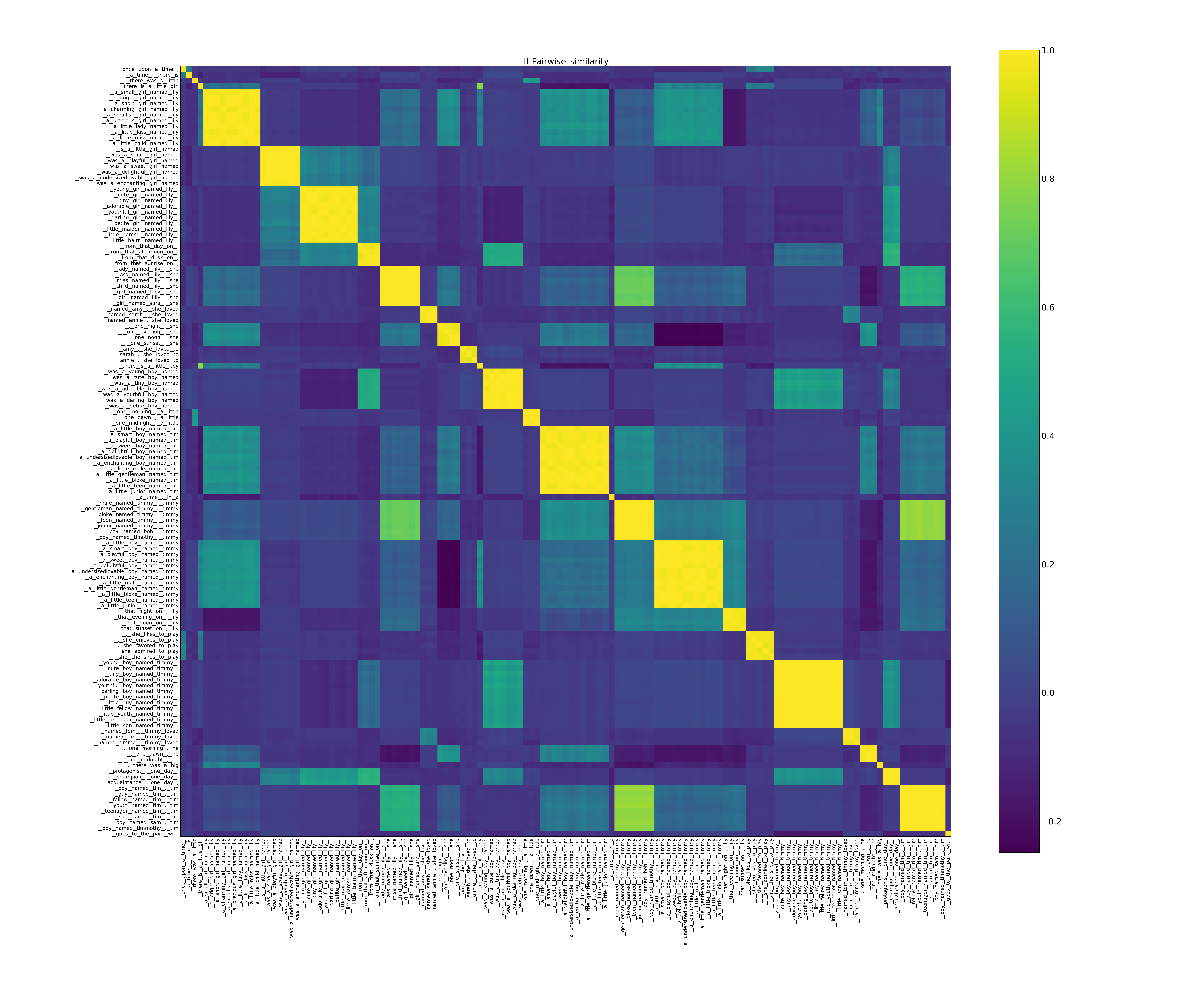}};
        \node[align=center, fill=white, fill opacity=1, text width=0.5\linewidth] at (-0.3,5.355) {$\corr{\Hb}$}; 
    \end{tikzpicture}
    \vspace{-30pt}
    \captionsetup{width=\textwidth}
%
%

    \centering
    \vspace{-0pt}
    \begin{tikzpicture}
        \node[inner sep=0] (image) at (0,0) {\includegraphics[width=\linewidth]{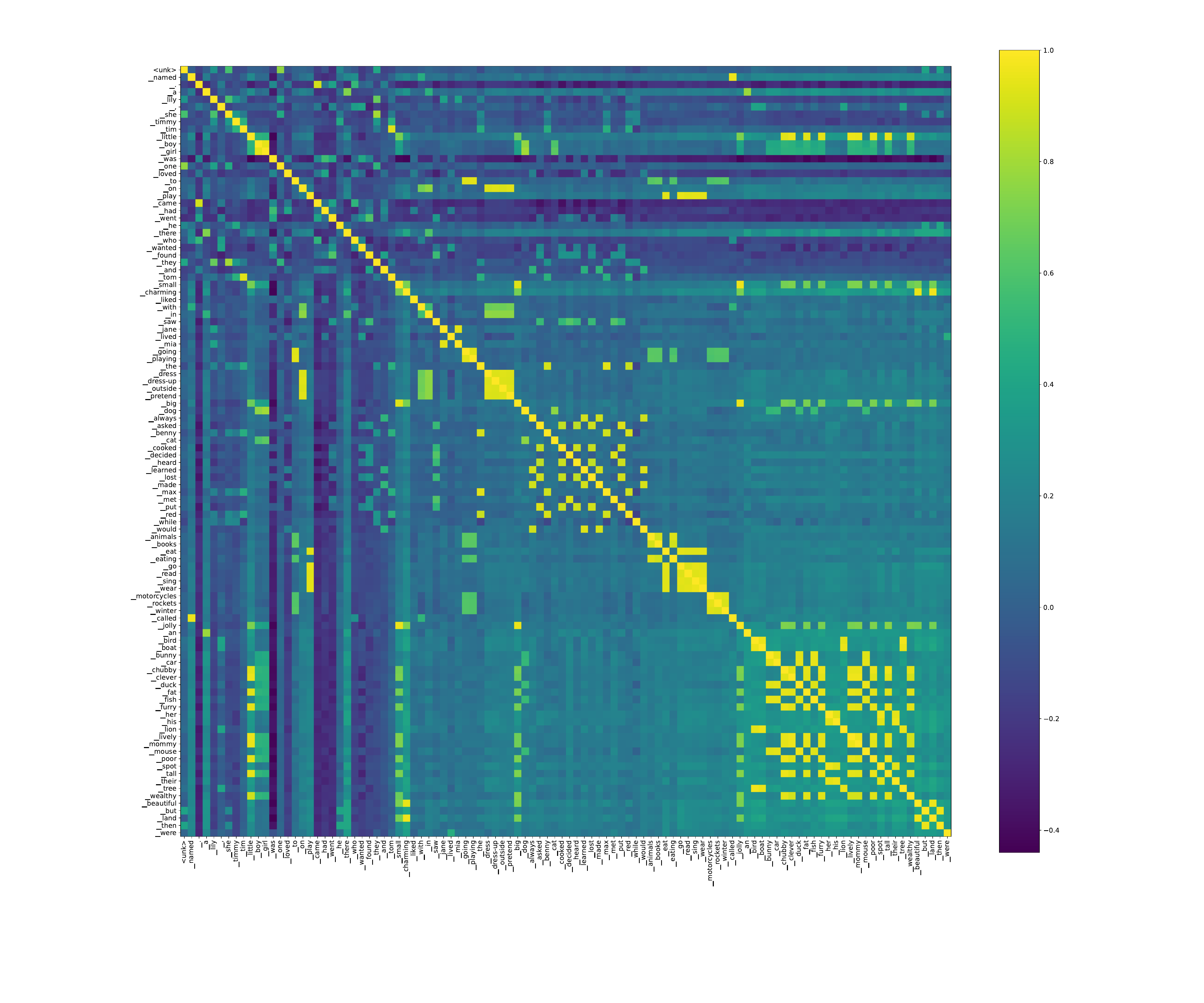}};
        \node[align=center, fill=white, fill opacity=1, text width=0.5\linewidth] at (-0.3,5.52) {$\corr{\W^\top}$}; 
    \end{tikzpicture}
    \vspace{-50pt}
    \captionsetup{width=\textwidth}
    \caption{Implicit geometry of context (Top) and word (Bottom) embeddings {and their associated text values} for \simTS~dataset. Lighter color indicates higher similarity in the embedding space.\looseness=-1}
    \label{fig:sim_W_lable_404}
\end{figure}

\vspace{-15pt}
\subsection{Results}
\vspace{-5pt}
In Sec.~\ref{sec:intro_methodology}, we discussed Fig.~\ref{fig:simplified_intro}, where we compared the logits, word and context embeddings trained on \simTS~by (a) TF, (b) \ref{eq:ufm}, and those predicted by (c) Claims \ref{P logits}-\ref{P  collapse}, and (d) Proxy \ref{P proxy}. Similar observations for the \TS~and \synth~datasets are shown in Figs.~\ref{fig:intro_TinyStories} and \ref{fig:synthetic_heatmap_small}. For visualization details, see App.~\ref{sec:exp_details}. \looseness=-1

Next, we discuss Fig.~\ref{fig:synthetic_curves} that tracks our metrics during training for the \synth~and \simTS~datasets. Fig.~\ref{fig:synthetic_curves}-(a) confirms that the loss is close to the empirical entropy $\Hc$. Fig.~\ref{fig:synthetic_curves}-(b) confirms the norm growth of the parameters. Note the slower growth of the context embeddings $\Hb$ in the TF model, which we suspect is due to the layer normalization at the final layer. We also confirm Claim \ref{P soft-labels}, the recovery of the soft labels,  in Fig.~\ref{fig:synthetic_curves}-(c). At last, Fig.~\ref{fig:synthetic_curves}-(d) and (e) track the correlation of the learned geometries with the proxy \ref{P proxy} and with the theoretical predictions $(\Hmm,\Wmm)$. Note that all similarity metrics increase towards $1$ as training progresses. In Figs.~\ref{fig:m404steps} and \ref{fig:verysmallsteps} in the appendix, we display the evolution of the embeddings by capturing their heatmaps at various training checkpoints. Interestingly, the final patterns emerge early in the training process. Fig.~\ref{fig:TinyStories_curves} displays analogous plots  for \TS.\looseness=-1

{Finally, in Figs.~\ref{fig:sim_W_lable_404} and \ref{fig:sim_W_label_verysmall}, we visualize the text values of the contexts and words along with their geometry heatmaps. Observe that the embeddings' similarities are consistent with the linguistic properties of words/contexts: Embeddings of contexts such as \texttt{``boy named Timmy. Timmy''} and \texttt{``kid called Lilly. She''} align closely as they are likely to be followed by similar set of words {(i.e., they have similar sparsity patterns)}. Also, word embeddings tend to cluster by their grammatical categories, with verbs and adjectives often showing high similarity. For instance, the verbs \texttt{go}, \texttt{eat}, \texttt{sing}, \texttt{wear}, \texttt{play}, and the adjectives \texttt{clever}, \texttt{fat}, \texttt{furry}, \texttt{lively}, \texttt{little} have high similarity. {This contrasts with the maximally separated word embeddings in the unrealistic, perfectly symmetric setting analyzed in Section \ref{sec:special_case}, for which word embeddings where maximally separated. Here, the sparsity patterns of natural language induces  an intricate geometry of word embeddings that accurately reflects linguistic structures such as grammatical categories.}\looseness=-1

\vspace{-10pt}


\vspace{5pt}
\section{Concluding remarks: Limitations and future work}\label{sec:conclusion}
\vspace{-5pt}
We have merged insights from the study of deep-net embeddings' geometry in one-hot classification, implicit regularization theories, and a sparse soft-label classification framing of NTP, to arrive at a framework for examining word and context embeddings under the NTP  paradigm. Key to our approach is the simplification to a problem of rank-constrained nuclear-norm-regularized CE minimization across logits (Eq. \eqref{eq:ufm relax}). When the embedding dimension $d$ is large relative to the vocabulary size $V$, we reveal how the similarities between word/context embeddings are heavily influenced by the sparsity patterns of the support sets. We do so by characterizing the implicit bias of NTP as it approaches the empirical entropy lower-bound leading to Claims \ref{P logits}-\ref{P soft-labels}.


Looking forward, we see several promising avenues for expanding upon our framework and addressing its present limitations:

First, we believe it is feasible to broaden the regularization-path analysis to the non-convex realm of problem \eqref{eq:ufm relax}, potentially revealing a rank-constrained variant of \ref{eq:ufm-svm}. We hypothesize that the support matrices typical of natural language data might accommodate low-rank solutions that adhere to the constraints specified in Eqns.~\eqref{eq:svm equalities}-\eqref{eq:svm inequalities}. This would allow us to relax the $d\geq V$ constraint in our analysis; see App.~\ref{sec:d<V} for more discussions. {But even under the condition $d\geq V$ several exciting directions remain open. 
 For example, investigating the implicit bias of GD—and potentially other optimizers like Adam, which are more commonly used in language model training but might present additional challenges—is important. In optimizing \eqref{eq:ufm relax}, does GD follow the regularization path? If not, to what extent does the solution learned by GD deviate from the sparse plus low-rank structure?}\looseness=-1

Second, while our initial focus was on establishing the framework, conducting preliminary analyses and proof-of-concept experiments, we encourage more extensive experimental exploration, especially with larger datasets and more complex model architectures with larger context window. Experiments in this paper focused on TF models due to their recent success in language. However, our theoretical results are independent of network design. Instead, they require the model to be over-parameterized enough to reach the entropy lower bound and ensure expressiveness in the embedding space. For an example, see App.~\ref{sec:mlp}. 

Third, it is intriguing to investigate how the distributional characteristics of word/context pairs in natural language, represented by the probability matrix $\Pbb$—and primarily by its support $\smat$—influence key properties like low-rankness and symmetry in the SVD factors of $\Lbmm$. {Relatedly, we aim to leverage these insights to more effectively explore the empirical connection between $\Lbmm$ and the centered support matrix $\smatbar$} complementing the heuristic arguments in Sec. \ref{sec:proxy_main} in support of proxy \ref{P proxy}. {Considering this proxy, a low-rank structure of $\smatbar$ would also imply a low-rank nature for $\Lbmm$ (currently only indirectly supported by low-rank minimization).}
Fourth, although the framework encompasses full autoregressive training, our experimental study has focused on prediction of the last token prediction, with the exception of preliminary experiments in App.~\ref{sec:ar}. Deeper studies are required with sequences of variable length to empirically evaluate the brittleness of our model's assumption, which enumerates distinct contexts regardless of their length or constituent token composition.

Fifth, our framework characterizes the geometry of word/context representations in a saturating regime where the NTP loss reaches its entropy lower-bound, suggesting that the model is sufficiently overparameterized. Although current state-of-the-art models do not reach this saturation regime and are likely not overparameterized, emerging evidence points to the need for extended training periods to {achieve better generalization \citep{power2022grokking} or to} learn complex behaviors like hierarchical structures \citep{murty2023grokking} {and high-order in-context statistics \citep{edelman2024evolution}}. It's also plausible that as models grow larger and training methods become more efficient, extended training periods may become feasible. Investigating the potential benefits of prolonged training could yield valuable insights, and we hypothesize that the implicit optimization biases of NTP, as discussed in \cite{ntp} and our work, offer a compelling framework for such exploration. Additionally, determining the degree of overparameterization required to achieve the entropy lower-bound presents an intriguing challenge in its own right \cite{madden2024upper}.\looseness=-1

%

Ultimately, our goal is to spur further exploration into the geometries of context and word embeddings. 
Akin to recent promising studies in image-classification that leverage the neural-collapse geometry, insights gained from such studies within NTP could  inform the development of more effective loss functions, accelerate training processes, provide defenses against adversarial attacks, and guide the architectural choices for downstream tasks.
More broadly, we aim to foster interest in deeper investigations into how the inductive bias of NTP influences the learning of linguistic patterns and regularities.

\vspace{5pt}
\subsubsection*{Acknowledgments}
This work was funded by the NSERC Discovery Grant No. 2021-03677, the Alliance Grant ALLRP 581098-22, and an Alliance Mission Grant. TB gratefully acknowledges the support from UBC’s Four Year Doctoral Fellowship and a scholarship from the Advanced Machine Learning Training Network at UBC. VV gratefully acknowledges support from BC Graduate Scholarship (BCGS) and Canadian Graduate Scholarship for Master's (CGSM).

\vspace{20pt}

\bibliography{refs,transformers,refs_NC,refs_generalization,bib_extra}
\bibliographystyle{colm2024_conference}


\clearpage 
\tableofcontents
\appendix

\appendix
    \setcounter{tocdepth}{1} 
    \setcounter{section}{0} 

\onecolumn

\section{Related work: Detailed version}\label{app:related}
This section expands upon the section on related work in the main body.

\subsection{Word embeddings as matrix factorization}\label{sec:lit_mf}
Neural probabilistic language models are designed to predict the probability distribution of a target word $v\in\Vc$ within a given vocabulary 
$\Vc$, based on a sequence of preceding or surrounding words of (variable or fixed) length $t$ known as the context $\z\in\Vc^t$. In these models, word and context representations (aka embeddings) are parameterized by their internal architecture \citep{neural_language_models_2}. The emergence of energy-based models like the Word2Vec family, particularly through the introduction of training methodologies such as the Skip-Gram with Negative Sampling (SGNS) objective by \citet{word2vec_1,word2vec_2}, marked a significant advancement in learning high quality embeddings; e.g., see \citet{modern_nlp_book}. SGNS aims to optimize the alignment between vectors of \emph{target} words and the \emph{context} words surrounding the target, while disassociating those of randomly drawn pairs. This is achieved by optimizing a log-bilinear model that learns unconstrained embeddings for words in two roles: as targets and as contexts. While intuitive, the objective remained somewhat mysterious until the seminal analysis by \citet{levy} offered a formal examination of embeddings generated by the SGNS objective, framing it as a form of weighted matrix factorization. Specifically for embeddings of large dimensionality, Levy and Goldberg demonstrated that SGNS implicitly factorizes a shifted version of the Pointwise Mutual Information (PMI) matrix, denoted as 
$\PMI$, which is a measure of association between words and contexts by having entries  $\PMI[z,j]=\log\frac{\rm{Pr}(z|j)}{\rm{Pr}(z)}=\log\frac{\ph_{j,z}}{\rm{Pr}(z)}$ for target word $z$ and context word $j$. This result establishes a fundamental connection between count-based and predictive models \citep{glove}. However, this approach presumes non-zero occurrences of all context-word pairs in the training set, a condition not always met, leading to undefined factorization for pairs with zero co-occurrence (resulting in 
$\PMI[z,j]=-\infty$). To address the possibility of negative infinity values, an approximation involving a positive, sparse, and non-negative PMI (PPMI) matrix was proposed. While this adaptation offered an intriguing alternative to the original training objectives, the geometric characteristics of the resultant embeddings in comparison to those derived from the original loss remain unclear. 

Our approach mirrors that of \citet{levy} to some extent (see also \citet{glove}), especially in treating NTP as a soft-label classification across distinct contexts and presuming unconstrained context embeddings (although these concepts are not clearly mentioned in those original works). At a very basic level our work distinguishes itself by focusing on the NTP objective, which, although akin to SGNS and NC objectives, employs cross-entropy loss instead of sigmoid functions. Aside from this distinction, we identify the following conceptual divergences, contributing to our novel perspective on this line of inquiry: \textbf{(i)} Firstly, we aim to characterize the geometry of embeddings as learned directly from the loss function, rather than through approximations. 
\textbf{(ii)} Secondly, although \ref{eq:ufm}, as a log-bilinear model, is in spirit to the log-bilinear models trained by Word2Vec, we only use such log-bilinear models as an analytically-tractable proxy to expressive deep neural probabilistic language models such as transformers.
\textbf{(iii)} Thirdly, we confront the sparsity in the probabilistic labels of each context head-on, unlike other matrix-factorization methods that circumvent sparsity in a heuristic manner through smoothing or weighting techniques \citep{levy, glove}. This helps us identify that sparsity leads to diverging embeddings, a scenario where setting the loss gradient to zero is infeasible with finite weights. 
\textbf{(iv)} Fourthly, by examining embeddings through the lens of the regularization path—a surrogate for gradient descent optimization—we unveil that word and context embeddings emerge from the factorization of a matrix 
$\Lbin+R\Lbmm$. Here, 
$\Lbin$ captures the frequency of word-context occurrences akin to the PMI, setting non-occurring pairs to zero while retaining positive values for occurring pairs, much like the PPMI matrix. The directional component 
$\Lbmm$, becoming dominant as weights diverge, reflects the explicit sparsity patterns of the context-word probability matrix. 

The above-mentioned conceptual novelties can also be directly applied to the SGNS objective generalizing the result of \citet{levy} to sparse language patterns. We show this in App.~\ref{sec:word2vec}.

The seminal contribution of \citet{levy} has inspired numerous subsequent studies, leveraging the geometric insights of embeddings to uncover linguistic phenomena such as the linear relationships underlying word analogies \citep{allen2019analogies,hashimoto2016word,arora2016latent,ethayarajh2018towards,mikolov2013linguistic}. 
Other works, investigate the \emph{anisotropy} of the learned word/context embeddings for both static models such as Word2Vec \citep{mu2017all} and contextualized models such as pre-trained BERT and GPT-2 \citep{ethayarajh2019contextual, li2020sentence}. This property of the embeddings has been attributed to the imbalanced long-tail distribution of language \citep{gao2019degeneration,li2020sentence}, shifts in the embedding space \citep{bis2021too}, and characteristic of the network architecture \citep{godey2024anisotropy}. Various techniques have been proposed to improve the isotropy of the embeddings, which yields embeddings with better language regularities (e.g., \citep{mu2017all,gao2019degeneration,bis2021too,timkey2021all}).
We envision that our fresh perspective on geometric analysis and our findings could similarly motivate further research in this domain, extending the understanding and application of language model embeddings.

Finally, we note that a \emph{sparse plus low-rank} structure for parameters learned with weighted exponential language models trained on fixed n-gram word/context features has been observed in \citet{hutchinson2012sparse}. Through experiments, the authors decompose the weights learned by optimizing a weighted maximum entropy objective into sparse and low-rank components. They demonstrate that explicitly incorporating sparsity and low-rank constraints into their weighted maximum entropy framework can enhance both performance and interpretability \citep{hutchinson2012sparse, hutchinson2013exceptions}, particularly when training data is limited \citep{hutchinson2015sparse}. However, these components serve different roles compared to our work. In their framework, the low-rank component captures frequent word co-occurrences, while the sparse component records infrequent words and special lexical items. In contrast, our analysis shows via the regularization path framework that the NTP objective produces a low-rank component from a nuclear-norm max-margin problem between in-support and out-support tokens, with the sparse component encoding the frequency of the in-support tokens.\looseness=-1

\subsection{Overparameterization and Implicit bias/regularization of GD}
Recent studies on gradient-based optimization methods, such as SGD and its variants, have focused on overparameterized settings in empirical risk minimization without explicit regularization. This interest stems from observations that deep neural networks, especially in image classification, perform exceptionally well even with minimal or no weight decay. This research area, known as ``implicit bias'' or ``implicit regularization'' of GD emerged to explore which among the numerous potential minimizers in overparameterized settings are preferred by algorithms like GD. Notably, \citet{soudry2018implicit,ji2018risk} showed that for linear logistic regression with one-hot labels, GD aligns with the hard-margin SVM solution, essentially becoming the max-margin classifier. Additionally, \citet{ji2020directional} indicated that GD's optimization path in linear models is analogous to the regularization path, which looks at risk minimizers as regularization diminishes. This path is analytically more accessible and offers a useful framework to study GD's behavior.  In our work, we apply this idea to investigate the regularization path of \NTP training.
These foundational results on implicit bias have spurred further research into stochastic and adaptive gradient methods \citep{nacson2019convergence,pesme2021implicit,sun2022mirror,azizan2021stochastic,cattaneo2023implicit}, and more intricate architectures \citep{lyu2020Gradient,ji2020directional,gunasekar2018characterizing,gunasekar2018implicit}) including recent studies on transformers \citep{tarzanagh2023maxmargin,tarzanagh2023transformers}; for a review, refer to \citet{vardi2021implicit}. These insights have also paved the way for generalization analysis, linking GD's generalization to that of SVM-like solutions. This connection has been explored in high-dimensional contexts where traditional margin-based bounds are inadequate, revealing scenarios where no regularization yields optimal outcomes, a phenomenon known as 'harmless interpolation' or 'benign overfitting.' Most of these findings, whether on optimization or generalization, are confined to the one-hot classification framework. 

In the language modeling literature, several theoretical works suggest that achieving the empirical entropy lower-bound at the pre-training stage results in better down-stream performance \citep{liu2023same, saunshi2020mathematical}. \citet{liu2023same} attributes this better down-stream performance to the implicit bias of the pre-training algorithms, by showing that the optimizers prefer the more transferable models among the ones that achieve the same minimal loss value.
A formal deviation from one-hot to soft-label classification for \NTP is the work by \citep{ntp}, which partially inspired our research. While their analysis, assuming fixed context embeddings, applies to linear models, our work extends this by optimizing both word and context embeddings, leading to a nonlinear model. We discover that this nonlinearity yields a preference towards minimizing the nuclear norm of the logits over the Euclidean norm of the word embedding matrix, as seen in \citep{ntp}. Our theoretical contributions thus expand on their findings into the nonconvex domain. 

\subsection{Neural collapse geometry}
A third area of research closely related to our work is the investigation of neural-collapse (NC) geometry, which seeks to understand the geometry of last-layer features and classifier weights in deep networks optimized in the interpolating regime. Initiated by \citet{NC}, this line of research uncovered a universal phenomenon across various architectures and datasets, where classifiers and last-layer features consistently converge to an equiangular tight frame (ETF). This convergence subsumes  the aggregation of same-class features around their class mean, a phenomenon termed NC. The theoretical underpinnings for these empirical findings were later provided by analyzing the unconstrained features model (UFM) \citep{mixon2020neural,fang2021exploring,zhu2021geometric}, suggesting that highly overparameterized networks can produce arbitrary last-layer features. This concept, reminiscent of the early framework used in Word2Vec models \citep{levy} as discussed above (see also \cite{yang2017breaking}), has since spurred extensive research in diverse contexts.

Among the vast body of work on NC geometry--an exhaustive reference of which is beyond our scope--, two studies stand out for their relevance to our research. \citet{seli} first adapted the UFM to describe geometries diverging from symmetric ETF by moving away from the balanced data assumption, demonstrating that logits optimizing the UFM with vanishing regularization align with a nuclear-norm SVM problem; see also \cite{behnia2023implicit}. This approach also bridges NC geometry with the implicit bias literature previously discussed. Our work extends their principles to the \NTP objective, differing from the one-hot encoding classification focus of their study. We take a similar approach, but this time applied to the \NTP objective. This is unlike their result which only applies to one-hot encoding classification. In fact, our formulation in \ref{eq:ufm-svm} extends their findings, encompassing them as a specific instance when each context is followed by only one token—a scenario less applicable to the linguistic contexts we focus on. Moreover, we demonstrate (see Theorem~\ref{thm:reg path finite}) that the \NTP framework introduces a finite projection component to the regularization path, a feature absent in one-hot settings.

Our findings also resonate to certain extent with the recent multilabel NC geometry study in \citet{multilabel_nc}, where each example, typically an image, is tagged with multiple correct labels, represented as a $k$-hot vector. In contrast, our \NTP framework, modeled as soft-label classification, associates each context with a probability vector. This distinction makes our framework more versatile, applicable beyond the restrictive case where soft-label classification aligns with multi-label classification—specifically, when all relevant tokens are equally likely. There are also crucial differences in the outcomes. Their analysis assumes an equal number of training samples of multiplicity $k=1$ for each category, a condition that, in our context, would necessitate every vocabulary word to be followed by an identical number of contexts that are only followed by that word—a significantly limiting constraint. Our findings, however, are not bound by this assumption. 

In an attempt to extend this line of work to language setups, contemporaneous work \cite{wu2024linguistic}, through experiments, suggests a correlation between the degree of convergence to the NC geometry and the validation loss for different language models trained on TinyStories for a few epochs. Different to us, they implicitly treat the NTP objective as one-hot classification and suggest that better \emph{per-class} (i.e., next-token) collapse of the context embeddings leads to improved validation loss. In contrast, our analysis reveals a more intricate finding: within a sparse soft-label formulation of NTP training with language, it is not \emph{per-class} collapse that minimizes the training objective. Instead, the objective is minimized when context embeddings with the same \emph{support set} collapse after being projected onto \emph{subspace} $\Ts_\perp$, which is defined by the sparsity pattern of the context-word co-occurrence matrix (see Sec.~\ref{sec:embeddings geo from logits}). Our approach also comes with an analysis framework rather than relying solely on experiments.

\subsection{Connections to Word2Vec}\label{sec:word2vec}
We apply our geometry analysis to Word2Vec. Consider the following Skip-gram objective: 
    \[
    \frac{1}{T}\sum_{t\in[T]}\sum_{-c\leq j\leq c, j\neq0} \log p\big(z_{t+j}|z_{t}\big)=\frac{1}{T}\sum_{t\in[T]}\sum_{-c\leq j\leq c, j\neq0} \log p\big(\sfti{z_{t+j}}{\Hb\w_{z_t}}\big)\,,
    \]
    where $\Hb\in\R^{V\times d}$ denotes the embedding matrix of \emph{context} words $z_{t+j}, j\neq 0$ and $\W\in\R^{V\times d}$ denotes the embedding matrix of \emph{target words} $z_t$.
    As presented in \cite{mikolov2013linguistic}, skip-gram actually implements a heuristic of this objective that aims to simplify the computation of the normalizing factor in the softmax. Second, consider the CBOW objective:
    \[
    \frac{1}{T}\sum_{t\in[T]} \log p\big(z_{t}|\{z_{t+j}\}_{\{|j|\in[c],j\neq 0\}}\big)=\frac{1}{T}\sum_{t\in[T]} \log p\left(\sfti{z_{t}}{\W\cdot\frac{1}{2c-1}\sum_{-c\leq j\leq c, j\neq0}\hb_{z_{t+j}}}\right)\,,
    \]
    where $\frac{1}{2c-1}\sum_{-c\leq j\leq c, j\neq0}\hb_{z_{t+j}}$ is the continuous-bags-of-word embedding of the context $\{z_{t+j}\}_{\{|j|\in[c],j\neq 0\}}$. 

    Suppose the CBOW model with context consisting of only the previous word. To keep notation consistent with the rest of the paper, let $j\in[m]=[V]$ denote index of \emph{distinct} contexts (aka previous words) and $z\in[V]$ denote index of (next) words. Note the embedding matrix $\Hb\in\R^{d\times V}$ is now of same dimension as $\W^\top$. Let $\pih_j$ denote the marginal of context words and $\ph_{j,z}$ denote the conditional probability of word $z$ following context word $j$. Then, the CBOW objective becomes equivalent to
    \[
\sum_{j\in[V]}\pih_j\sum_{z\in[V]}\ph_{j,z}\log\left(\Sc_{z}\left(\W\hb_j\right)\right)\,.
    \]
    This now fits exactly our framework and the results for NTP apply mutatis mutandis. 

    In \cite{mikolov2013linguistic}, instead of minimizing the CE objective, they actually use the Negative-Sampling (SGNS) objective. \citet{levy} rewrites the SGNS objective as follows:
    \[
    \sum_{j\in[m]}\sum_{z\in[V]} -\Pr(j,z) \log\left(\sigma(\w_z^\top\hb_j)\right)-k\Pr(j)\Pr(w)\log\left(1-\sigma(\w_z^\top\hb_j)\right)\,,
    \]
    where $\Pr(j,z):=\pih_j\ph_{j,z}$, $\Pr(j):=\pih_j$, $\Pr(z)=\sum_{j\in[m]}\pih_j\ph_{j,z}$and $k\in\mathbb{N}$ is a hyperparameter. Defining 
\[q_{j,z}:=\frac{\frac{\Pr(j,z)}{\Pr(j)\Pr(z)}}{k+\frac{\Pr(j,z)}{\Pr(j)\Pr(z)}}\in[0,1]\,,\]
we can rewrite the SGNS objective as 
\[
    \sum_{j\in[m]}\sum_{z\in[V]} \Pr(j)\Pr(w)\left(k+\frac{\Pr(j,z)}{\Pr(j)\Pr(z)}\right)\left(-q_{j,z}  \log\left(\sigma(\w_z^\top\hb_j)\right)-(1-q_{j,z})\log\left(1-\sigma(\w_z^\top\hb_j)\right)\right)\,.
    \]
    For each $(j,z)$-pair, the rightmost term in the expression above is lower bounded by $\ent(q_{j,z}):=-q_{j,z}\log(q_{j,z}) - (1-q_{j,z})\log(1-q_{j,z}).$ 

    By following the same arguments developed for \NTP, it can be checked that the lower bound is attained if and only if the following conditions hold:
    \begin{enumerate}
        \item There exists logit matrix $\Lb_1\in\R^{V\times m}$ (of rank $\leq d$) such that 
        \[
        \Lb_1[z,j]=\text{PMI}(z,j)-\log(k) = \log\left(\frac{\Pr(j,z)}{\Pr(j)\Pr(z)}\right) - \log(k),\quad \forall j\in[m], z\in\Sc_j. \].
        \item 
        There exists logit matrix $\Lb_2\in\R^{V\times m}$ (of rank $\leq d$) such that 
        \begin{subequations}
        \begin{align*}
        &\Lb_2[z,j]=0,\quad \forall j\in[m], z\in\Sc_j,
        \\
        &\Lb_2[v,j]<0,\quad \forall j\in[m], v\notin\Sc_j.
        \end{align*}
        \end{subequations}
    \end{enumerate}

    The logit matrix $\Lb_1$ is the shifted PMI matrix as also characterized in \citet[Eq.~(7)]{levy}. This component plays a similar role to the finite component $\Lbin$ in the NTP setup of our paper. The second component, which is only non-zero if there exists a target-context pair $(z,j)$ with zero co-occurrence in the training set ($\Pr(j,z)=0)$, specifies the directional component, akin to the role of $\Lbmm$ in NTP. This component is absent in the analysis of \citet{levy}, as they implicitly assume no zero entries in the co-occurence matrix when setting the derivative of the loss to zero. However, when the co-occurrence matrix is sparse, zero derivation of the loss is not achieved at any finite solution. Instead, $\Lb_2$ identifies the infinite direction that drives the loss to its lower-bound. \citet{levy} address sparsity indirectly by discussing alternative matrix factorizations such as shifted positive-PMI (PPMI). 
\section{Proofs}
\subsection{Centering of word embeddings}
\begin{lemma}\label{lem:W centered}
    For any $\lambda>0$, any minimizer $\What$ of \ref{eq:ufm} satisfies $\ones_V^\top\What=0$. That is, word embeddings are centered.
\end{lemma}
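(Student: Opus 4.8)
The plan is to exploit the strong convexity of the ridge penalty together with the shift-invariance of the softmax map. First I would observe that the cross-entropy term $\CE(\W\Hb)$ depends on $\W$ only through the differences $\w_z - \w_{z'}$ of its rows; equivalently, for any vector $\cb \in \R^d$, replacing $\W$ by $\W + \ones_V \cb^\top$ leaves $\sfti{z}{\W\hb_j}$ unchanged for every $j$, because adding a constant (namely $\cb^\top\hb_j$) to every coordinate of a logit vector does not alter its softmax. Hence $\CE((\W+\ones_V\cb^\top)\Hb) = \CE(\W\Hb)$ for all $\cb$.

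Next I would use this freedom to show any minimizer must be centered. Suppose $\What$ is a minimizer with $\ones_V^\top\What = \vct{r}^\top \neq \zeros$. Consider the centered matrix $\Wt := (\Id_V - \tfrac1V\ones_V\ones_V^\top)\What = \What - \tfrac1V\ones_V\vct{r}^\top$, which is of the form $\What + \ones_V\cb^\top$ with $\cb = -\vct{r}/V$. By the shift-invariance just noted, $\CE(\Wt\Hb) = \CE(\What\Hb)$, so the cross-entropy part of the objective is unchanged. For the penalty, I would compute $\|\What\|^2 = \|\Wt\|^2 + \tfrac1V\|\vct{r}\|^2$ using the orthogonal (Pythagorean) decomposition of $\What$ into its column-mean component $\tfrac1V\ones_V\vct{r}^\top$ and the centered remainder $\Wt$ — these are orthogonal in Frobenius inner product since $\ones_V^\top\Wt = \zeros$. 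Therefore $\|\Wt\|^2 < \|\What\|^2$ whenever $\vct{r}\neq\zeros$, so $(\Wt,\Hb)$ achieves strictly smaller objective than $(\What,\Hb)$, contradicting optimality. Hence $\vct{r} = \zeros$, i.e. $\ones_V^\top\What = \zeros$.

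The argument is essentially self-contained and the only thing that could be considered an obstacle is making sure the Pythagorean identity for the Frobenius norm is stated cleanly (it is immediate: $\langle \tfrac1V\ones_V\vct{r}^\top, \Wt\rangle = \tfrac1V\vct{r}^\top\Wt^\top\ones_V = 0$). One should also note that $\Hb$ is held fixed throughout — we are only perturbing $\W$ — so no interaction with the $\|\Hb\|^2$ term arises. This completes the proof; there is no genuinely hard step, which is expected for a structural lemma of this kind.
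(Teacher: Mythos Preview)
Your proof is correct and is exactly the standard argument the paper has in mind: the paper omits the proof entirely, citing \citet[Lem.~E.1]{seli}, and the argument there is precisely the shift-invariance of softmax combined with the strict decrease of the ridge penalty under centering that you wrote out. Nothing to add.
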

\begin{proof}
    The proof is the same as \citet[Lem. E.1]{seli} and is thus omitted. 
\end{proof}

There is two consequences of Lemma \ref{lem:W centered}:
\begin{enumerate}
    \item The word embedding matrix $\W$ is centered motivating the use of the \emph{centered} support matrix $\smatbar$ (rather than simply $\smat$) in Proxy \ref{P proxy}.
    \item The constraint $\ones^\top\W=0$ can be added to \ref{eq:ufm} without changing its optimal set. Thus, the non-convex rank constraint can be ignored, making the problem convex, provided $d\geq V-1$. Finally, note this also implies the constraint $\ones^\top\Lb=0$ can be added without any change in Eq. \eqref{eq:ufm relax}. \end{enumerate}


\subsection{Proof of Lemma \ref{lem:relax}}

We prove a slightly stronger version of Lemma \ref{lem:relax} which assumes the additional constraint $\ones^\top\Lb=0$ in Eq. \ref{eq:ufm}; see Lemma \ref{lem:W centered}. The proof follows identically if we ignore this centering property.

As mentioned in the main body, the proof of the lemma essentially relies on the well-known fact that
\[
\|\Lb\|_* = \min_{\W\Hb=\Lb}\frac{1}{2}\|\W\|^2+\frac{1}{2}\|\Hb\|^2\,.
\]

Firstly, fix any $\Lb$ such that $r:=\rank{\Lb}\leq d$ and $\ones_V^\top\Lb=0$. Let $\Lb=\Ub\Sigmab\Vb^\top$ be its SVD with $\Sigmab\in\R^{r\times r}$. Choose
\[
\hat\W=\Ub\Sigmab^{1/2}\Rb\qquad\text{and}\qquad\hat\Hb=\Rb^\top\Sigmab^{1/2}\Vb^\top\,
\]
for some partial orthonormal matrix $\Rb\in\R^{r\times d}$ (i.e., $\Rb^\top\Rb=\Id_{d}$). It then holds clearly that 
\[
\hat\W\hat\Hb=\Lb\qquad\text{and}\qquad \|\hat\W\|^2+\|\hat\Hb\|^2 = \tr\left({\Ub\Sigmab\Ub^\top}\right) + \tr\left({\Vb\Sigmab\Vb^\top}\right) = 2\tr\left({\Sigmab}\right) = 2\|\Lb\|_*,
\]
Thus, 
\[
\CE(\Lb)+\la\|\Lb\|_*=\CE(\hat\W\hat\Hb)+\frac{\la}{2}\left(\|\hat\W\|^2+\|\hat\Hb\|^2\right)\,.
\]
Because we can apply this for any $\Lb$, it shows that\footnote{Here, use the fact that the minimizer of the LHS of \eqref{eq:lem1 lower bound} is attained because of coercivity of the nuclear norm.} 
\begin{align}\label{eq:lem1 lower bound}
\min_{\stackrel{\Lb\in\rankset_d}{\ones^\top\Lb=0}}
\CE(\Lb)+\la\|\Lb\|_* \geq \min_{\W,\Hb}\CE(\W\Hb)+\frac{\la}{2}\left(\|\W\|^2+\|\Hb\|^2\right)\,,
\end{align}
where we define $\rankset_d:=\{\Lb\in\R^{V\times m}\,:\,\rank{\Lb}\leq d\}$ to be the manifold of rank-constrained matrices.  \looseness=-1

Secondly, consider any $\W\in\R^{V\times d},\Hb\in\R^{d\times m}$ such that $\ones_V^\top\W=0$. Let $\Lb=\W\Hb$ and denote $\Lb=\Ub\Sigmab\Vb^\top$ its SVD. We then have $\ones^\top\Lb=0$ and 
\begin{align}
\CE(\Lb)+\la\|\Lb\|_* &=\CE(\W\Hb)+\la\,\tr\left({\Ub^\top\W\Hb\Vb}\right)\nn
\\
&\leq\CE(\W\Hb)+\frac{\la}{2}\left(\|\Ub^\top\W\|^2+\|\Hb\Vb\|^2\right)\nn
\\
&\leq\CE(\W\Hb)+\frac{\la}{2}\left(\|\W\|^2+\|\Hb\|^2\right),\label{eq:equalities}
\end{align}
where in the first line we used $\|\Lb\|_*=\tr\left(\Sigmab\right)=\tr\left(\Ub^\top\Lb\Vb\right)$, in the second line we used Cauchy-Schwartz, and, in the third line we used that $\|\Ub\|_2\leq1$, $\|\Vb\|_2\leq1$. 
Thus, this proves
\begin{align}\nn
 \min_{\W,\Hb}\CE(\W\Hb)+\frac{\la}{2}\left(\|\W\|^2+\|\Hb\|^2\right) &= \min_{\stackrel{\W,\Hb}{\ones^\top\W=0}}\CE(\W\Hb)+\frac{\la}{2}\left(\|\W\|^2+\|\Hb\|^2\right)
 \\
 &\geq \min_{\stackrel{\Lb\in\rankset_d}{\ones^\top\Lb=0}}\CE(\Lb)+\la\|\Lb\|_*\,, \label{eq:lem1 upper bound}
\end{align}
where the first equality follows by Lemma \ref{lem:W centered}.
Moreover, note that equalities in \eqref{eq:equalities} hold if and only if the following three conditions hold
\begin{align*}
    &\W\Hb=\Lb=\Ub\Sigmab\Vb^\top\implies\Ub^\top\W\Hb\Vb=\Sigmab\\
    &\Ub^\top\W=\Vb^\top\Hb^\top\\
    &\|\Ub^\top\W\|^2=\|\W\|^2, \|\Vb^\top\Hb^\top\|^2=\|\Hb\|^2\,.
\end{align*}
The first two conditions combined give
\[\Ub^\top\W\W^\top\Ub=\Vb^\top\Hb^\top\Hb\Vb=\Sigmab\,.\]
Thus, $\W\W^\top=\Ub\Sigmab\Ub^\top+\Ub_\perp\Sigmab_\perp\Ub_\perp^\top$ for some non-negative diagonal matrix $\Sigmab_\perp$ and $\Ub_\perp$ the orthogonal complement of $\Ub$. But, the third condition requires
\[
\tr\left(\Ub^\top\W\W^\top\Ub\right) = \tr\left(\W\W^\top\right) \implies \tr\left(\Sigmab\right) = \tr\left(\Sigmab\right) + \tr\left(\Sigmab_\perp \right)\implies\tr\left(\Sigmab_\perp \right)=0\implies \Sigmab_\perp =0.
\]
This gives 
\begin{align}
    \W\W^\top=\Ub\Sigmab\Ub^\top\label{eq:lem1 W}\,,
\end{align} 
and similarly
\begin{align}  \Hb^\top\Hb=\Vb\Sigmab\Vb^\top\label{eq:lem1 Hb}\,.
\end{align}

Combining Eqns. \eqref{eq:lem1 lower bound} and \eqref{eq:lem1 upper bound} proves that \begin{align}\nn
 \min_{\W,\Hb}\CE(\W\Hb)+\frac{\la}{2}\left(\|\W\|^2+\|\Hb\|^2\right) =\min_{\stackrel{\Lb\in\rankset_d}{\ones^\top\Lb=0}}\CE(\Lb)+\la\|\Lb\|_*\,.
\end{align}

Moreover, the requirements for equalities in Eq. \eqref{eq:equalities} established in Eqs. \eqref{eq:lem1 W} and \eqref{eq:lem1 Hb} show the desired about the form of gram matrices $\W_\la\W_\la^\top$ and $\Hb_\la\Hb_\la^\top$ as stated in the lemma.

\subsection{Proof of Theorems \ref{thm:reg path} and \ref{thm:reg path finite}}

The proof involves several components, and so, we proceed in steps. 

\subsubsection{Notations recap}\label{app:notations recap}
We begin by recaping the essential notation used throughout this section.

Recall \ref{eq:ufm-svm}: The constraints in \Eqref{eq:svm equalities} require in-support logits to be equal to each other for every context. For fixed $j\in[m]$,  the set of equality constraints for all $z\neq z'\in\Sc_j$  is equivalent to the set of the same constraints for any anchor $z_j\in\Sc_j$ and $z'\neq z_j\in\Sc_j$. That is, there is an effective total of $S_j-1$ linearly independent constraints for each $j\in[m]$. 
\begin{definition}[$\Ebin{j}$]
    For each $j\in[m]$ fix an anchor index $z_j\in\Sc_j$ and define matrix  $\Ebin{j}\in\R^{(S_j-1)\times V}$ with $S_j-1$ independent rows $(\eb_{z_j}-\eb_z)^\top, z\neq z_j, z\in\Sc_j$.
\end{definition}
With this definition, we can rewrite \Eqref{eq:svm equalities} since
\[
\Ebin{j}\ellbb_j = \textbf{0}\,, \forall j\in[m] \,\,\Leftrightarrow\,\, \Lb=[\ellbb_1,\ldots,\ellbb_m] \text{ satisfies  \Eqref{eq:svm equalities}} 
\]

The inequality constraints in \Eqref{eq:svm inequalities}, impose the requirement that in-support logits must be greater than out-of-support logits. Considering \Eqref{eq:svm equalities}, we can again select any anchor $z_j \in \Sc_j$ and reformulate \Eqref{eq:svm inequalities} as $\Ebout{j}\ellbb_j \geq \textbf{1}, \, j\in[m]$.
Here, $\Ebout{j} \in \mathbb{R}^{(V-S_j) \times V}$ has $V-S_j$ independent rows $(\eb_{\z_j}-\eb_v)^\top, v \notin \Sc_j$. In total, there are  $V-1$ constraints per context in \ref{eq:ufm-svm}.\looseness=-1

With the above definition of the matrices $\Ebin{j}\in\R^{(S_j-1)\times V}, j\in[m]$, we can define the subspace $\Ts$ of $V\times m$ matrices that have their $j$-th column living on the range space $\Rc(\Ebin{j}^\top)$ for all $j\in[m]$:
\begin{align}\notag
    \Ts:=\left\{\Lb\in\R^{V\times m}\,:\,
    \ellbb_j\in\Rc(\Ebin{j}^\top)\,,j\in[m]\,
    \right\}=\operatorname{span}\left(\left\{
    (\eb_z-\eb_{z'})^\top\,\ebt_j
    \,:\,z\neq z'\in\Sc_j\,,j\in[m]\,
    \right\}\right)\,.
\end{align}
Note that this set depends on the support matrix $\smat$, but we suppress the dependence to simplify notation. Let
$\Qcs:\R^{V\times m}\rightarrow\R^{V\times m}$ be the projection operator on this subspace. For concreteness, note that this can be explicitly defined as the matrix operator projecting each column $\ellbb_j$ of $\Lb$ onto $\Rc(\Ebin{j}^\top)$, i.e., \[\Qcs(\Lb)=\begin{bmatrix}
\Qcs(\Lb)_1 & \Qcs(\Lb)_2 & \cdots & \Qcs(\Lb)_m   
\end{bmatrix}\,\,\text{with}\,
\Qcs(\Lb)_j:=\Ebin{j}^\top(\Ebin{j}\Ebin{j}^\top)^{-1}\Ebin{j}\ellbb_j, j\in[m].
\]
 Note that $\Qcs$ is a projection operator since it is clearly linear and $\Qcs(\Qcs(\Lb))=\Qcs(\Lb)$. Also, it is clear that $\Qcs(\Lb)\in\Ts$ for all $\Lb\in\R^{V\times m}$. Operationally, note that $\Qcs(\Lb)$ has zero-entries on off-support elements. We will use this property indiscriminately throughout this section.

 Finally, define the  orthogonal complement to $\Ts$ as
\begin{align}
\Ts_\perp:=\left\{\Lb\in\R^{V\times m}\,:\,
     \ellbb_j\in\Nc(\Ebin{j})\,, j\in[m]\,
    \right\}\,=\left\{\Lb\in\R^{V\times m}\,:\,
    \Ebin{j} \ellbb_j=0\,, j\in[m]\,
    \right\}\,.
\end{align}
Clearly, $\R^{V\times m}=\Ts\oplus\Ts_\perp\,.$ Operationally, note that $\Lb\in\Ts_\perp$ implies that on-support elements of every column are identical, that is,
$\Lb[z,j]=\Lb[z',j]$ for all $z\in\Sc_j$ and $j\in[m]$. In other words, it holds  \begin{align}
    \label{eq:F perp operational}
\Lb\in\Ts_\perp \,\,\Longleftrightarrow \,\, \Lb \text{ satisfies \eqref{eq:svm equalities}},
\end{align}
which we will use without further explicit reference.

We also let $\Qcsperp$ denote the projection operator to $\Ts_\perp$.

\subsubsection{When does the NTP loss attain its entropy lower-bound?}
For word/context embeddings $\W,\Hb$ and logits $\Lb=\W\Hb$, recall the NTP loss is written as
\begin{align}\label{eq:CE_T}
    \CE(\W\Hb)= \CE(\Lb) = \sum_{j\in[m]}\pih_j\sum_{z\in\Sc_j}\ph_{j,z}\log\left(1+\sum_{z'\neq z\in\Sc_j}e^{-(\Lb[z,j]-\Lb[z',j])}+\sum_{v\not\in\Sc_j}e^{-(\Lb[z,j]-\Lb[v,j])}\right)\,,
\end{align}
and is lower bounded by the  empirical $T$-gram entropy  of the data \citep{shannon1948mathematical}, i.e., for all $\thetab'$:
$\CE(\W\Hb)\geq \ent:=\hat{\E}_{(\x,z)\sim\Tc_n}\left[-\log\left(\hat{p}(z|\x)\right)\right]\,.$

We now state explicit conditions on the logit matrix $\Lb$ for which the lower bound is attained by a softmax model $\Lb=\sft{\W\Hb}$ with unconstrained context embeddings $\Hb$. Due to sparsity of the conditional probability matrix $\Pbb$, the lower bound is \emph{not} attained for any
 finite logit matrix, but it can be reached asymptotically provided the conditions stated in the proposition below hold. 
 
 The statement below is a slight extension of \citet[Prop.~1]{ntp}.

 \begin{proposition}
 The NTP objective of a softmax model $\sft{\W\Hb}$ with $\W\in\R^{V\times d}, \Hb\in\R^{d\times m}$ can reach the empirical entropy  lower-bound if the following three conditions hold simultaneously
 \begin{enumerate}
     \item \textbf{[\NTPH-compatibility]} There exists logit matrix $\Lb_1\in\R^{V\times m}$ such that 
    \begin{align}\label{eq:NTP-comp}
        \forall j\in[m], z\neq z'\in\Sc_j\,:\, \Lb_1[j,z]-\Lb_1[j,z'] = \log\left(\frac{\ph_{j,z}}{\ph_{j,z'}}\right)\,.
    \end{align}
    \item \textbf{[\NTP-separability]} There exists logit matrix $\Lb_2\in\R^{V\times m}$ that is feasible in \ref{eq:ufm-svm}.
    \item \textbf{[Rank constraint]} For all arbitrarily large positive scalars $\rho$, 
    \[
    \rank{\Lb_1+\rho\cdot\Lb_2}\leq d.
    \]
 \end{enumerate}
 Concretely, if the above conditions hold, then for all arbitrarily large $\rho$, there exist $\W_\rho\in\R^{V\times d}$ and $\Hb_\rho\in\R^{d\times m}$ such that \[
\lim_{\rho\rightarrow\infty}\CE(\W_\rho\Hb_\rho)=\ent\,.
 \]
 \end{proposition}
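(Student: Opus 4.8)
The plan is to construct explicit word and context embeddings $\W_\rho, \Hb_\rho$ from the two logit matrices $\Lb_1, \Lb_2$ promised by the hypotheses and then show the NTP objective evaluated at these embeddings converges to $\ent$ as $\rho\to\infty$. First, I would fix a large scalar $\rho$ and set $\Lb^{(\rho)} := \Lb_1 + \rho\cdot\Lb_2$. By the rank constraint hypothesis, $\rank{\Lb^{(\rho)}}\leq d$, so taking an SVD $\Lb^{(\rho)} = \Ub\Sigmab\Vb^\top$ with inner dimension $\leq d$, I can pad and factor it as $\W_\rho := \Ub\Sigmab^{1/2}$ and $\Hb_\rho := \Sigmab^{1/2}\Vb^\top$ (up to an orthonormal embedding into $\R^d$), so that $\W_\rho\Hb_\rho = \Lb^{(\rho)}$. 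This reduces the claim to showing $\lim_{\rho\to\infty}\CE(\Lb^{(\rho)}) = \ent$, a statement purely about logit matrices.

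For the limit itself, I would work from the expression \eqref{eq:CE_T} for $\CE(\Lb)$ as a sum over $j\in[m]$ and $z\in\Sc_j$ of $\pih_j\ph_{j,z}\log(1+\sum_{z'\neq z\in\Sc_j}e^{-(\Lb[z,j]-\Lb[z',j])}+\sum_{v\notin\Sc_j}e^{-(\Lb[z,j]-\Lb[v,j])})$. Plugging in $\Lb^{(\rho)} = \Lb_1 + \rho\Lb_2$, I split the inner summand into the in-support part and the off-support part:
\begin{itemize}[leftmargin=*,noitemsep]
\item For $z' \neq z$ both in $\Sc_j$, the equality constraints \eqref{eq:svm equalities} satisfied by $\Lb_2$ give $\Lb_2[z,j]-\Lb_2[z',j]=0$, so $\Lb^{(\rho)}[z,j]-\Lb^{(\rho)}[z',j] = \Lb_1[z,j]-\Lb_1[z',j] = \log(\ph_{j,z}/\ph_{j,z'})$ by \eqref{eq:NTP-comp}, independent of $\rho$; hence $e^{-(\Lb^{(\rho)}[z,j]-\Lb^{(\rho)}[z',j])} = \ph_{j,z'}/\ph_{j,z}$.
\item For $v\notin\Sc_j$, the inequality constraints \eqref{eq:svm inequalities} give $\Lb_2[z,j]-\Lb_2[v,j]\geq 1$, so $\Lb^{(\rho)}[z,j]-\Lb^{(\rho)}[v,j] = (\Lb_1[z,j]-\Lb_1[v,j]) + \rho(\Lb_2[z,j]-\Lb_2[v,j]) \geq (\Lb_1[z,j]-\Lb_1[v,j]) + \rho$, which tends to $+\infty$; hence these exponential terms vanish as $\rho\to\infty$.
\end{itemize}
Therefore the inner sum converges to $1 + \sum_{z'\neq z\in\Sc_j}\ph_{j,z'}/\ph_{j,z} = \sum_{z'\in\Sc_j}\ph_{j,z'}/\ph_{j,z} = 1/\ph_{j,z}$, using $\sum_{z'\in\Sc_j}\ph_{j,z'}=1$. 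Taking $\log$ and summing with weights $\pih_j\ph_{j,z}$ yields $\sum_{j}\pih_j\sum_{z\in\Sc_j}\ph_{j,z}\log(1/\ph_{j,z}) = -\sum_j\pih_j\sum_{z\in\Sc_j}\ph_{j,z}\log\ph_{j,z} = \ent$, which is exactly the entropy lower bound.

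To make the limit interchange with the finite double sum rigorous, I note there are only finitely many terms (indexed by $j\in[m]$, $z,z'\in\Sc_j$, $v\notin\Sc_j$), each is continuous in $\rho$, the in-support contributions are constant in $\rho$, and the off-support exponentials decrease monotonically to zero once $\rho$ is large enough that $\Lb_2[z,j]-\Lb_2[v,j]\geq 1 > 0$ dominates; so termwise convergence suffices and no dominated-convergence machinery is needed. I expect the main obstacle to be purely bookkeeping: carefully tracking that the SVD-based factorization genuinely produces matrices of the right shapes in $\R^{V\times d}$ and $\R^{d\times m}$ (the rank hypothesis is exactly what guarantees this), and being careful that the cancellation in the in-support terms uses \emph{both} \eqref{eq:svm equalities} for $\Lb_2$ and \eqref{eq:NTP-comp} for $\Lb_1$ simultaneously — the conclusion genuinely needs all three hypotheses, and the proof should make visible where each one enters.
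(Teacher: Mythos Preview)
Your proposal is correct and follows essentially the same route as the paper: set $\Lb^{(\rho)}=\Lb_1+\rho\Lb_2$, use the rank hypothesis to factor it as $\W_\rho\Hb_\rho$, then show $\CE(\Lb^{(\rho)})\to\ent$ by splitting the in-support terms (which reduce to $\ph_{j,z'}/\ph_{j,z}$ via \eqref{eq:svm equalities} and \eqref{eq:NTP-comp}) from the off-support terms (which vanish via \eqref{eq:svm inequalities}). The only cosmetic difference is that the paper packages the off-support contribution into an explicit upper bound $\CE(\Lb^{(\rho)})\leq\ent+Ve^{2\|\Lb_1\|_2}e^{-\rho}$, whereas you invoke termwise convergence on a finite sum; both are equally valid for the stated conclusion.
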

\begin{proof}
    It suffices to prove that for $\Lb_\rho:=\Lb_1+\rho\cdot\Lb_2$:
    \begin{align}
\lim_{\rho\rightarrow\infty}\CE(\Lb_\rho)=\ent. \label{eq:interpolation logits}       
    \end{align}
To see that this is sufficient, note from condition \emph{C} of the theorem that for all arbitrarily large $\rho$, $\Lb_\rho$ admits a factorization $\Lb_\rho=\W_\rho\Hb_\rho$ where $\W_\rho\in\R^{V\times d}$ and $\Hb_\rho\in\R^{d\times m}$. Since this holds for all large $\rho$, Eq. \eqref{eq:interpolation logits} implies the desired.

Thus, we now prove Eq. \eqref{eq:interpolation logits}. For any $j\in[m]$, it is easy to check, using conditions \emph{A} and \emph{B} of the theorem, that the following two statements are true for all $\rho>0$:
\begin{enumerate}
    \item For all $z\neq z'\in\Sc_j$:
    \begin{align}
\Lb_\rho[z,j] - \Lb_\rho[z',j] &= \left(\Lb_1[z,j] - \Lb_1[z,j]\right)+ \rho\cdot\left(\Lb_2[z,j] - \Lb_2[z',j]\right)          \nn\\  
&=\log(\ph_{j,z}/\ph_{j,z'})\nn
    \end{align}
    \item For all $z\in\Sc_j,v\not\in\Sc_j$:
    \begin{align}
\Lb_\rho[z,j] - \Lb_\rho[v,j] &= \left(\Lb_1[z,j] - \Lb_1[v,j]\right)+ \rho\cdot\left(\Lb_2[z,j] - \Lb_2[v,j]\right)          \nn\\  
&\geq \left(\Lb_1[z,j] - \Lb_1[v,j]\right)+\rho \nn
\\  
&\geq -2\|\Lb_1\|_2+\rho. \nn
    \end{align}
\end{enumerate}
In the last inequality above, we used the loose bound $\|\Lb_1\|_2\geq \max_{j\in[m],z\in[V]}|\Lb_1[z,j]|$ for the spectral norm of $\Lb_1$.

Using these in Eq. \eqref{eq:CE_T} gives:
\begin{align}
    \CE(\Lb_\rho)&\leq \sum_{j\in[m]}\pih_j\sum_{z\in\Sc_j}\ph_{j,z}\log\Big(1+\sum_{z'\neq z\in\Sc_j}\frac{\ph_{j,z'}}{\ph_{j,z}}+V e^{-\rho}e^{-2\|\Lb_1\|_2}\Big)\nn
   \\ &=
\sum_{j\in[m]}\pih_j\sum_{z\in\Sc_j}\ph_{j,z}\log\Big(\frac{1}{\ph_{j,z}}+V e^{-\rho}e^{-2\|\Lb_1\|_2}\Big)\nn
\\ &=
\sum_{j\in[m]}\pih_j\sum_{z\in\Sc_j}\ph_{j,z}\log\Big(\frac{1}{\ph_{j,z}}\Big)+\sum_{j\in[m]}\pih_j\sum_{z\in\Sc_j}\ph_{j,z}\log\Big(1+\ph_{j,z}V e^{-\rho}e^{-2\|\Lb_1\|_2}\Big)\nn\,.
\\ &\leq
\ent+V e^{-2\|\Lb_1\|_2} e^{-\rho}\,.\label{eq:LR clean bound}
\end{align}
The last inequality uses $\log(1+x)\leq x, x>0$.

Clearly the bound above converges to $\ent=\sum_{j\in[m]}\pih_j\sum_{z\in\Sc_j}\ph_{j,z}\log\Big(\frac{1}{\ph_{j,z}}\Big)$ as $\rho\rightarrow\infty$. From this and $\CE(\Lb_\rho)\geq\ent$, the desired Eq. \eqref{eq:interpolation logits} follows. This completes the proof. 
\end{proof}

Note that the first two conditions of the theorem are easy to satisfy.

On the one hand, Eq. \eqref{eq:NTP-comp} always has a solution. For later use, we also note that the solution is unique on the subspace $\Ts$ and we denote this solution as $\Lbin$.  This is easy to check: for any $j\in[m]$, $\ellbb_j$ is a solution of Eq. \eqref{eq:NTP-comp} if and only if it takes the form $\ellbb_j=\Ebin{j}^\top\left(\Ebin{j}\Ebin{j}^\top\right)^{-1}\ab_j+\vb, \,\vb\in\Nc\left(\Ebin{}\right)$ where $\ab_j\in\R^{S_j-1}$ has entries $\log(\ph_{j,z}/\ph_{j,z'}), z'\neq z\in\Sc_j$. Thus, $\Lbin$ is explicitly defined as follows.
\begin{lemma}[$\Lbin$]\label{lem:Lin def}
    Let $\Lbin$ be the unique solution of  \Eqref{eq:NTP-comp} on the subspace $\Ts$. Then, $\Lbin$ has columns 
    \[
\ellbb_{\rm{j}}^{\rm{in}}=\Ebin{j}^\top\left(\Ebin{j}\Ebin{j}^\top\right)^{-1}\ab_j, \,\,\forall j\in[m],
    \]
    where $\ab_j\in\R^{S_j-1}$ has entries $\log(\ph_{j,{z_j}}/\ph_{j,z'})$ for $z'\neq z_j\in\Sc_j$.
\end{lemma}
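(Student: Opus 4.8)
The plan is to treat the lemma as a column-by-column exercise in finite-dimensional linear algebra, exploiting that both the constraint family \eqref{eq:NTP-comp} and the subspace $\Ts$ decouple over the contexts $j\in[m]$.

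First I would fix $j\in[m]$ with anchor $z_j\in\Sc_j$ and reduce the $\binom{S_j}{2}$ pairwise equations $\Lb[z,j]-\Lb[z',j]=\log(\ph_{j,z}/\ph_{j,z'})$, $z\neq z'\in\Sc_j$, to the $S_j-1$ anchored ones $\Lb[z_j,j]-\Lb[z,j]=\log(\ph_{j,z_j}/\ph_{j,z})$, $z\in\Sc_j\setminus\{z_j\}$: the reduction is immediate by restriction, and the reverse direction follows by subtracting two anchored equations and using $\log(\ph_{j,z_j}/\ph_{j,z'})-\log(\ph_{j,z_j}/\ph_{j,z})=\log(\ph_{j,z}/\ph_{j,z'})$ (all ratios are well defined since $\ph_{j,z}>0$ for $z\in\Sc_j$). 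In matrix form, this anchored system is exactly $\Ebin{j}\,\ellbb_j=\ab_j$, with $\ab_j\in\R^{S_j-1}$ the vector of entries $\log(\ph_{j,z_j}/\ph_{j,z})$, $z\in\Sc_j\setminus\{z_j\}$.

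Next, membership $\Lb\in\Ts$ is precisely $\ellbb_j\in\Rc(\Ebin{j}^\top)$ for every $j$, so I would write $\ellbb_j=\Ebin{j}^\top\cb_j$ for some $\cb_j\in\R^{S_j-1}$, turning the equation into $\Ebin{j}\Ebin{j}^\top\cb_j=\ab_j$. A one-line argument shows the rows $(\eb_{z_j}-\eb_z)^\top$ of $\Ebin{j}$ are linearly independent (a vanishing combination, read off at the off-anchor coordinates, forces every coefficient to vanish), so $\Ebin{j}$ has full row rank, $\Ebin{j}\Ebin{j}^\top$ is invertible, and $\cb_j=(\Ebin{j}\Ebin{j}^\top)^{-1}\ab_j$ is the unique choice. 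Hence $\ellbb_j=\Ebin{j}^\top(\Ebin{j}\Ebin{j}^\top)^{-1}\ab_j$, and assembling these columns gives $\Lbin$; uniqueness within $\Ts$ is automatic, since the difference of two solutions lies in $\Rc(\Ebin{j}^\top)\cap\Nc(\Ebin{j})=\{\vct{0}\}$ column-wise.

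This argument has no real obstacle: it is elementary once the pairwise constraints are collapsed to the anchored ones. The only points needing a moment's care are (i) the equivalence of the full and anchored systems, and (ii) the full row rank of $\Ebin{j}$, which guarantees invertibility of $\Ebin{j}\Ebin{j}^\top$. A minor sanity check worth recording is anchor-independence: a different anchor changes $\Ebin{j}$ and $\ab_j$ but not the resulting column $\ellbb_j$, which is intrinsically characterized as the unique element of $\Ts$ satisfying \eqref{eq:NTP-comp} in column $j$.
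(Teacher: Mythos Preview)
Your proposal is correct and follows essentially the same approach as the paper: the paper's justification (given inline just before the lemma) is the one-line observation that any solution of \eqref{eq:NTP-comp} in column $j$ has the form $\Ebin{j}^\top(\Ebin{j}\Ebin{j}^\top)^{-1}\ab_j+\vb$ with $\vb\in\Nc(\Ebin{j})$, so the unique solution in $\Ts$ (i.e., with $\ellbb_j\in\Rc(\Ebin{j}^\top)$) is the one with $\vb=\vct{0}$. Your argument is a careful unpacking of exactly this, explicitly verifying the reduction to the anchored system and the full row rank of $\Ebin{j}$ that the paper leaves implicit.
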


On the other hand, as mentioned in Sec. \ref{sec:ntp svm}, \ref{eq:ufm-svm} is always feasible. 

The challenge is in satisfying conditions \emph{A}-\emph{C} simultaneously. 

However, when $d\geq V$, then the rank condition \emph{C} of the theorem is trivial. Thus, we have the following corollary.

\begin{corollary}\label{cor:entropy attain d>V}
Suppose $d\geq V$, then the entropy lower bound can be asymptotically attained by the NTP loss. 
\end{corollary}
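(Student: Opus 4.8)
The plan is to derive this as an immediate specialization of the preceding Proposition, by checking that each of its three hypotheses holds under the assumption $d\geq V$. First I would verify the NTP-compatibility hypothesis: the linear system \Eqref{eq:NTP-comp} is always solvable, and a concrete witness is $\Lb_1=\Lbin$, the solution supported on $\Ts$ exhibited in Lemma~\ref{lem:Lin def} (any other solution of that system works equally well). Second I would verify the NTP-separability hypothesis: as already observed in Sec.~\ref{sec:ntp svm}, \ref{eq:ufm-svm} is always feasible since the centered support matrix $\smatbar=(\Id_V-\tfrac1V\ones\ones^\top)\smat$ satisfies all of \eqref{eq:svm equalities}--\eqref{eq:svm inequalities}, so one may take $\Lb_2=\smatbar$.

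The third hypothesis --- the rank constraint --- is the only one that is not unconditional, and it is precisely here that $d\geq V$ enters: for every scalar $\rho$ the matrix $\Lb_1+\rho\,\Lb_2$ lies in $\R^{V\times m}$, hence $\rank{\Lb_1+\rho\,\Lb_2}\leq\min\{V,m\}\leq V\leq d$, so the rank constraint holds automatically for all $\rho$ (indeed for every matrix of this shape). With all three hypotheses of the Proposition in place, it yields, for each sufficiently large $\rho$, factors $\W_\rho\in\R^{V\times d}$ and $\Hb_\rho\in\R^{d\times m}$ with $\W_\rho\Hb_\rho=\Lb_1+\rho\,\Lb_2$ and $\lim_{\rho\to\infty}\CE(\W_\rho\Hb_\rho)=\ent$, which is exactly the assertion of the corollary.

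I do not anticipate any genuine obstacle: this corollary is a one-step consequence of the Proposition, and its entire content is the observation that the rank constraint --- the single substantive assumption of the Proposition, and the only one that couples $\Lb_1$ and $\Lb_2$ --- becomes vacuous as soon as the embedding dimension reaches the vocabulary size, whereas NTP-compatibility and NTP-separability hold for arbitrary training data. The only mild care needed is to state that the $\W_\rho,\Hb_\rho$ are produced (via the factorization guaranteed by the rank condition) for all $\rho$ large enough, so that the limit in the Proposition's conclusion is well defined.
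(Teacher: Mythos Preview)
Your proposal is correct and follows essentially the same approach as the paper: the paper also observes that conditions \emph{A} (\NTPH-compatibility) and \emph{B} (\NTP-separability) are always satisfiable, and that the rank condition \emph{C} becomes trivial once $d\geq V$ since any $V\times m$ matrix has rank at most $V$. Your argument is in fact slightly more explicit than the paper's, in that you name concrete witnesses $\Lb_1=\Lbin$ and $\Lb_2=\smatbar$, whereas the paper only notes existence.
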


\subsubsection{Ball-constrained CE minimization and main result} Instead of the regularized problem \eqref{eq:ufm relax}, it is more convenient (but, equivalent due to convexity) to study the following ball-constrained version:
\begin{align}\label{eq:ball relax}
    \Lbhat_B\in\arg\min_{\substack{\|\Lb\|_*\leq B}} \CE(\Lb)\,\,,
\end{align}
parameterized by $B>0$. We will study properties of the minimizers $\Lbhat_B$ of \eqref{eq:ball relax} as $B\rightarrow\infty.$ The theorem below subsumes Theorems \ref{thm:reg path} and \ref{thm:reg path finite}.

\begin{theorem}\label{thm:reg path app}
    Suppose $d\geq V$. Denote $\Lbhat_B$ any solution of \eqref{eq:ball relax} for regularization $B$.  Then, in the limit of diverging regularization the following statements are true.
    \begin{enumerate}[leftmargin=*]
        \item[(i)] $\lim_{B\rightarrow\infty} \CE(\Lbhat_B) = \Hc$.
        \item[(ii)] $\lim_{\Binf} \Qcs(\Lbhat_B)=\Lbin\,$, where $\Lbin$ is defined in Lemma \ref{lem:Lin def}.
        \item[(iii)] 
        The solution diverges in norm: $\lim_{\Binf} \|\Lbhat_B\| = +\infty$.
        \item[(iv)] The solution converges in direction to the optimal set of \ref{eq:ufm-svm}. That is, there exists minimizer $\Lbmm$ of \ref{eq:ufm-svm} such that
        \[
        \lim_{\Binf}\,\,\Big\|{\frac{\Lbhat_B}{\|\Lbhat_B\|_*} - \frac{\Lbmm}{\norm{\Lbmm}_*}}\Big\|=0.
        \]
        \end{enumerate}     
\end{theorem}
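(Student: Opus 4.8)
The plan is to analyze the ball-constrained problem \eqref{eq:ball relax} by exploiting the decomposition $\R^{V\times m}=\Ts\oplus\Ts_\perp$ and tracking the two components of $\Lbhat_B$ separately. First, for the \emph{feasibility/attainability} part, I would invoke Corollary~\ref{cor:entropy attain d>V}: since $d\geq V$, the entropy lower bound is asymptotically attainable, so there exist logit matrices $\Lb_\rho=\Lbin+\rho\,\Lbmm$ with $\CE(\Lb_\rho)\to\Hc$, and crucially $\|\Lb_\rho\|_*$ grows only linearly in $\rho$. Feeding a suitable such $\Lb_\rho$ into \eqref{eq:ball relax} with $B=\|\Lb_\rho\|_*$ shows $\CE(\Lbhat_B)\le\CE(\Lb_\rho)\to\Hc$, and since $\CE\ge\Hc$ always, statement (i) follows — once we also know $B\to\infty$ forces us far enough along this path, which is automatic because $\CE$ strictly exceeds $\Hc$ on any bounded set (sparsity of $\Pbb$ means no finite logit matrix achieves $\Hc$; this needs a short compactness argument). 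Statement (iii), $\|\Lbhat_B\|\to\infty$, is then immediate: if $\|\Lbhat_{B_k}\|$ stayed bounded along a subsequence, a limit point would be a finite minimizer achieving $\Hc$, contradicting sparsity.

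For statement (ii), the key observation is that $\CE(\Lb)$ depends on $\Lb$ only through differences $\Lb[z,j]-\Lb[z',j]$ and $\Lb[z,j]-\Lb[v,j]$, and the ``soft-label'' part of the loss — the part that can force $\CE$ down to $\Hc$ rather than merely toward it — is governed entirely by $\Qcs(\Lb)$, the $\Ts$-component. I would argue that along any sequence $B_k\to\infty$, $\Qcs(\Lbhat_{B_k})$ stays bounded (otherwise the in-support log-odds would blow up, but matching finite log-odds $\log(\ph_{j,z}/\ph_{j,z'})$ is what minimizes the loss, so unbounded $\Qcs$-component is suboptimal compared to replacing it with $\Lbin$ at bounded nuclear-norm cost). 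Extracting a convergent subsequence $\Qcs(\Lbhat_{B_k})\to\Lb^\circ\in\Ts$, the fact that $\CE(\Lbhat_{B_k})\to\Hc$ together with the explicit form of $\CE$ in \eqref{eq:CE_T} forces $\Lb^\circ$ to satisfy the log-odds equations $\Lb^\circ[z,j]-\Lb^\circ[z',j]=\log(\ph_{j,z}/\ph_{j,z'})$; by uniqueness of the solution on $\Ts$ (Lemma~\ref{lem:Lin def}), $\Lb^\circ=\Lbin$. Since every subsequence has a further subsequence converging to the same limit $\Lbin$, the whole net converges.

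For statement (iv), the directional-convergence part, I would follow the template of \citet{seli} for the nuclear-norm SVM. Write $\Lbhat_B = \Qcs(\Lbhat_B) + \Qcsperp(\Lbhat_B)$; by (ii) the first term converges to $\Lbin$, so after normalization only $\Qcsperp(\Lbhat_B)/\|\Lbhat_B\|_*$ matters and it has unit-order norm (using (iii)). Set $\Lbtil_B := \Qcsperp(\Lbhat_B)$. On the one hand $\Lbtil_B$ automatically satisfies the equality constraints \eqref{eq:svm equalities} since it lies in $\Ts_\perp$ (recall \eqref{eq:F perp operational}); one checks that optimality of $\Lbhat_B$ plus $\CE(\Lbhat_B)\to\Hc$ forces the normalized $\Lbtil_B/\|\Lbtil_B\|_*$ to become asymptotically feasible for the scaled inequality constraints of \ref{eq:ufm-svm}, i.e. $\Lbtil_B[z,j]-\Lbtil_B[v,j]\gtrsim \|\Lbtil_B\|_*/R(B)$ with the right rate. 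On the other hand, nuclear-norm optimality of $\Lbhat_B$ (it is the min-CE point on the ball, and CE is monotone in the margins) means $\Lbtil_B$ cannot have nuclear norm much larger than that of any competing feasible direction, in particular $\Lbmm$; a standard ``if it converged to a non-optimal direction we could perturb toward $\Lbmm$ and strictly decrease the loss at fixed nuclear norm'' argument pins down the limit. Any limit point of $\Lbhat_B/\|\Lbhat_B\|_*$ is thus feasible for \ref{eq:ufm-svm} after rescaling and has minimal nuclear norm, hence is a normalized minimizer $\Lbmm/\|\Lbmm\|_*$.

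The main obstacle I anticipate is statement (iv), specifically making rigorous the claim that the normalized $\Ts_\perp$-component becomes \emph{feasible} for \ref{eq:ufm-svm} at the correct asymptotic rate — this requires carefully quantifying how slowly the off-support exponentials in \eqref{eq:CE_T} must decay (they contribute to $\CE-\Hc$ at order $e^{-\rho}$ per the bound \eqref{eq:LR clean bound}, but one needs a matching lower bound to conclude the margins genuinely scale like $\log(1/(\CE-\Hc))$) and coupling this with the nuclear-norm budget $B$. The interplay between the convex-analytic optimality conditions (subgradients of the nuclear norm at $\Lbhat_B$) and the combinatorial support structure is where the real work lies; the other three statements are comparatively soft compactness-and-uniqueness arguments.
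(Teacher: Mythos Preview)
Your proposal is correct and follows essentially the same strategy as the paper. Two small remarks on packaging.

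For (ii), the paper avoids your explicit ``show $\Qcs(\Lbhat_B)$ bounded, extract a subsequence, verify log-odds'' chain by introducing the \emph{in-support loss} $\CEin(\Lb):=\sum_j\pih_j\sum_{z\in\Sc_j}\ph_{j,z}\log\big(1+\sum_{z'\in\Sc_j,z'\neq z}e^{-(\Lb[z,j]-\Lb[z',j])}\big)$, which satisfies $\CE(\Lb)\geq\CEin(\Lb)=\CEin(\Qcs(\Lb))\geq\Hc$ for all $\Lb$. Once (i) gives $\CE(\Lbhat_B)\to\Hc$, the sandwich immediately yields $\CEin(\Qcs(\Lbhat_B))\to\Hc$, and uniqueness of the minimizer $\Lbin$ of $\CEin$ on $\Ts$ finishes. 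Your route works too but would in effect need to rediscover this inequality to justify both boundedness and the limit.

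For (iv), your contradiction outline is exactly the paper's argument: build the competitor $\Lb_R=\Lbin+R\,\Lbmm$ with $R\approx B/\|\Lbmm\|_*$ and $\CE(\Lb_R)\le\Hc+Ce^{-R}$; assume $\Lbhat_B/\|\Lbhat_B\|_*$ stays $\epsilon$-away from every normalized minimizer; rescale $(\Lbhat_B-\Lbin)/R'$ with $R'$ chosen so its nuclear norm falls strictly below $\|\Lbmm\|_*$, forcing some constraint of \ref{eq:ufm-svm} to fail by a fixed $\delta>0$; then lower-bound $\CE(\Lbhat_B)-\Hc$ by either $c_1 R'\delta$ (equality violation) or $c_2 e^{-R'(1-\delta)}$ (inequality violation), each eventually exceeding $Ce^{-R}$ since $R'\sim R$. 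The subgradient/KKT analysis you anticipate in your last paragraph is not needed---the entire step is elementary comparison with the explicit competitor $\Lb_R$.
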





\subsubsection{The in-support NTP loss and its minimizer $\Lbin$}

Define the in-support NTP loss as follows:
\begin{align*}
    \CEin(\Lb) &= \sum_{j\in[m]}\pih_j\sum_{z\in\Sc_j}\ph_{j,z}\log\left(1+\sum_{z'\neq z\in\Sc_j}e^{-(\Lb[z,j]-\Lb[z',j])}\right)
\end{align*}
Contrary to \eqref{eq:CE_T}, the summation inside the logarithm is only over word indices $z$ that belong to the support set of context $j$.

By non-negativity of $\log$, for all $\Lb$ it holds 
$
\CE(\Lb)\geq \CEin(\Lb).$
In fact, because of the sparsity ansatz ($\exists j\in[m]$ such that $S_j<V$) and strict nonegativity of exponential, the inequality is strict for all \emph{finite} $\Lb$, i.e.
\begin{align}\label{eq:CEin lower bound}
\CE(\Lb)>\CEin(\Lb),\, \forall \Lb\in\R^{V\times d}\,.
\end{align}

Moreover, observe that for any $\Lb$, it holds that
\begin{align}\label{eq:CEin proj property}
    \CEin(\Lb)=\CEin(\Qcs(\Lb))\,.
\end{align}
To see this, decompose $\Lb$ in orthogonal components onto $\Ts$ and $\Ts_\perp$, respectively, i.e., $\Lb=\Qcs(\Lb)+\Pc_\perp(\Lb)$. By  Eq. \eqref{eq:F perp operational}, $\Lb_\perp:=\Pc_\perp(\Lb)$ satisfies Eq. \eqref{eq:svm equalities}, i.e.  $\Lb_\perp[z,j]-\Lb_\perp[z',j]=0$ for all $z,z'\in\Sc_j$ and $j\in[m]$.

The lemma below uses this to show that $\CEin$ has a unique minimizer in $\Ts$.

\begin{lemma}\label{lem:CEin unique minimizer}
    The in-support loss $\CEin(\Lb)$ has a unique finite minimizer in $\Ts$, which we denote $\Lbin$: 
    \begin{align}\label{eq:Lbin}
        \Lbin = \arg\min_{\Lb\in\Ts} \CEin(\Lb)\,.
    \end{align}
    The minimizer $\Lbin$ satisfies the \NTPH-compatibility condition in Eq. \eqref{eq:NTP-comp}
    and 
    \[
    \CEinstar:=\CEin(\Lbin)= \ent.
    \]
\end{lemma}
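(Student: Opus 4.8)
The plan is to exploit that $\CEin$ decouples over the $m$ contexts and that, on the subspace $\Ts$, each per-context summand is exactly a softmax cross-entropy of the soft-label $\pbh_j$ against a model distribution obtained by applying the softmax map \emph{restricted to the coordinates in $\Sc_j$}. Concretely, for $\Lb\in\R^{V\times m}$ and $j\in[m]$ put $q_{j,z}:=e^{\Lb[z,j]}\big/\sum_{z'\in\Sc_j}e^{\Lb[z',j]}$ for $z\in\Sc_j$; then $\sum_{z\in\Sc_j}\ph_{j,z}\log\big(1+\sum_{z'\neq z\in\Sc_j}e^{-(\Lb[z,j]-\Lb[z',j])}\big)=-\sum_{z\in\Sc_j}\ph_{j,z}\log q_{j,z}$. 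First I would use the Gibbs decomposition of cross-entropy, $-\sum_{z\in\Sc_j}\ph_{j,z}\log q_{j,z}=-\sum_{z\in\Sc_j}\ph_{j,z}\log\ph_{j,z}+\KL\!\big(\pbh_j\,\|\,(q_{j,z})_{z\in\Sc_j}\big)$, which is licit because $\pbh_j$ is supported exactly on $\Sc_j$. Summing over $j$ with weights $\pih_j$ and recalling $\ph_{j,z}=0$ for $z\notin\Sc_j$ gives $\CEin(\Lb)=\ent+\sum_{j\in[m]}\pih_j\,\KL\!\big(\pbh_j\,\|\,(q_{j,z})_{z\in\Sc_j}\big)$. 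By nonnegativity of the KL divergence this already yields $\CEin(\Lb)\ge\ent$ for all $\Lb$, with equality precisely when $(q_{j,z})_{z\in\Sc_j}=\pbh_j$ for every $j\in[m]$.

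Next I would translate the equality condition into the \NTPH-compatibility system and show it is met at a unique point of $\Ts$. Since $q_{j,z}$ depends on $\ellbb_j$ only through the differences $\Lb[z,j]-\Lb[z',j]$ with $z,z'\in\Sc_j$, the condition $(q_{j,z})_{z\in\Sc_j}=\pbh_j$ is equivalent to $\Lb[z,j]-\Lb[z',j]=\log(\ph_{j,z}/\ph_{j,z'})$ for all $z\ne z'\in\Sc_j$, $j\in[m]$, i.e.\ to \eqref{eq:NTP-comp}; this system is finite and consistent because $\ph_{j,z}>0$ on $\Sc_j$. On the full space \eqref{eq:NTP-comp} determines each column $\ellbb_j$ on $\Sc_j$ up to a common additive constant and leaves its off-support entries free, but membership in $\Ts$ forces $\ellbb_j\in\Rc(\Ebin{j}^\top)$, i.e.\ $\ellbb_j$ is supported on $\Sc_j$ with its on-support entries summing to zero, which removes both degrees of freedom and selects a unique representative; this is exactly the matrix $\Lbin$ written explicitly in Lemma \ref{lem:Lin def}. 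Evaluating the identity of the first paragraph at $\Lb=\Lbin$ gives $\CEin(\Lbin)=\ent$, establishing $\CEinstar=\ent$.

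Finally, for uniqueness of the minimizer of $\CEin$ over all of $\Ts$ (not merely among exact interpolators), I would invoke convexity: $\Lb\mapsto-\log q_{j,z}=-\Lb[z,j]+\log\sum_{z'\in\Sc_j}e^{\Lb[z',j]}$ is convex (affine plus log-sum-exp), so $\CEin$ is a nonnegatively weighted sum of convex functions and hence convex on the linear subspace $\Ts$; therefore its set of minimizers is convex, it is nonempty since $\Lbin$ attains the global lower bound $\ent$, and every minimizer attains value $\ent$, which by the first paragraph forces $\KL(\pbh_j\|(q_{j,z})_{z\in\Sc_j})=0$ for all $j$, and by the second paragraph forces $\Lb=\Lbin$. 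The hard part — such as it is — will be purely bookkeeping: checking that the relevant softmax is the one built from the $\Sc_j$-coordinates rather than the full $V$-dimensional softmax, that the zero-sum normalization baked into $\Ts$ is exactly what kills the shift ambiguity in \eqref{eq:NTP-comp}, and that strict positivity of $\pbh_j$ on $\Sc_j$ is what makes $\Lbin$ finite and the KL argument valid; there is no substantive analytic obstacle.
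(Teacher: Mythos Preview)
Your proposal is correct and follows essentially the same approach as the paper: both exploit column-separability, rewrite each per-context term as cross-entropy of $\pbh_j$ against the $\Sc_j$-restricted softmax, invoke nonnegativity of KL (Gibbs) to get the lower bound $\ent$ with equality iff the log-odds condition \eqref{eq:NTP-comp} holds, and then observe that the constraint $\ellbb_j\in\Rc(\Ebin{j}^\top)$ kills the shift ambiguity to yield the unique $\Lbin$. Your explicit convexity remark for uniqueness is a minor embellishment; the paper leaves it implicit since the equality characterization already pins down $\Lbin$.
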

\begin{proof}
    Note that $\Lb\in\Ts$ is a column-wise separable constraint $\ellbb_j\in\Rc(\Ebin{j}^\top)$ that restricts the non-zero entries $\Lb[z,j]=\ellb_{j,z}$ to indices $z\in\Sc_j$. The objective is also separable with respect to columns. Without loss of generality fix $j=1$ and assume $\Sc_j=[S_j].$ It then suffices proving that the function  $f:\R^{S_1}\rightarrow\R$:
    \begin{align}\label{eq:f coercive}
    f(\ellbb)=-\sum_{z\in\Sc_1}\ph_{1,z} \log\left(\frac{\exp({\ell_z})}{\sum_{z'\in\Sc_1} \exp({\ell_{z'}})}\right)\,,
    \end{align}
    has a unique minimizer in $\Rc(\Ebint{1}^\top)$, where $\Ebint{1}:=\Ebin{1}[:,\Sc_1]\in\R^{(S_1-1)\times S_1}$ is the non-zero block of $\Ebin{1}$. 
    
    By non-negativity of the KL divergence, 
    \[
    f(\ellbb) \geq -\sum_{z\in\Sc_1}\ph_{1,z} \log\left(\ph_{1,z}\right)
    \]
    with equality if and only if,  
    \begin{align}\label{eq:eqv 1}
        \forall z\in\Sc_1\,:\,\frac{\exp({\ell_z})}{\sum_{z'\in\Sc_1} \exp({\ell_{z'}})} = \ph_{1,z}.
    \end{align}
    In turn, this is equivalent to 
    \begin{align}\label{eq:eqv 2}
       \forall z,z'\in\Sc_1\,:\,  \ell_{z} - \ell_{z'} = \log(\ph_{1,z}/\ph_{1,z'})\,.
    \end{align}    
    To see the equivalence  note the following: On the one hand, starting with Eq.~\eqref{eq:eqv 1} for $z,z'\in\Sc_1$, dividing both sides of the two equalities, and taking the logarithm, we arrive at Eq. \eqref{eq:eqv 2}. On the other hand, if Eq.~\eqref{eq:eqv 2} holds, then Eq.~\eqref{eq:eqv 1} holds since
    \[
    \frac{\sum_{z'\in\Sc_1} \exp({\ell_{z'}})}{\exp({\ell_z})} =  \sum_{z'\in\Sc_1} \exp(\ell_{z'}-\ell_{z'}) = \frac{\sum_{z'\in\Sc_1} \ph_{1,z'}}{\ph_{1,z}}= \frac{1}{\ph_{1,z}}\,.
    \]

Now, observe that \eqref{eq:eqv 2} is the same as the $\NTP_\ent$-compatibility equation for $j=1$. Moreover, note that a logit vector $\ellbb\in\R^{S_1}$ solves \eqref{eq:eqv 2} if and only if for 
    \[
    \ellbb = \Ebint{1}^\top\left(\Ebint{1}\Ebint{1}^\top\right)^{-1}\ab_1 + \vb, \,\, \vb\in\Nc\left(\Ebint{1}\right),
    \]
    where $\ab\in\R^{S_1-1}$ with entries $\log(\ph_{1,z_1}/\ph_{1,z'})$ for $z'\neq z_1\in\Sc_1$. 
This proves that there is a unique minimizer  $\ellbb_1^{\text{in}}:=\Ebint{1}^\top\left(\Ebint{1}\Ebint{1}^\top\right)^{-1}\ab_1$ on $\Rc\left(\Ebint{1}^\top\right).$   
  
    Since the choice of $j=1$ above was arbitrary, this proves the lemma.
\end{proof}

\subsubsection{Auxiliary lemmas}
This section proves a sequence of auxiliary lemmas that are used to prove Thm. \ref{thm:reg path app} in the next section.

The first lemma shows that entropy can only be approached asymptotically, i.e., provided the logit matrix diverges.

\begin{lemma}
    For all $\Lb\in\R^{V\times m}$ and all $\Lb'\in\Ts_\perp$ that additionally satisfy $\Ebout{j}\ellbb'_{j}\geq 1, \forall j\in[m]$ (i.e., $\Lb'$ is \ref{eq:ufm-svm} feasible), it holds
    \[
\CEin(\Lb) = \lim_{R\rightarrow\infty}\CE(\Lb+R\Lb').
\]
Using this, it holds for all $\Lb\in\R^{V\times m}$ that
    \begin{align}\label{eq:suboptimality of finite L}
\lim_{R\rightarrow\infty}\CE(\Lbin+R\Lb') = \CEin(\Lbin) = \CEinstar = \ent \leq \CEin(\Lb) < \CE(\Lb).
\end{align}
\end{lemma}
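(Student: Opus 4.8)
The statement has two halves: first, an identity $\CEin(\Lb) = \lim_{R\to\infty}\CE(\Lb+R\Lb')$ for arbitrary $\Lb$ and any \ref{eq:ufm-svm}-feasible $\Lb'\in\Ts_\perp$; second, the chain \eqref{eq:suboptimality of finite L}, which follows from the identity plus facts already established. The plan is to prove the identity by a direct computation on the loss \eqref{eq:CE_T}, exploiting the structural properties of $\Lb'$, and then assemble the chain from Lemma \ref{lem:CEin unique minimizer} and the strict inequality \eqref{eq:CEin lower bound}.

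\emph{Step 1 (the asymptotic identity).} Fix $\Lb$ and $\Lb'$ as in the hypothesis, and set $\Lb_R := \Lb + R\Lb'$. Examine the per-context, per-in-support-token contributions to $\CE(\Lb_R)$ in the form \eqref{eq:CE_T}. For $j\in[m]$ and $z\in\Sc_j$, the bracketed term is
\[
1+\sum_{z'\neq z\in\Sc_j}e^{-(\Lb_R[z,j]-\Lb_R[z',j])}+\sum_{v\notin\Sc_j}e^{-(\Lb_R[z,j]-\Lb_R[v,j])}\,.
\]
Since $\Lb'\in\Ts_\perp$, by \eqref{eq:F perp operational} we have $\Lb'[z,j]=\Lb'[z',j]$ for all $z,z'\in\Sc_j$, so the in-support differences are unaffected by $R$: $\Lb_R[z,j]-\Lb_R[z',j]=\Lb[z,j]-\Lb[z',j]$. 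For the off-support terms, feasibility of $\Lb'$ gives $\Lb'[z,j]-\Lb'[v,j]\geq 1$ for $z\in\Sc_j$, $v\notin\Sc_j$, hence $\Lb_R[z,j]-\Lb_R[v,j] = (\Lb[z,j]-\Lb[v,j]) + R\,(\Lb'[z,j]-\Lb'[v,j]) \geq (\Lb[z,j]-\Lb[v,j]) + R$, so each off-support exponential is bounded above by $e^{-R}\,e^{-(\Lb[z,j]-\Lb[v,j])}\to 0$ as $R\to\infty$. Therefore the bracketed term converges monotonically (for $R$ large) to $1+\sum_{z'\neq z\in\Sc_j}e^{-(\Lb[z,j]-\Lb[z',j])}$, and continuity of $\log$ plus the finite sums over $j$ and $z$ give $\lim_{R\to\infty}\CE(\Lb_R)=\CEin(\Lb)$. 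One must also note the limit genuinely exists: the off-support contributions are nonnegative and vanish, so $\CE(\Lb_R)$ decreases to $\CEin(\Lb)$; this also shows $\CE(\Lb_R)\geq\CEin(\Lb)$ throughout, consistent with $\CE\geq\CEin$ pointwise.

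\emph{Step 2 (assembling \eqref{eq:suboptimality of finite L}).} Apply Step 1 with $\Lb=\Lbin$: $\lim_{R\to\infty}\CE(\Lbin+R\Lb')=\CEin(\Lbin)$. By Lemma \ref{lem:CEin unique minimizer}, $\CEin(\Lbin)=\CEinstar=\ent$. For the remaining inequalities in the chain, Lemma \ref{lem:CEin unique minimizer} states $\Lbin$ minimizes $\CEin$ over $\Ts$; combined with \eqref{eq:CEin proj property} (namely $\CEin(\Lb)=\CEin(\Qcs(\Lb))$ and $\Qcs(\Lb)\in\Ts$), this gives $\CEinstar=\CEin(\Lbin)\leq\CEin(\Qcs(\Lb))=\CEin(\Lb)$ for every $\Lb\in\R^{V\times m}$. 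Finally $\CEin(\Lb)<\CE(\Lb)$ is exactly the strict inequality \eqref{eq:CEin lower bound}, valid because of the sparsity ansatz and strict positivity of the exponential. Chaining these yields $\ent\leq\CEin(\Lb)<\CE(\Lb)$, completing the proof.

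\emph{Main obstacle.} The only delicate point is justifying the interchange of limit and the (finite) sums/logarithm in Step 1 and, relatedly, verifying that the limit exists rather than merely having a limsup bound; this is handled by observing the off-support exponentials are monotone in $R$ (each term $e^{-R(\Lb'[z,j]-\Lb'[v,j])}$ is nonincreasing since $\Lb'[z,j]-\Lb'[v,j]\geq 1>0$) and uniformly bounded, so dominated convergence—or simply finiteness of all the sums—applies directly. Everything else is bookkeeping with the subspace structure of $\Ts_\perp$ and invocations of already-proven lemmas.
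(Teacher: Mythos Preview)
Your proposal is correct and follows essentially the same approach as the paper: split the bracketed term in \eqref{eq:CE_T} into in-support differences (unaffected by $R$ since $\Lb'\in\Ts_\perp$) and off-support terms (vanishing as $R\to\infty$ by feasibility), then for the chain \eqref{eq:suboptimality of finite L} specialize to $\Lb=\Lbin$ and invoke Lemma \ref{lem:CEin unique minimizer} together with \eqref{eq:CEin lower bound}. Your argument is in fact slightly more explicit than the paper's---you justify the inequality $\ent\leq\CEin(\Lb)$ for arbitrary $\Lb$ via \eqref{eq:CEin proj property}, and you address the existence and monotonicity of the limit---whereas the paper leaves these details to the reader.
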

\begin{proof}
    To see the first claim note that $\Lb'\in\Fc_\perp$ implies
    \begin{align*}
     &\forall j\in[m],z\neq z'\in\Sc_j\,:\,\Lb'[j,z]=\Lb'[j,z'] \implies \\
     &\qquad\qquad\forall j\in[m]\,,z\in\Sc_j\,:\,\sum_{z'\in\Sc_j}e^{-\left(\Lb[z,j]-\Lb[z',j]+R(\Lb'[j,z]-\Lb'[j,z'])\right)}=\sum_{z'\in\Sc_j}e^{-\left(\Lb[z,j]-\Lb[z',j]\right)}
    \end{align*}
    and $\Ebout{j}\ellbb'_{j}\geq 1, \forall j\in[m]  $ implies
    \begin{align*}
    &\forall j\in[m],z\in\Sc_j, v\notin\Sc_j\,:\,\Lb'[j,z]-\Lb'[j,v]>0 \implies
    \\
     &\qquad\qquad\forall j\in[m]\,, z\in\Sc_j\,:\,\sum_{v\notin\Sc_j}e^{-\left(\Lb[z,j]-\Lb[v,j]+R(\Lb'[j,z]-\Lb'[j,v])\right)}\stackrel{\Rinf}{\longrightarrow} 0\,.
    \end{align*}
    The second claim applies the first claim for $\Lb=\Lbin$ and uses Eq. $\eqref{eq:CEin lower bound}.$
\end{proof}

We now use the previous lemma to show that the ball-consrtained minimizer diverges as the ball constraint approaches infinity.

\begin{lemma}\label{lem:norm diverges}
    The norm of the ball-constrained minimizer diverges, that is
    \[
    \lim_{\Binf}\|\Lbhat_B\|_*=\infty
    \]
\end{lemma}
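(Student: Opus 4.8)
The plan is to argue by contradiction, combining compactness of nuclear-norm balls with the fact (already recorded in Eq.~\eqref{eq:suboptimality of finite L}) that the entropy lower bound $\ent$ is attained only in the limit: $\CE(\Lb)>\ent$ for every \emph{finite} logit matrix $\Lb$, an inequality that hinges on the sparsity ansatz ($S_j<V$ for some $j$). The first preliminary step is to note two elementary facts about the ball-constrained problem \eqref{eq:ball relax}. The objective $\CE$ is continuous on $\R^{V\times m}$, being a finite sum of compositions of $\exp$, $\log$, and linear maps, and it satisfies $\CE(\Lb)\ge\ent$ everywhere; moreover, since the feasible sets $\{\Lb:\|\Lb\|_*\le B\}$ grow with $B$, the map $B\mapsto\CE(\Lbhat_B)$ is nonincreasing.

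The second step is to show $\lim_{\Binf}\CE(\Lbhat_B)=\ent$. Fix $\epsilon>0$. Since \ref{eq:ufm-svm} is feasible (for instance $\smatbar$ meets all of its constraints), pick any feasible $\Lb'\in\Ts_\perp$ with $\Ebout{j}\ellbb'_{j}\ge 1$ for all $j\in[m]$. By Eq.~\eqref{eq:suboptimality of finite L}, $\lim_{\Rinf}\CE(\Lbin+R\Lb')=\ent$, so there is an $R$ with $\CE(\Lbin+R\Lb')<\ent+\epsilon$; set $B_\epsilon:=\|\Lbin+R\Lb'\|_*$. For every $B\ge B_\epsilon$ the matrix $\Lbin+R\Lb'$ is feasible in \eqref{eq:ball relax}, hence $\CE(\Lbhat_B)\le\CE(\Lbin+R\Lb')<\ent+\epsilon$. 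Letting $\epsilon\downarrow0$ and combining with $\CE(\Lbhat_B)\ge\ent$ gives the claim.

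Finally, suppose towards a contradiction that $\liminf_{\Binf}\|\Lbhat_B\|_*=M<\infty$, and choose $B_k\to\infty$ with $\|\Lbhat_{B_k}\|_*\le M+1$ for all large $k$. The nuclear-norm ball of radius $M+1$ in $\R^{V\times m}$ is compact, so a subsequence $\Lbhat_{B_{k_l}}$ converges to some \emph{finite} matrix $\Lb^\star$ with $\|\Lb^\star\|_*\le M+1$. Continuity of $\CE$ and the previous step yield $\CE(\Lb^\star)=\lim_l\CE(\Lbhat_{B_{k_l}})=\ent$, which contradicts $\CE(\Lb^\star)>\ent$ from Eq.~\eqref{eq:suboptimality of finite L}. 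Hence no such finite $M$ exists, i.e.\ $\|\Lbhat_B\|_*\to\infty$. There is no real obstacle here; the only point needing a little care is that one must invoke feasibility of \ref{eq:ufm-svm} to manufacture the divergent direction $\Lb'$ driving the loss to $\ent$, and that it is the \emph{strict} inequality $\CE>\ent$ on finite matrices (not merely $\ge$) that closes the argument — and both ingredients are already available from earlier results.
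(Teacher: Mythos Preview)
Your proof is correct and uses the same two ingredients as the paper---the strict inequality $\CE(\Lb)>\ent$ for every finite $\Lb$ from Eq.~\eqref{eq:suboptimality of finite L}, and the construction $\Lbin+R\Lb'$ driving the loss to $\ent$---so the core idea is identical. The execution differs slightly: you first prove $\CE(\Lbhat_B)\to\ent$ (which the paper defers to the next lemma, Lemma~\ref{lem:CEB approaches CEin}) and then close via a compactness/subsequence argument on the nuclear-norm ball; the paper instead argues directly that if $\|\Lbhat_B\|_*$ stayed bounded then for large $B$ the point $\Lb_R$ becomes feasible with $\CE(\Lb_R)<\CE(\Lbhat_B)$, contradicting optimality without passing to a limit point. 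Your route is a touch more careful (working with $\liminf$ and extracting a convergent subsequence), and it yields the loss convergence as a free byproduct; the paper's route is more economical, avoiding compactness altogether.
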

\begin{proof}
Suppose on the contrary that $\lim_{B\to\infty}\|\Lbhat_B\|_*<\infty$. From Eq. \eqref{eq:suboptimality of finite L}, we have 
\[
\CE(\Lbhat_B) > \lim_{\Rinf}\CE(\Lbin+R \Lbmm).
\]
Moreover, by triangle inequality for nuclear norm $\|\Lb_R\|_*\leq\|\Lbin\|_*+R\|\Lbmm\|_*$.  Thus, for any arbitrarily large $R>0$, there exists large enough $B$ such that $\|\Lb_R\|_*\leq B$, making $\Lb_R:=\Lbin+R \Lbmm$ feasible in \eqref{eq:ball relax}.
 These together, contradict optimality of $\Lbhat_B$, proving that $\lim_{\Binf}\|\Lbhat_B\|_*=\infty$. Finally, recalling the norm inequality
 $
 \|\Lbhat_B\|\geq \|\Lbhat_B\|_*/\sqrt{V}\,,
 $
 it also holds that $\lim_{\Binf}\|\Lbhat_B\|=\infty.$
\end{proof}

The last lemma constructs a sequence of logits approaching the entropy lower bound while remaining feasible in problem \eqref{eq:ball relax} as $B\rightarrow\infty$.
\begin{lemma}\label{lem:CEB approaches CEin}
    The objective function approaches the loss lower bound, i.e.,
    \[
    \lim_{\Binf}\CE(\Lbhat_B)=\CEinstar=\ent\,.
    \]
    Concretely, there exists logit matrix $\Lb_R$ parameterized by $R=R(B)$ that is feasible in \eqref{eq:ball relax} and sastisfies: \looseness=-1
    \begin{align}
\CE(\Lb_R)\leq     
     \CEinstar + {V e^{2\|\Lbin\|_2}\,e^{-R}}\,.\label{eq:reg path up}        
    \end{align}
     Moreover, $\lim_{\Binf}R=\infty$ and $\lim_{\Binf}\frac{R}{\|\Lbhat_B\|_*}=\frac{1}{\|\Lbmm\|_*}$ where $\|\Lbmm\|_*$ is the optimal cost of \ref{eq:ufm-svm}.
\end{lemma}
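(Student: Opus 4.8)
The plan is to exhibit, for each large $B$, an explicit point that is feasible in \eqref{eq:ball relax} and whose loss is within $O(e^{-R})$ of the entropy $\ent$, and then conclude by optimality of $\Lbhat_B$. Take $\Lb_R := \Lbin + R\,\Lbmm$, where $\Lbin$ is the matrix of Lemma~\ref{lem:Lin def} (it satisfies the $\NTPH$-compatibility equalities \eqref{eq:NTP-comp}) and $\Lbmm$ is any fixed minimizer of \ref{eq:ufm-svm}, so that it obeys \eqref{eq:svm equalities}--\eqref{eq:svm inequalities}; note $\|\Lbmm\|_* > 0$ since \eqref{eq:svm inequalities} precludes $\Lbmm=\mathbf{0}$, and $\|\Lbmm\|_*$ equals the optimal cost of \ref{eq:ufm-svm}. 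By the triangle inequality for the nuclear norm, $\|\Lb_R\|_*\le\|\Lbin\|_*+R\|\Lbmm\|_*$, so setting $R=R(B):=(B-\|\Lbin\|_*)/\|\Lbmm\|_*$ makes $\Lb_R$ feasible in \eqref{eq:ball relax} for all $B>\|\Lbin\|_*$, and clearly $R(B)\to\infty$ as $B\to\infty$.

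Next I establish the bound \eqref{eq:reg path up}. This is the same chain of inequalities already carried out in the proof of the preceding $\NTPH$-compatibility proposition (the one yielding \eqref{eq:LR clean bound}), specialized to $\Lb_1=\Lbin$ and $\Lb_2=\Lbmm$: the equalities \eqref{eq:NTP-comp} make the in-support logit gaps of $\Lb_R$ equal to $\log(\ph_{j,z}/\ph_{j,z'})$, while \eqref{eq:svm equalities}--\eqref{eq:svm inequalities} give $\Lb_R[z,j]-\Lb_R[v,j]\ge -2\|\Lbin\|_2+R$ for all $j\in[m]$, $z\in\Sc_j$, $v\notin\Sc_j$. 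Substituting into the loss \eqref{eq:CE_T}, collapsing the in-support terms via $\sum_{z'\in\Sc_j}\ph_{j,z'}/\ph_{j,z}=1/\ph_{j,z}$, bounding the off-support contribution by $Ve^{2\|\Lbin\|_2}e^{-R}$, and using $\log(1+x)\le x$ together with $\sum_{j}\pih_j\sum_{z\in\Sc_j}\ph_{j,z}^2\le1$ gives $\CE(\Lb_R)\le\CEinstar+Ve^{2\|\Lbin\|_2}e^{-R}$, i.e.\ \eqref{eq:reg path up}. Combining this with feasibility of $\Lb_R$ and the universal entropy lower bound yields $\ent\le\CE(\Lbhat_B)\le\CE(\Lb_R)\le\ent+Ve^{2\|\Lbin\|_2}e^{-R(B)}$, whose right-hand side tends to $\ent$; hence $\lim_{\Binf}\CE(\Lbhat_B)=\CEinstar=\ent$.

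It remains to compute $\lim_{\Binf}R(B)/\|\Lbhat_B\|_*$, for which the key fact is that the ball constraint is active: $\|\Lbhat_B\|_*=B$ for every $B$. Indeed, if $\|\Lbhat_B\|_*<B$ then $\Lbhat_B$ minimizes the convex function $\CE$ on an open neighborhood, hence is a global minimizer over $\R^{V\times m}$; but $\CE(\Lb)>\CEin(\Lb)\ge\CEinstar=\ent$ for every finite $\Lb$, by \eqref{eq:CEin lower bound}, \eqref{eq:CEin proj property} and Lemma~\ref{lem:CEin unique minimizer}, so $\ent=\inf_\Lb\CE(\Lb)$ is never attained --- a contradiction. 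Therefore $R(B)/\|\Lbhat_B\|_*=(B-\|\Lbin\|_*)/(B\|\Lbmm\|_*)\to 1/\|\Lbmm\|_*$, completing the proof.

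The only mildly delicate step is the last one --- showing the nuclear-norm ball constraint is (asymptotically, and in fact exactly) tight; the rest is a routine reuse of the feasibility-plus-exponential-decay construction from the $\NTPH$-compatibility proposition, together with the triangle inequality and convexity of $\CE$.
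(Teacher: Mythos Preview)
Your proof is correct and uses the same construction as the paper: both take $\Lb_R=\Lbin+R\,\Lbmm$ and derive \eqref{eq:reg path up} via the chain culminating in \eqref{eq:LR clean bound}. The one substantive difference is the choice of $R(B)$: the paper sets $R=(\|\Lbhat_B\|_*-\|\Lbin\|_*)/\|\Lbmm\|_*$ and invokes Lemma~\ref{lem:norm diverges} to get $R\to\infty$ (after which the ratio $R/\|\Lbhat_B\|_*\to1/\|\Lbmm\|_*$ is immediate), whereas you set $R=(B-\|\Lbin\|_*)/\|\Lbmm\|_*$ (so $R\to\infty$ is trivial) and instead establish $\|\Lbhat_B\|_*=B$ directly via a convexity/non-attainment argument. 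Your route is self-contained---it in fact subsumes Lemma~\ref{lem:norm diverges}---while the paper's route recycles that lemma; the content is otherwise identical.
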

\begin{proof} 
We evaluate the loss achieved by the following candidate good point \[\Lb_R:=\Lbin+R(B)\cdot\Lbmm\,.\]
Here, (for large enough $B>\|\Lbin\|_*$) we 
 set
\[
R:=R(B)=\frac{\|\Lbhat_B\|_*-\|\Lbin\|_*}{\|\Lbmm\|_*}.
\]
Note this is chosen so that $\|\Lb_R\|_*\leq\|\Lbhat_B\|_*\leq B$ towards making $\Lb_R$ feasible in \eqref{eq:ball relax}. Also, recall from Lemma \ref{lem:norm diverges} that $\|\Lbhat_B\|_*\rightarrow\infty$ as $B\rightarrow\infty$; thus, also $R\rightarrow\infty$.

It remains to prove \Eqref{eq:reg path up}. This follows directly from the sequence of equations in \eqref{eq:LR clean bound}.

\end{proof}


\subsubsection{Proof of Theorem \ref{thm:reg path app}}

We prove each statement separately.

\noindent\textbf{\underline{Statement (i).}}~Since for all $\Lb$, it holds $\CE(\Lb)\geq \ent$, it suffices to prove that $\ent$ can be asymptotically attained.  Since $d\geq V$, this follows from Corollary \ref{cor:entropy attain d>V}. (For a more explicit proof, see Lemma \ref{lem:CEB approaches CEin}).

\noindent\textbf{\underline{Statement (ii).}}~  Since  for all $\Lb$ it holds $\CE(\Lb)\geq \CEin(\Lb)\geq\CEinstar=\ent$, it follows from Statement (i) that
    \[
    \lim_{\Binf}\CEin(\Lbhat_B)=\CEinstar\,.
    \]
But
    $\CEin(\Lb)=\CEin(\Qcs(\Lb))$ (see Eq. \eqref{eq:CEin proj property}). Thus,
    \[
    \lim_{\Binf}\CEin(\Qcs(\Lbhat_B))=\CEinstar\,.
    \]
    The desired now follows ince $\Lbin$ is unique minimizer of $\CEin$ on $\Tc$ by Lemma \ref{lem:CEin unique minimizer}.

\noindent\textbf{\underline{Statement (iii).}}~Follows directly from Lemma \ref{lem:norm diverges}.

\noindent\textbf{\underline{Statement  (iv).}}~
From Lemma \ref{lem:CEB approaches CEin}, there exists $\Lb_R$ for which the loss is close to $\ent$ (see \Eqref{eq:reg path up}).

Towards arriving at a contradiction, 
we will show that if $\Lbhat_B$ is not in the direction of $\Lbmm$, then it incurs a loss that is larger than the upper bound of $\CE(\Lb_R)$ computed in \Eqref{eq:reg path up}. 

To do this, assuming the statement of the theorem is not true, 
we will lower bound 
\begin{align}\label{eq:towards Wb}
\CE(\Lbhat_B)-\ent = \sum_{j\in[m]}\pih_j\sum_{z\in\Sc_j}\ph_{j,z}\log\Big(\ph_{j,z}\,\big({\sum_{z'\in\Sc_j}e^{-(\Lbhat_B[j,z]-\Lbhat_B[j,z'])}+\sum_{v\notin\Sc_j}e^{-(\Lbhat_B[j,z]-\Lbhat_B[j,v])}}\big)\Big).
\end{align}
By our assumption, there exists $\eps>0$, such that there exists arbitrarily large $B$ satisfying:
\begin{align}\label{eq:contra eps}
\Big\|\frac{\|\Lbmm\|_*}{\|\Lbhat_B\|_*}\Lbhat_B-\Lbmm\Big\|>\eps,
\end{align}
for all minimizers $\Lbmm$ of \ref{eq:ufm-svm}. 
Define
\[
\Lbhat=\frac{1}{R'(B)}\big(\Lbhat_B-\Lbin\big),
\]
where, $R':=R'(B)>0$ is chosen so that $\|\Lbhat\|_*< \|\Lbmm\|_*$. 
Concretely,  this can be ensured by setting (to see this use the triangle inequality for nuclear norm):
\[
R'=\frac{(\|\Lbhat_B\|_*+\|\Lbin\|_*+\zeta)}{\|\Lbmm\|_*},
\]

for some $\zeta>0$. Recall that $\|\Lbhat_B\|_*\rightarrow\infty$ as $B\rightarrow\infty$. On the other hand $\|\Lbin\|_*$ is not dependent on $B$. Thus, in the large limit $B\rightarrow\infty$, it holds $\abs{\frac{\|\Lbmm\|_*}{\|\Lbhat_B\|_*}-\frac{1}{R'}}\rightarrow0$ and $\frac{\|\Lbin\|_*}{\|\Lbhat_B\|_*}\rightarrow0$. These combined with \eqref{eq:contra eps} show that we can always choose $B$ large enough so that Eq. \eqref{eq:contra eps} guarantees, for some $\eps'>0$, that
\[
\|\Lbhat-\Lbmm\|\geq \eps'\,.
\]
But $\Lbhat$ achieves the optimal cost of \ref{eq:ufm-svm}, since $\|\Lbhat\|_*<\|\Lbmm\|_*$. Thus,
 there exists $\delta\in(0,1)$ and $j\in[m]$ such that at least one of the following is true

(i) $\exists z$ and $z'\neq z\in\Sc_{j}$ such that 
\begin{align}
    \label{eq:contra 1}
    |\Lbhat[j,z]-\Lbhat[j,z']|\geq \delta\,,
\end{align}

(ii) $\exists z\in\Sc_{j}, v\notin\Sc_{j}$ such that 
\begin{align}\label{eq:contra 2}
\Lbhat[j,z]-\Lbhat[j,v]\leq 1-\delta.    
\end{align}

\noindent\emph{\underline{Case (i):}}~ Without loss of generality $(\eb_z-\eb_{z'})^\top\ellbhat_j\leq -\delta$ (otherwise, flip $z,z'$). Thus, ignoring all but one term in \eqref{eq:towards Wb} gives
\begin{align}\label{eq:towards Wb 2}
\CE(\Lbhat_B)-\ent &\geq \pih_j\ph_{j,z}\log\Big(\ph_{j,z}\,{e^{-(\Lbhat_B[j,z]-\Lbhat_B[j,z'])}}\Big).
\end{align}
But,
\begin{align}
\Lbhat_B[j,z]-\Lbhat_B[j,z'] = R'(\Lbhat[j,z]-\Lbhat[j,z'])+(\Lbin[j,z]-\Lbin[j,z'])\leq -R'\delta+2\|\Lbin\|_2
\end{align}
Put in \eqref{eq:towards Wb} and using $ \ph_{j,z}\geq1/V$ , $\pih_j \geq 1/m$ shows 
\begin{align}\nn
    \CE(\Lbhat_B) \geq \ent + \frac{1}{mV}
    \log\Big(\frac{1}{V}{e^{R'\delta}\exp(-2\|\Lbin\|_2)}\Big)
    \geq \ent + \frac{1}{mV}
    \log\Big(\frac{e^{R'\delta}}{V\exp(2\|\Lbin\|_2)}\Big)
\end{align}
Compare this with \eqref{eq:reg path up}, it is clear that as $B\rightarrow\infty$, so that $R'\rightarrow\infty$, it holds $\frac{1}{mV}
    \log\Big(\frac{e^{R'\delta}}{V\exp(2\|\Lbin\|_2)}\Big)>1> V\exp(2\|\Lbin\|_2) \, e^{-R}$. Thus, $\CE(\Lbhat_B)>\CE(\Lbin_R)$, a contradiction.

\noindent\emph{\underline{Case (ii):}}~We can assume $\Ebin{j}\ellbhat_j=0$ for all $j\in[m]$,
since otherwise we are in Case (i). Now, again ignoring all but the $(j,z)$ term in the CE loss for which \eqref{eq:contra 2} holds for some $v\notin\Sc_j$, we find
\begin{align}\label{eq:towards Wb 3}
&\CE(\Lbhat_B)-\ent\geq \pih_j\ph_{j,z}\log\Big(p_{j,z}\,{\big(\sum_{z'\in\Sc_j}e^{-(\Lbhat_B[j,z]-\Lbhat_B[j,z'])}+e^{(\Lbhat_B[j,v]-\Lbhat_B[j,z])}\big)}\Big)\nn.
\end{align}
On the one hand, using $\Ebin{j}\ellbhat_j=0$, we have
\[
\sum_{z'\in\Sc_j}e^{-(\Lbhat_B[j,z]-\Lbhat_B[j,z'])} = \sum_{z'\in\Sc_j}e^{-(\Lbin[j,z]-\Lbin[j,z])}=\sum_{z'\in\Sc_j}\frac{\ph_{j,z'}}{\ph_{j,z}}=\frac{1}{\ph_{j,z'}}.
\]
On the other hand, by \eqref{eq:contra 2}:
\[
e^{\Lbhat_B[j,v]-\Lbhat_B[j,z]}\geq e^{-R'(1-\delta)} \,e^{\Lbin[j,v]-\Lbin[j,z]} \geq  e^{-R'(1-\delta)}\exp(-2\|\Lbin\|_2), 
\]
Putting the above together yield:
\begin{align}\nn
    \CE(\Lbhat_B) - \ent &\geq \pih_j\ph_{j,z}\log\Big(1+\frac{ e^{-R'(1-\delta)}\exp(-2\|\Lbin\|_2)}{V}\Big) \geq \frac{e^{-R'(1-\delta)}}{mV^2\exp(2\|\Lbin\|_2)+mV}\,.
\end{align}
where the second inequality uses $\log(1+x)\geq \frac{x}{1+x}, x>0$. 

Compare this with \eqref{eq:reg path up}. As $B\rightarrow\infty$, and noting that $R,R'$ grow at the same rate, it holds $\frac{e^{-R'(1-\delta)}}{mV^2\exp(2\|\Lbin\|_2)+mV}> V\exp(2\|\Lbin\|_2)\, e^{-R}$. Thus, $\CE(\Lbhat_B)>\CE(\Lb_R)$, a contradiction.

In either case, we arrive at a contradiction, which completes the proof.

\subsection{Proof of Proposition \ref{propo:collapse}}\label{sec:collapse_proof}
We will prove that for $j,j'\in[m]$ with same support set $\Sc_j=\Sc_{j'}$ the $j$-th and $j'$-th columns of $\Lbmm=[\ellbb_1,\ldots,\ellbb_m]$ are same. For the sake of contradiction, suppose that this is not the cases, i.e. $\ellbb_j\neq\ellbb_{j'}$. Consider the following three candidate solutions of \ref{eq:ufm-svm}:
    \begin{align*}
        \Lbmm&=[\ellbb_1,\ldots,\ellbb_j,\ldots,\ellbb_{j'},\ldots\ellbb_m]
        \\
        \Lbmm_{\rm{flip}}&=[\ellbb_1,\ldots,\ellbb_{j'},\ldots,\ellbb_{j},\ldots\ellbb_m]
        \\
        \Lbmm_{\rm{avg}}&=\big[\ellbb_1,\ldots,\frac{\ellbb_j+\ellbb_{j'}}{2},\ldots,\frac{\ellbb_j+\ellbb_{j'}}{2},\ldots\ellbb_m\big]\,.
    \end{align*}

By assumption $\Lbmm$ is optimal in \ref{eq:ufm-svm}. We now show that the other two matrices are also optimal.\looseness=-1

Firstly, note that $\Lbmm_{\rm{flip}}$ is feasible in \ref{eq:ufm-svm} because: (i) the affine constraints only depend on the support sets, which are same for $j,j's$, (ii) $\rank{\Lbmm_{\rm{flip}}}=\rank{\Lbmm}\leq d$ since permuting columns is rank-preserving operation. Moreover, permuting columns also preserves nuclear-norm of a matrix. To see this note that $\Lbmm_{\rm{flip}}=\Lbmm\Pbb=\Ub\Sigmab(\Pbb^\top\Vb)^\top$ for some permutation matrix $\Pbb\in\R^{m\times m}$. Since $\Vb$ is partial orthonormal, the same is true for $\Pbb^\top\Vb$. Thus, $\|{\Lbmm_{\rm{flip}}}\|_*=\tr({\Sigmab})=\|{\Lbmm}\|_*$, which proves optimality of $\rank{\Lbmm_{\rm{flip}}}$.

Similarly, we can argue that $\Lbmm_{\rm{avg}}$ is feasible in \ref{eq:ufm-svm} because: (i) the affine constraints only depend on the support sets, which are same for $j,j's$, (ii) $\rank{\Lbmm_{\rm{avg}}}=\rank{\Lbmm}\leq d$ since adding columns and multiplying them by scalars are rank-preserving operations. But now, from convexity of the objective function, and noting that $\Lbmm_{\rm{avg}}=\frac{1}{2}\left(\Lbmm+\Lbmm_{\rm{flip}}\right)$  we have
\[
\|\Lbmm_{\rm{avg}}\|_* \leq \frac{1}{2}\|\Lbmm\|_*+\frac{1}{2}\|\Lbmm_{\rm{flip}}\|_* = \|\Lbmm\|_*\,.
\]
Thus, $\Lbmm_{\rm{avg}}$ is also optimal. 

Now, from Lemma \ref{lem:relax}, we know that     for some partial orthonormal matrix $\Rb$,  in the limit of vanishing regularization:  ${\Hb_\la}\propto\Rb^\top\sqrtm{\Sigmab}\Vb^\top$ where $\Vb,\Sigmab$ are SVD factors of $\Lbmm_{\rm{avg}}$. Since the $j,j'$ columns of $\Lbmm_{\rm{avg}}$ are the same by construction, the same is true for the $j,j'$ columns of $\Vb^\top$. This proves the desired.
\section{Additional discussions on the solution of \texorpdfstring{\ref{eq:ufm-svm}}{NTP-SVM*}}\label{sec:ntpsvm_sol_app}
%

In this section, we complement the results of Sec.~\ref{sec:ntpsvm_sol_main} on the properties of the \ref{eq:ufm-svm} solution. \looseness=-1

\subsection{Special case: support sets of equal sizes}
Recall that in Prop.~\ref{prop:all symmetric geometry}, we introduced a special case of the sparsity pattern of the support sets for which we could find the optimal solution of \ref{eq:ufm-svm} in closed-form. The proposition below provides a more detailed characterization of the implicit geometry of the embeddings. 


\begin{proposition}\label{prop:sym geo app}
       Assume the setting of Proposition \ref{prop:all symmetric geometry}. The geometry of context and word embeddings  is described by the following relations:
    \begin{subequations}
    \begin{align}
        \forall v\neq v'\in[V]\,&:\,\cosa{\w_v}{\w_{v'}} = \frac{-1}{V-1} \quad\text{and}\quad \|\w_v\|=\|\w_{v'}\|
        \\
        \forall j\neq j'\in[m]\,&:\,\cosa{\hb_j}{\hb_{j'}}=\frac{|\Sc_j\cap\Sc_{j'}|-\frac{k^2}{V}}{k-\frac{k^2}{V}}\quad\text{and}\quad \|\hb_j\|=\|\hb_{j'}\|
        \\
        \forall j\in[m], v\in[V]\,&:\
        \cosa{\w_v}{\hb_j} = \begin{cases}
    \sqrt{\frac{V-1}{k(V-k)}} & v\in\Sc_j\\
     \frac{-1}{\sqrt{k(V-k)(V-1)}} & v\notin\Sc_j
    \end{cases}\,
    \\
    \forall j\in[m], v\in[V]\,&:\ \frac{\|\w_v\|^2}{\|\hb_j\|^2} = \frac{(V-1)\,\binom{V-2}{k-1}}{k(V-k)}\,.
    \end{align}
    \end{subequations}
\end{proposition}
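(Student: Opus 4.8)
The plan is to reduce the whole statement to an explicit SVD of $\Lbmm$. By Proposition~\ref{prop:all symmetric geometry}(i) we already know that in this perfectly symmetric setting $\Lbmm = \smatbar = (\Id_V-\tfrac1V\ones_V\ones_V^\top)\smat$ exactly, and by Lemma~\ref{lem:relax}/Corollary~\ref{cor:WH} the embedding Gram matrices in the vanishing-regularization limit are (up to a common divergent scalar) $\Wmm\Wmm^\top \propto (\smatbar\smatbar^\top)^{1/2}$, $\Hmm^\top\Hmm \propto (\smatbar^\top\smatbar)^{1/2}$, and $\Wmm\Hmm \propto \smatbar$, all three sharing the same proportionality factor because of the factorized form $\W_\la=\Ub\Sigmab^{1/2}\Rb$, $\Hb_\la=\Rb^\top\Sigmab^{1/2}\Vb^\top$. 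So it suffices to compute these three matrices entrywise.

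First I would compute the two Gram matrices of $\smatbar$ by counting. Since $\smat$ ranges over all $\binom{V}{k}$ size-$k$ subsets, the number of columns with a $1$ in row $v$ is $\binom{V-1}{k-1}$ and the number with $1$'s in two rows $v\neq v'$ is $\binom{V-2}{k-2}$, so Pascal's rule gives $\smat\smat^\top = \binom{V-2}{k-1}\Id_V + \binom{V-2}{k-2}\ones_V\ones_V^\top$. Writing $P_V:=\Id_V-\tfrac1V\ones_V\ones_V^\top$ and using $P_V\ones_V=0$ together with idempotence, $\smatbar\smatbar^\top = P_V\,\smat\smat^\top\,P_V = \binom{V-2}{k-1}P_V$. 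On the other side, $\smat^\top\ones_V=k\ones_m$ and $[\smat^\top\smat]_{j,j'}=\abs{\Sc_j\cap\Sc_{j'}}$, so $\smatbar^\top\smatbar = \smat^\top\smat - \tfrac{k^2}{V}\ones_m\ones_m^\top$, with diagonal entries all equal to $k-\tfrac{k^2}{V}=\tfrac{k(V-k)}{V}$. The key structural observation is now that $\smatbar\smatbar^\top$ is a scalar multiple of the rank-$(V-1)$ projection $P_V$: hence $\Lbmm=\smatbar$ has all its nonzero singular values equal to $\sigma:=\sqrt{\binom{V-2}{k-1}}$ with multiplicity $V-1$, so in the SVD $\Lbmm=\sigma\Ub\Vb^\top$ we have $\Ub\Ub^\top=P_V$, $\Vb\Vb^\top=\sigma^{-2}\smatbar^\top\smatbar$, and therefore $(\smatbar\smatbar^\top)^{1/2}=\sigma P_V$ and $(\smatbar^\top\smatbar)^{1/2}=\sigma^{-1}\smatbar^\top\smatbar$.

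The statement then follows by reading off entries. The diagonal of $\sigma P_V$ is constant $=\sigma\tfrac{V-1}{V}$, giving equinorm word embeddings with $\cosa{\w_v}{\w_{v'}}=\tfrac{-1/V}{1-1/V}=\tfrac{-1}{V-1}$. The diagonal of $\sigma^{-1}\smatbar^\top\smatbar$ is constant $=\sigma^{-1}\tfrac{k(V-k)}{V}$, giving equinorm context embeddings with $\cosa{\hb_j}{\hb_{j'}}=\tfrac{\abs{\Sc_j\cap\Sc_{j'}}-k^2/V}{k-k^2/V}$, and the ratio $\|\w_v\|^2/\|\hb_j\|^2 = \sigma^2\tfrac{V-1}{k(V-k)} = \binom{V-2}{k-1}\tfrac{V-1}{k(V-k)}$. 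For the cross term, $\Wmm\Hmm=\smatbar$ has entries $\smat[v,j]-k/V$, equal to $\tfrac{V-k}{V}$ for $v\in\Sc_j$ and $-\tfrac kV$ for $v\notin\Sc_j$; dividing by $\|\w_v\|\,\|\hb_j\|=\tfrac1V\sqrt{(V-1)k(V-k)}$ yields the two cases of $\cosa{\w_v}{\hb_j}$ as stated.

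There is no deep obstacle here: the entire content is the symmetry-driven fact that $\smatbar$ has equal nonzero singular values (equivalently, $\smatbar\smatbar^\top$ is a multiple of a projector), which collapses the matrix square roots to explicit scalar multiples and makes the proxy of Corollary~\ref{cor:WH} exact. The only point needing care is that $\Wmm\Wmm^\top$, $\Hmm^\top\Hmm$ and $\Wmm\Hmm$ be normalized mutually consistently so that the cosine $\w_v^\top\hb_j/(\|\w_v\|\|\hb_j\|)$ uses matching scales — this is immediate from the common SVD-factor structure in Lemma~\ref{lem:relax} — and, as a bookkeeping matter, verifying the binomial simplifications via Pascal's rule.
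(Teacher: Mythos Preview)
Your proposal is correct and follows essentially the same approach as the paper: compute $\smatbar\smatbar^\top$ and $\smatbar^\top\smatbar$ by direct counting, observe that the former is a scalar multiple of the rank-$(V-1)$ projector $P_V$ so that all nonzero singular values of $\Lbmm$ are equal, and then read off the norms and cosines from the resulting explicit Gram matrices and from $\Wmm\Hmm=\smatbar$. The only point worth flagging is the final division for $\cosa{\w_v}{\hb_j}$: carrying your own numbers through gives $\sqrt{(V-k)/[k(V-1)]}$ and $-\sqrt{k/[(V-1)(V-k)]}$, so you should write out that last arithmetic step explicitly rather than asserting it matches ``as stated.''
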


\begin{figure}[t]
    \centering 
    \includegraphics[width=0.7\linewidth]{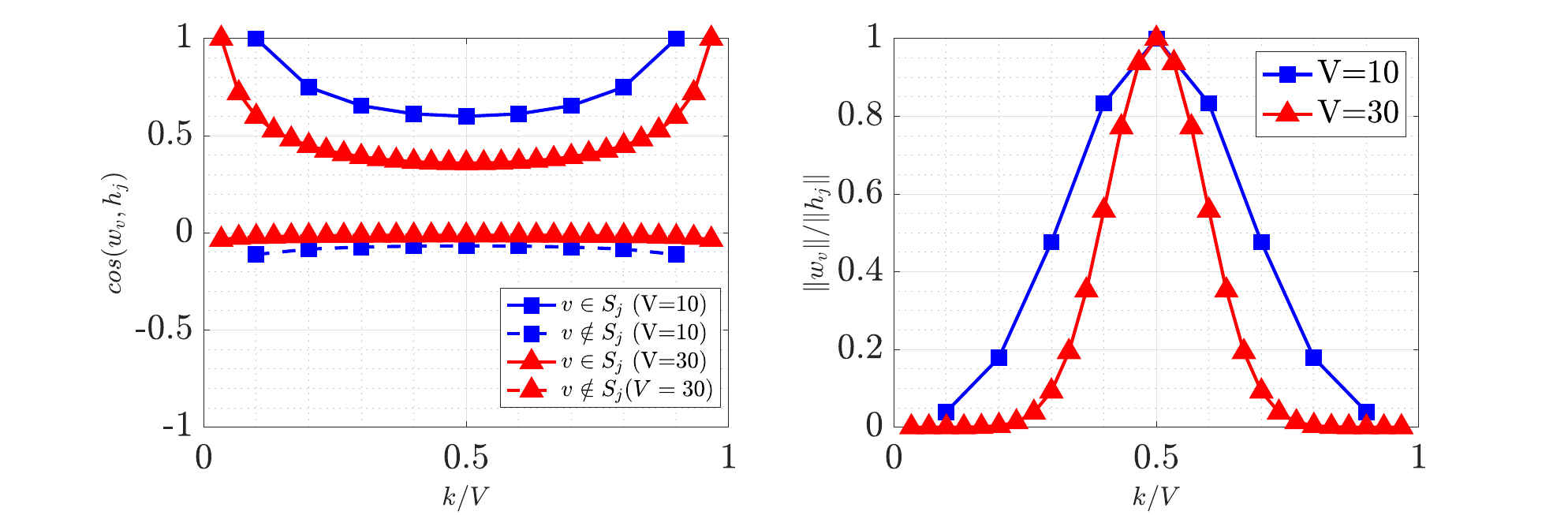}  
\captionsetup{width=\textwidth}
    \caption{Geometry properties illustration for Proposition \ref{prop:all symmetric geometry}. Shown two values of $V$ for varying values of $k\in[V-1]$. (Left) angles between context and word embeddings. (Right) normalized norm ratio of word to context embeddings.}
    \label{fig:symmetric_geo}
\end{figure}

Fig.~\ref{fig:symmetric_geo} visualizes these properties. For a brief discussion on this proposition, see Sec.~\ref{sec:special_case}. We defer the proof of these two propositions to App.~\ref{sec:props_proof}.

\subsection{A simple candidate solution}\label{sec:candidate_app}
Recall that in Sec.~\ref{sec:ntpsvm_sol_main}, we introduced $\smatbar=(\Id_V-\frac{1}{V}\ones_V\ones_V^\top)\smat$ as a candidate solution for the \ref{eq:ufm-svm}. 
Further recall that Prop.~\ref{propo:svm sufficient condition} provides a dual certificate condition that specifies whether $\Lbtina$ is optimal. Our empirical evaluations show that $\Lbtina$ successfully solves \ref{eq:ufm-svm} in numerous  realizations of $\Sb$ we have examined. However, this is not the case in all realizations of the support set structure. 

In Fig.~\ref{fig:L_counterEx}, we present an example of two realizations of $\Sb$, where the condition in Prop.~\ref{propo:svm sufficient condition} is met in one instance but not in the other. The second and third column of the figure display the heatmap of $\Lbtina$ and the optimal solution of \ref{eq:ufm-svm} as determined by CVXPY \citep{cvx} for the support sets $\Sb$ shown in the first column. Notably, although $\Lbtina$ is not optimal in the second row, it provides a close and simple approximation of the optimal $\Lbmm$ which almost captures the overall structure of the optimal solution found by CVXPY. Additionally, in Fig.~\ref{fig:L_conj_synthetic}, we show similar heatmaps of the optimal logits for the setup of the  \simTS~dataset in Sec.~\ref{sec:exp_main}. Once again, while $\Lbtina$ is not necessarily optimal, it serves as a close proxy for the optimal solution. Note that in the case of one-hot classification under STEP imbalanced data, $\Lbtina$ is previously shown to be the unique minimizer of \ref{eq:ufm-svm} \citep{seli}.





\begin{figure*}[t]
	\vspace{-0pt}
	\centering
	\hspace{-67pt} 
 \begin{subfigure}{0.9\textwidth}
		\centering
		\begin{tikzpicture}
  \node at (-0,0){\includegraphics[scale=0.35]{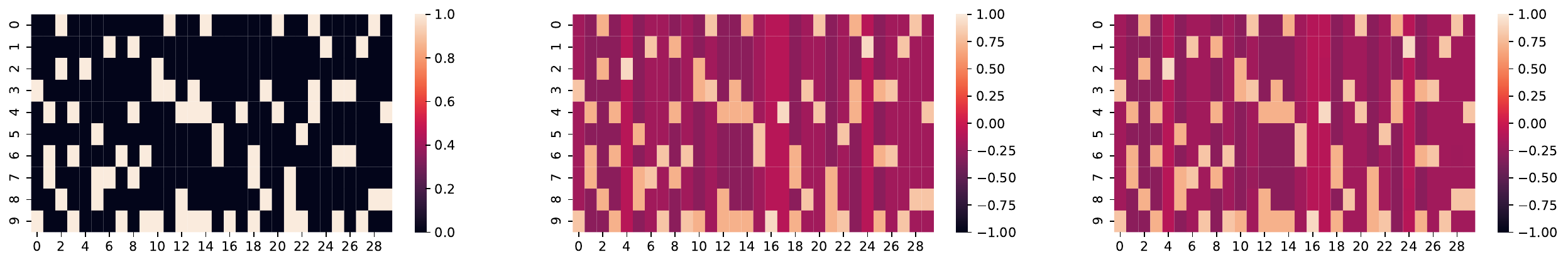}};
        \node at (-5.3,1.3) [scale=0.8]{$\Sb$};
        \node at (-0,1.3) [scale=0.8]{$\Lbtina$};
        \node at (4.8,1.3) [scale=0.8]{$\Lb_\text{CVXPY}$};
		\end{tikzpicture}
    \end{subfigure}

 \vspace{-10pt}
 \hspace{-60pt}
 \begin{subfigure}{0.9\textwidth}
		\centering
		\begin{tikzpicture}
  \node at (-0,0){\includegraphics[scale=0.35]{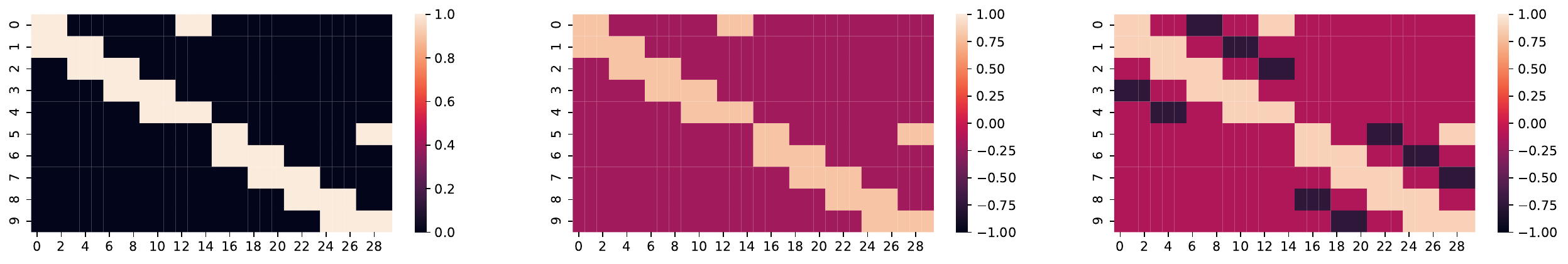}};
		\end{tikzpicture}
    \end{subfigure}
 \captionsetup{width=\linewidth}
 \vspace{-5pt}
    \caption{Comparison between the solution found by CVXPY for \ref{eq:ufm-svm} and $\Lbtina$ defined in Sec.~\ref{sec:ntpsvm_sol_main} for two realizations of the support set matrix $\Sb$. See text for details.
     }\label{fig:L_counterEx}
	\vspace{-15pt}
\end{figure*}

\begin{figure*}[t]
	\vspace{-30pt}
	\centering
	\hspace{-210pt} 
    \resizebox{0.8\textwidth}{!}{
    \begin{subfigure}{0.9\textwidth}
		\centering
		\begin{tikzpicture}
        \node at (-0,0) {\includegraphics[scale=0.45]{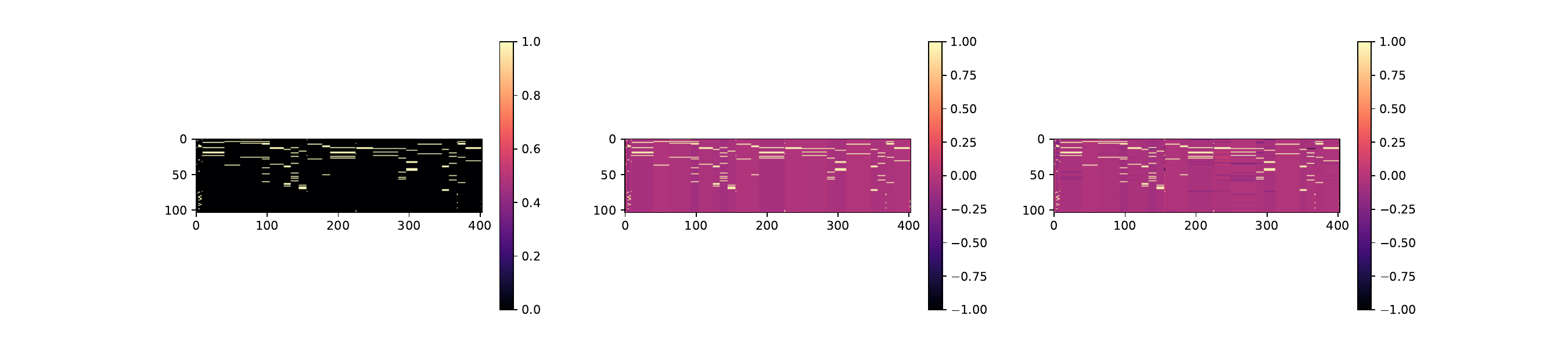}};
        \node at (-5.8,.7) [scale=0.9]{$\Sb$};
        \node at (-0.3,0.7) [scale=0.9]{$\Lbtina$};
        \node at (5.3,0.7) [scale=0.9]{$\Lb_\text{CVXPY}$};
		\end{tikzpicture}
    \end{subfigure}
    }
 \captionsetup{width=\linewidth}
 \vspace{-5pt}
    \caption{Same as Fig.~\ref{fig:L_counterEx} for the support set configuration in the \simTS~of Sec.~\ref{sec:exp_main}.\looseness=-1
     }\label{fig:L_conj_synthetic}
	\vspace{-15pt}
\end{figure*}


\subsection{Proof of Proposition \ref{propo:svm sufficient condition}}
%

To prove the proposition, we first derive the dual of \ref{eq:ufm-svm}. 

We begin with a more convenient, but equivalent, formulation of \ref{eq:ufm-svm}. One way to do this is by recalling that \ref{eq:ufm-svm} is a convex relaxation to 
\begin{align*}
    \min_{\W,\Hb} \frac{1}{2}\norm{\W}_F^2 + \frac{1}{2}\norm{\Hb}_F^2, \quad \text{s.t.} \quad \forall j\in[m], z_j\neq z\in\Sc_j~&:~~(\eb_{z_j}-\eb_{z'})^\top\W \hb_j =0\,
        \\
        \forall j\in[m], v\notin\Sc_j~&:~~(\eb_{z_j}-\eb_{v})^\top\W \hb_j \geq 1\,.
\end{align*}
where, for each sample $j$, we choose an anchor label $z_j\in\Sc_j$. We can now relax this in terms of the matrix $\Xb = \begin{bmatrix}
    \W\\
    \Hb^\top
\end{bmatrix}\begin{bmatrix}
    \W^\top & \Hb
\end{bmatrix}$. (Note that $\Lb$ essentially corresponds to the off-diagonal blocks of $\X$).
This gives us the following reformulation of \ref{eq:ufm-svm}. (Formally, it is easy to show that \eqref{eq:svm X formulation} is the dual of the dual of \ref{eq:ufm-svm}. This result about nuclear minimization is classic (e.g., \cite{RechtFazel}) and we omit the details.): \looseness=-1
\begin{align}\label{eq:svm X formulation}
    \min_{\Xb\succcurlyeq0}\,\, \frac{1}{2}\tr(\Xb)\quad \text{s.t.}\quad \forall j\in[m], z_j\neq z\in\Sc_j~&:~~\Xb[z_j,V+j] - \Xb[z',V+j] =0\,
        \\
        \forall j\in[m], v\notin\Sc_j~&:~~\Xb[z_j,V+j] - \Xb[v,V+j] \geq 1\,. \nn
\end{align}

Define $\Ab_{V\times m}$ such that $\Ab[z,j] = -\lambdab_{j,z},\,z\neq z_j$ and $\Ab[z_j,j] = \sum_{z\neq z_j}\lambdab_{j,z}$. ($\oneb_V^\top\Ab=0$).

The dual problem is 
\begin{align*}
&
\max_{\Ab} \,\min_{\Xb\succcurlyeq0} \,\sum_{j\in[m]}\sum_{v\not\in\Sc_j}\lambdab_{j,v} + \frac12\tr\bigg(\begin{bmatrix}
    \Id & -\Ab\\
    -\Ab^\top & \Id 
\end{bmatrix}\Xb\bigg)\,\quad\text{s.t.}\quad \oneb_V^\top\Ab=0 \quad \Ab[v,j]\leq0,\,v\not\in\Sc_j\\
=&\max_{\Ab} \,\sum_{j\in[m]}\sum_{v\not\in\Sc_j}\lambdab_{j,v} \,\quad\text{s.t.}\quad \oneb_V^\top\Ab=0 \quad {\Ab[v,j]\leq0,\,v\not\in\Sc_j},\quad \norm{\Ab}_2\leq 1\\
=&\max_{\Ab} \,\,\tr(\Ab^\top\Lbtina) \,\quad\text{s.t.}\quad \oneb_V^\top\Ab=0 \quad {\Ab[v,j]\leq0,\,v\not\in\Sc_j},\quad \norm{\Ab}_2\leq 1
\end{align*}
%
where here $\Lbtina$ is defined in Prop.~\ref{prop:all symmetric geometry}. The last line above holds since,
\begin{align*}
    \sum_{j\in[m]}\sum_{v\not\in\Sc_j}\lambdab_{j,v} &= \sum_{j\in[m]}\sum_{v\not\in\Sc_j}\lambdab_{j,v} + \sum_{j\in[m]}\sum_{\substack{v\in\Sc_j \\ v\neq z_j}}\lambdab_{j,v} - \sum_{j\in[m]}\sum_{\substack{v\in\Sc_j \\ v\neq z_j}}\lambdab_{j,v}\\
    &= \sum_{j\in[m]}\sum_{v\neq z_j}\lambdab_{j,v} - \sum_{j\in[m]}\sum_{\substack{v\in\Sc_j \\ v\neq z_j}}\lambdab_{j,v}\\
    &= \sum_{j\in[m]} (\frac{1-|\Sc_j|}{V} + \frac{|\Sc_j|}{V}) \sum_{v\neq z_j}\lambdab_{j,v}  - \sum_{j\in[m]}\sum_{\substack{v\in\Sc_j \\ v\neq z_j}}\lambdab_{j,v}\\
    &= \sum_{j\in[m]} \frac{1-|\Sc_j|}{V} \Ab[z_j,j] - \sum_{j\in[m]} \frac{|\Sc_j|}{V} \sum_{v\neq z_j} \Ab[v,j] + \sum_{j\in[m]}\sum_{\substack{v\in\Sc_j \\ v\neq z_j}}\Ab[v,j]\\
    &= \sum_{j\in[m]} \Lbtina[z_j,j] \Ab[z_j,j] + \sum_{j\in[m]} \sum_{v\not\in\Sc_j} \Lbtina[v,j]  \Ab[v,j] + \sum_{j\in[m]}\sum_{\substack{v\in\Sc_j \\ v\neq z_j}}\Lbtina[v,j] \Ab[v,j]\\
    &=\sum_{j\in[m]} \sum_{v\in[V]}\Lbtina[v,j] \Ab[v,j]\,.
\end{align*}

We remark the following about the dual problem derived above. First, because $\Lbtina$ is feasible in \ref{eq:ufm-svm} and constraints are linear, Slater's conditions hold, thus, we have strong duality. Second, by complementary slackness, if the nonegativity conditions $\Ab[v,j]\leq0, v\notin\Sc_j$ are strict, then, at optimality, $\Lb$ satisfies the inequality constraints \eqref{eq:svm inequalities} with equality. 

Now, regarding solving the dual, observe that if we remove the constraint $\Ab[v,j]\leq0, v\notin\Sc_j$, then the solution to the dual is simply $\Ab=\Ub\Vb^\top$ where $\Ub\Sigmab\Vb^\top$ is the SVD of $\Lbtina$. In fact, it is not hard to show that this is the unique solution of the relaxed dual; see for example \cite[Lemma C.1]{seli} for an analogous result. Therefore, if $\Ub\Vb^\top$ satisfies the condition of the proposition, then it is clearly the unique optimal solution to the dual problem. 

From this and complementary slackness discussed above, it must be that any minimizer of \ref{eq:ufm-svm} must satisfy the inequality constraints with equalities. Formally, the optimal solution $\hat\Lb$ satisfies simultaneously: \looseness=-1
\[
\hat\Lb^\top\ones_V=0_m\quad\text{and}\quad \forall j\in[m]: \Ebin{j}\hat\ellbb_j=\ones_{S_j},\,\, ~\Ebout{j}\hat\ellbb_j=0_{V-1-S_J}.
\]
Note that this is a system of $m+\sum_{j\in[m]}(S_j+V-1-S_j)=Vm$ linear equations. It can also be easily verified that these equations are linearly independent. Thus, their unique solution is $\Lbtina$. This completes the proof of the proposition.

\subsection{Proof of Propositions \ref{prop:all symmetric geometry} and \ref{prop:sym geo app}} \label{sec:props_proof}
At the heart of proving both Propositions \ref{prop:all symmetric geometry} and \ref{prop:sym geo app} is the following result, which determines the SVD factors $\Ub$, $\Sigmab$, and $\Vb$ of $\Lbtina=(\Id_V-\frac{1}{V}\ones\ones^\top)\Sb$. This shows immediately that word embeddings form an ETF. It also forms the basis for computing the geometry of context embeddings. We show this after proving the lemma.
\begin{lemma}
Fix any $k\in[V]$ and suppose $\smat$ contains all $V\choose k$ support sets of size $k$, i.e. $m={V\choose k}$. Let $\Pbb\in\R^{V\times (V-1)}$ be orthonormal basis of the subspace orthogonal to $\ones_V$. Then, the $\Ub,\Sigmab$ factors of the SVD of $\Lbtina$ are:
$\Ub=\Pbb$ and $\Sigmab=\sqrt{\binom{V-2}{k-1}}\,\Id_{V-1}$. Thus,
\[
\Ub\Sigmab\Ub^\top \propto \Big(\Id_V-\frac{1}{V}\ones\ones^\top\Big)\,.
\]
Moreover, the matrix $\Vb\in\R^{m\times(V-1)}$ of right-singular vectors is such that its rows $\vb_j, j\in[m]$ can be expressed as follows with respect to the rows $\ub_z, z\in[V]$ of $\Ub$:
\[
\vb_j = \frac{1}{\sqrt{\binom{V-2}{k-1}}}\sum_{z\in\Sc_j}\ub_z\,~~j\in[m].
\]
\end{lemma}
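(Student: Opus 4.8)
The goal is to compute the SVD of $\Lbtina = (\Id_V - \tfrac1V \ones\ones^\top)\smat$ in the fully symmetric regime where $\smat$ enumerates all $m = \binom{V}{k}$ size-$k$ support sets. The plan is to first compute the Gram matrix $\Lbtina\Lbtina^\top \in \R^{V\times V}$ directly from the combinatorics of the support sets, show it is a scalar multiple of the centering projector $\Id_V - \tfrac1V\ones\ones^\top$, and read off $\Ub$ and $\Sigmab$ from there; then recover $\Vb$ from the identity $\Vb = \Lbtina^\top \Ub \Sigmab^{-1}$.

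\textbf{Step 1: compute $\smat\smat^\top$.} Since $\smat[z,j] = \ind{z\in\Sc_j}$, the $(z,z')$ entry of $\smat\smat^\top$ counts the number of size-$k$ subsets containing both $z$ and $z'$ (if $z\neq z'$) or containing $z$ (if $z=z'$). This gives $\smat\smat^\top = \binom{V-1}{k-1}\Id_V + \binom{V-2}{k-2}(\ones\ones^\top - \Id_V)$ on the off-diagonal. Then $\Lbtina\Lbtina^\top = (\Id_V - \tfrac1V\ones\ones^\top)\smat\smat^\top(\Id_V - \tfrac1V\ones\ones^\top)$; since the centering projector annihilates $\ones\ones^\top$ and commutes with $\Id_V$, only the $\binom{V-1}{k-1} - \binom{V-2}{k-2} = \binom{V-2}{k-1}$ multiple of $\Id_V$ survives, conjugated by the projector. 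Hence $\Lbtina\Lbtina^\top = \binom{V-2}{k-1}(\Id_V - \tfrac1V\ones\ones^\top)$. Its eigenvectors are: $\ones_V$ with eigenvalue $0$, and any orthonormal basis $\Pbb$ of $\ones_V^\perp$ with eigenvalue $\binom{V-2}{k-1}$. So $\rank{\Lbtina} = V-1$, $\Ub = \Pbb$, and $\Sigmab = \sqrt{\binom{V-2}{k-1}}\,\Id_{V-1}$, which also establishes $\Ub\Sigmab\Ub^\top = \sqrt{\binom{V-2}{k-1}}\,(\Id_V - \tfrac1V\ones\ones^\top)$ — i.e.\ proportional to the centering projector, as claimed.

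\textbf{Step 2: recover $\Vb$.} From $\Lbtina = \Ub\Sigmab\Vb^\top$ and $\Sigmab$ invertible (as an $(V-1)\times(V-1)$ scalar matrix) we get $\Vb = \Lbtina^\top \Ub \Sigmab^{-1} = \tfrac{1}{\sqrt{\binom{V-2}{k-1}}}\Lbtina^\top\Ub$. Now $\Lbtina^\top = \smat^\top(\Id_V - \tfrac1V\ones\ones^\top)$, and since $\Ub = \Pbb$ has columns orthogonal to $\ones_V$, we have $(\Id_V - \tfrac1V\ones\ones^\top)\Ub = \Ub$, so $\Vb = \tfrac{1}{\sqrt{\binom{V-2}{k-1}}}\smat^\top\Ub$. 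The $j$-th row of $\smat^\top\Ub$ is $\sum_{z\in\Sc_j}\ub_z$ (the sum of those rows of $\Ub$ indexed by the support set $\Sc_j$), giving exactly $\vb_j = \tfrac{1}{\sqrt{\binom{V-2}{k-1}}}\sum_{z\in\Sc_j}\ub_z$. One should also check $\Vb$ has orthonormal columns, which follows automatically since $\Vb^\top\Vb = \Sigmab^{-1}\Ub^\top\Lbtina\Lbtina^\top\Ub\Sigmab^{-1} = \Sigmab^{-1}\Ub^\top\binom{V-2}{k-1}(\Id_V - \tfrac1V\ones\ones^\top)\Ub\Sigmab^{-1} = \Id_{V-1}$.

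\textbf{Main obstacle.} The only non-routine part is Step 1 — getting the counting identity $\smat\smat^\top = \binom{V-1}{k-1}\Id_V + \binom{V-2}{k-2}(\ones\ones^\top - \Id_V)$ right and then carefully verifying that conjugation by the centering projector kills the $\ones\ones^\top$ term and leaves precisely the $\binom{V-2}{k-1}$-multiple (using $\binom{V-1}{k-1} - \binom{V-2}{k-2} = \binom{V-2}{k-1}$). Everything after that is bookkeeping. A minor point worth stating explicitly is why the resulting $\Ub,\Sigmab,\Vb$ constitute a genuine (thin) SVD: $\Ub$ and $\Vb$ have orthonormal columns by construction/the check above, $\Sigmab \succ 0$, and $\Ub\Sigmab\Vb^\top = \Ub\Sigmab\cdot\tfrac{1}{\sqrt{\binom{V-2}{k-1}}}\Ub^\top\smat = (\Id_V - \tfrac1V\ones\ones^\top)\smat = \Lbtina$, the last equality using $\Ub\Ub^\top = \Id_V - \tfrac1V\ones\ones^\top$.
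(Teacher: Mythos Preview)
Your proof is correct and follows essentially the same approach as the paper: compute $\smat\smat^\top$ by counting, conjugate by the centering projector to get $\Lbtina\Lbtina^\top = \binom{V-2}{k-1}(\Id_V - \tfrac1V\ones\ones^\top)$, and read off $\Ub,\Sigmab$. Your write-up is in fact more complete than the paper's, which stops after identifying $\Ub$ and $\Sigmab$ and leaves the formula for $\vb_j$ implicit; you explicitly derive $\Vb = \tfrac{1}{\sqrt{\binom{V-2}{k-1}}}\smat^\top\Ub$ from $\Vb = \Lbtina^\top\Ub\Sigmab^{-1}$, verify $\Vb^\top\Vb = \Id_{V-1}$, and confirm the factorization reconstructs $\Lbtina$.
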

\begin{proof}
    We will compute explicitly $\Lbtina{\Lbtina}^\top$. Start with computing $\smat\smat^\top$.  For diagonal elements,  the dot product is between a row of $\smat$ and itself, counts the number of $1$s in that row. Since each element is included in $\binom{V-1}{k-1}$ support sets (choosing the remaining $k-1$ elements from the other $V-1$ elements), each diagonal entry of $\smat\smat^\top$ is $\binom{V-1}{k-1}$. For off-diagonal elements, the dot product counts the number of support sets in which both elements of the corresponding rows appear. This requires choosing the remaining $k-2$ elements from  $V-2$ elements, giving $\binom{V-2}{k-2}$ for each off-diagonal entry of $\smat\smat'$. Overall, after algebraic simplification, we find that
    \[
    \smat\smat^\top = \binom{V-2}{k-1}\left(\Id_V+\frac{k-1}{V-k}\ones\ones^\top\right)\,.
    \]
    We may now use the closed form of $\Lbtina$ and few more algebra simplifications to find that
    \[
    \Lbtina{\Lbtina}^\top=\binom{V-2}{k-1}\left(\Id_V-\frac{1}{V}\ones\ones^\top\right)\,.
    \]
    Let $\Pbb\in\R^{V\times (V-1)}$ denote an orthonormal basis of the subspace orthogonal to $\ones_V$. Since $ \Lbtina{\Lbtina}^\top=\binom{V-2}{k-1}\Pbb\Pbb^\top$ and $\Lbtina{\Lbtina}^\top=\Ub\Sigmab^2\Ub^\top$, we have shown that $\Ub=\Pbb$ and $\Sigmab=\sqrt{\binom{V-2}{k-1}}\,\Id_V$.
\end{proof}

Using $\ones^\top\smat=k\ones_m^\top$, we can compute
\[
{\Lbtina}^\top{\Lbtina}=\smat^\top\smat-\frac{1}{V}(\smat^\top\ones_V)(\smat^\top\ones_V)^\top = \smat^\top\smat-\frac{k^2}{V}\ones_m\ones_m^\top\,.
\]
From this and the fact that $\diag{\smat^\top\smat}=k\Id_m$, and recalling that ${\Lbtina}^\top\Lbtina=\Vb\Sigmab^2\Vb^\top$,  we find that all context embeddings are equinorm with squared norm (i.e. diagonal entries of $\Vb\Sigmab\Vb^\top$) equal to 
\[
\|\hb_j\|^2= \frac{k-\frac{k^2}{V}}{\sqrt{\binom{V-2}{k-1}}},\,\,\forall j\in[m].
\]
Recall also that the diagonal entries of $\Ub\Sigmab\Ub^\top$ are
\[
\|\w_v\|^2= \left(1-\frac{1}{V}\right)\sqrt{\binom{V-2}{k-1}},\,\,\forall v\in[V].
\]

From the off-diagonal entries of ${\Lbtina}^\top\Lbtina$, we can infer the angles between different context embeddings. Let $j\neq j'\in[m]$, then the $(j,j')$ of ${\Lbtina}^\top\Lbtina$ is $|\Sc_j\cap\Sc_{j'}|-\frac{k^2}{V}$. Thus, the  $(j,j')$ off-diagonal entries of $\Vb\Sigmab\Vb^\top$ are \[ \hb_j^\top\hb_{j'}=\frac{|\Sc_j\cap\Sc_{j'}|-\frac{k^2}{V}}{\sqrt{\binom{V-2}{k-1}}}\implies \cosa{\hb_j}{\hb_{j'}}=\frac{|\Sc_j\cap\Sc_{j'}|-\frac{k^2}{V}}{k-\frac{k^2}{V}}\,\,\,j\neq j'\in[m]\,.
\]

Recalling that $\W\Hb^\top=\Lbmm$, we may now compute the angle between word and context embeddings as follows:
\begin{align}
    \cosa{\w_v}{\hb_j} = \begin{cases}
    \sqrt{\frac{V-1}{k(V-k)}} & v\in\Sc_j\\
     \frac{-1}{\sqrt{k(V-k)(V-1)}} & v\notin\Sc_j
    \end{cases}\,.
\end{align}
This completes the proof of Proposition \ref{prop:sym geo app} and the proof of statements (ii) and (iii) of Proposition \ref{prop:all symmetric geometry}.

It remains to prove statement (i) of Proposition \ref{prop:all symmetric geometry}, i.e. proving that $\Lbtina=(\Id_V-\frac{1}{V}\ones\ones^\top)\Sb$ is the unique minimizer of \ref{eq:ufm-svm}.

To prove this, we appeal to Proposition \ref{propo:svm sufficient condition}. First, it is straightforward checking that $\Lbmm$ satisfies the SVM constraints (in fact, with equality). Thus, from Proposition \ref{propo:svm sufficient condition}, it suffices that the matrix $\A:=\Ub\Vb^\top$ satisfies $\A[v,j]<0$ for all $v\notin\Sc_j$ and all $j\in[m]$. But, in our case $\Ab=\Ub\Vb^\top=\binom{V-2}{k-1}^{-\frac{1}{2}}\Lbtina$ because $\Sigmab=\Id_{V-1}$. Thus, the desired condition holds for $\A$ since $\Lbtina[v,j]=-k/V<0$ for all $j\notin\Sc_j$ and $j\in[m]$.




\FloatBarrier

\section{Numerical experiments}\label{Numerical_experiments_appx}

\subsection{Additional details and results: Sec.~\ref{sec:exp_main}} \label{sec:exp_details}

\subsubsection{Datasets}
We use a total of three datasets. We first employ two smaller-scale datasets, which are well-suited for verifying our theoretical solutions. Then, we use one larger-scale dataset designed to investigate the geometric properties of text in conditions that approximate real-world text scenarios. \looseness=-1 

    \noindent\textbf{\synth.}~We generate each context $\xd_j$ as follows: We select $T-1=5$ words (tokens), and manually come up with the $T=6$-th tokens that are consistent with the given context. To model the probabilities $\pih_j$ and $\pbh_j$, we sample each support independently to a number of repeats to emulate the behavior of repeated context in natural language. For instance, the context $\xd_j=$ \texttt{``Lily wants to try the''} is followed by $\Sc=\{\texttt{soup}, \texttt{game}, \texttt{movie}\}$ with respective probabilities $ [0.5, 0.25, 0.25]$. {The dataset consists of $n=116$ samples, containing $m=16$ distinct contexts, and a vocabulary size of $V=30$. Each context has a fixed support set length of $|\mathcal{S}_j|=3$.} In Fig.~\ref{fig:verysmalldataset}, we show the $m=16$ unique contexts and the soft labels on their next token. In this dataset, the tokenization is done at the word level and the empirical entropy lower-bound is $\Hc= 1.6597$.\looseness=-1
    \begin{figure}[h!]
        \vspace{-10pt}
        \centering 
        \includegraphics[width=0.8\linewidth]{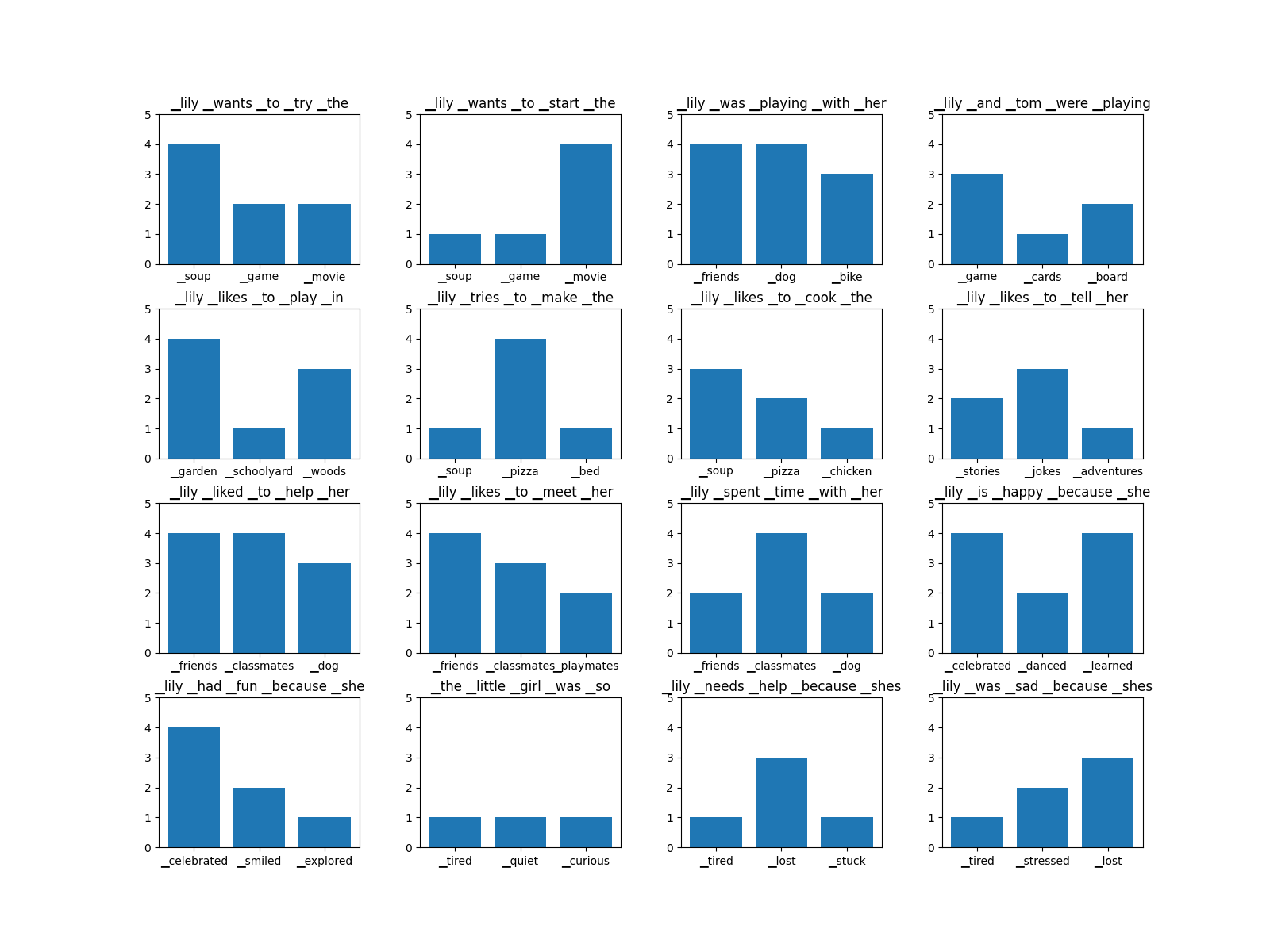}  
        \captionsetup{width=\textwidth}
        \vspace{-10pt}
        \caption{The contexts, support sets, and soft labels in the \synth~dataset.}
        \label{fig:verysmalldataset}
    \end{figure}

    \noindent\textbf{\simTS.}~Advancing towards a more realistic but still controlled dataset, we use contexts and support sets from the \TS~corpus. 
    We derive contexts $\xd_j$ and support sets $\Sc_j$ from the \TS~dataset and we create the training data from the most frequent word-level contexts with length 5, such as $\xd_j = [\texttt{``once'', ``upon'', ``a'', ``time'', ``,''}]$. For the support sets $\Sc_j$, we record all next tokens of $\xd_j$ in the original dataset. Then, we replace the words in the contexts with their synonyms to generate new contexts with identical support sets. This allows us to have multiple contexts sharing common support sets, while controlling the vocabulary size, which is set to $V=104$. {The final dataset consists of $n\approx3050$ samples with $m\approx400$ unique contexts.} The frequencies $\pbh_j$ are determined by  independently sampling each next token from $\Sc_j$ several times. Here, $\Hc= 1.0166$.
    
    \noindent\textbf{\TS.}~To experiment with more a standard dataset, we use 100 stories sampled from \TS. Here, we do not make any sample selections for contexts and support sets, unlike the previously mentioned datasets. We use a tokenizer with vocabulary size $V=128$, and fixed context length $T-1=6$. We choose a small context window and vocabulary size to make tracking the distinct contexts and their support sets computationally manageable. In this setup, the number of distinct contexts $m\sim 10^5$ and $\Hc=0.3112$. \looseness=-1

\vspace{-5pt}

\subsubsection{Training details}
\noindent\textbf{\synth~and \TS~experiemtns.}~We train the models long enough to ensure that the loss approached the empirical entropy lower-bound $\Hc$ within an order of $10^{-4}$. In all experiments, we use AdamW optimizer with a weight decay \(\lambda=10^{-6}\). The learning rate is initially set to \(10^{-4}\), with a step-decay schedule. \looseness=-1

\noindent\textbf{\TS~experiemtns}.~We train TF with AdamW optimizer until it reaches the empirical entropy lower-bound within an order $10^{-3}$. We use warm-up over the first 5 epochs to increase the learning rate to $5 \times 10^{-4}$ and use a cosine learning rate scheduler to decay its values over the course of the training. We set the weight decay to $10^{-5}$. \looseness=-1

\subsubsection{Additional results}\label{sec:additional_results_exp}
Figs.~\ref{fig:synthetic_heatmap_small}-\ref{fig:sim_W_label_verysmall} complement the discussions and experimental results in Sec.~\ref{sec:exp_main}. Below we only discuss the missing details of the \TS~experiments.

\noindent\textbf{Visualization details of \TS~experiments.}~For visualizing the heatmap of embeddings similarity in Fig.~\ref{fig:intro_TinyStories}, we choose $10$ unique support sets in the dataset with $|\Sc_j|>2$. For each of the chosen support sets $\Sc$, we choose $10$ distinct contexts $j$ such that $\Sc_j=\Sc$ (a total of $100$ samples), and illustrate the Gram matrix of the centered support matrix $\smatbar$ and context embeddings $\Hb$ for this subset of the contexts. Given that the support sets are sparse, we choose to use contexts with the largest support set size which would result in a more pronounced pattern on the correlation matrices. We skip visualizing the word embeddings $\W$ in this experiment, as the heatmaps did not display any noticeable visual structures. \looseness=-1

For the metrics displayed in Fig.~\ref{fig:TinyStories_curves}, due to the larger number of context embeddings, we only compute the metrics such as $\norm{\Qcs(\Lb_k) - \Lbin}$, norm growth and correlation with proxies using a sample of $1000$ token/context pairs chosen randomly from the training set. We observe qualitatively similar behavior compared to other experiments, with the exception of the norm growth of $\Hb$, which we conjecture to be a consequence of layer normalization in the final embedding layer. \looseness=-1

\vspace{-10pt}
\input{Arxiv_NewTempl/sections/figure_verysmall_heatmaps}

\begin{figure}[h!]
    \vspace{-10pt}
    \hspace{-80pt}
    \includegraphics[width=1.3\linewidth]{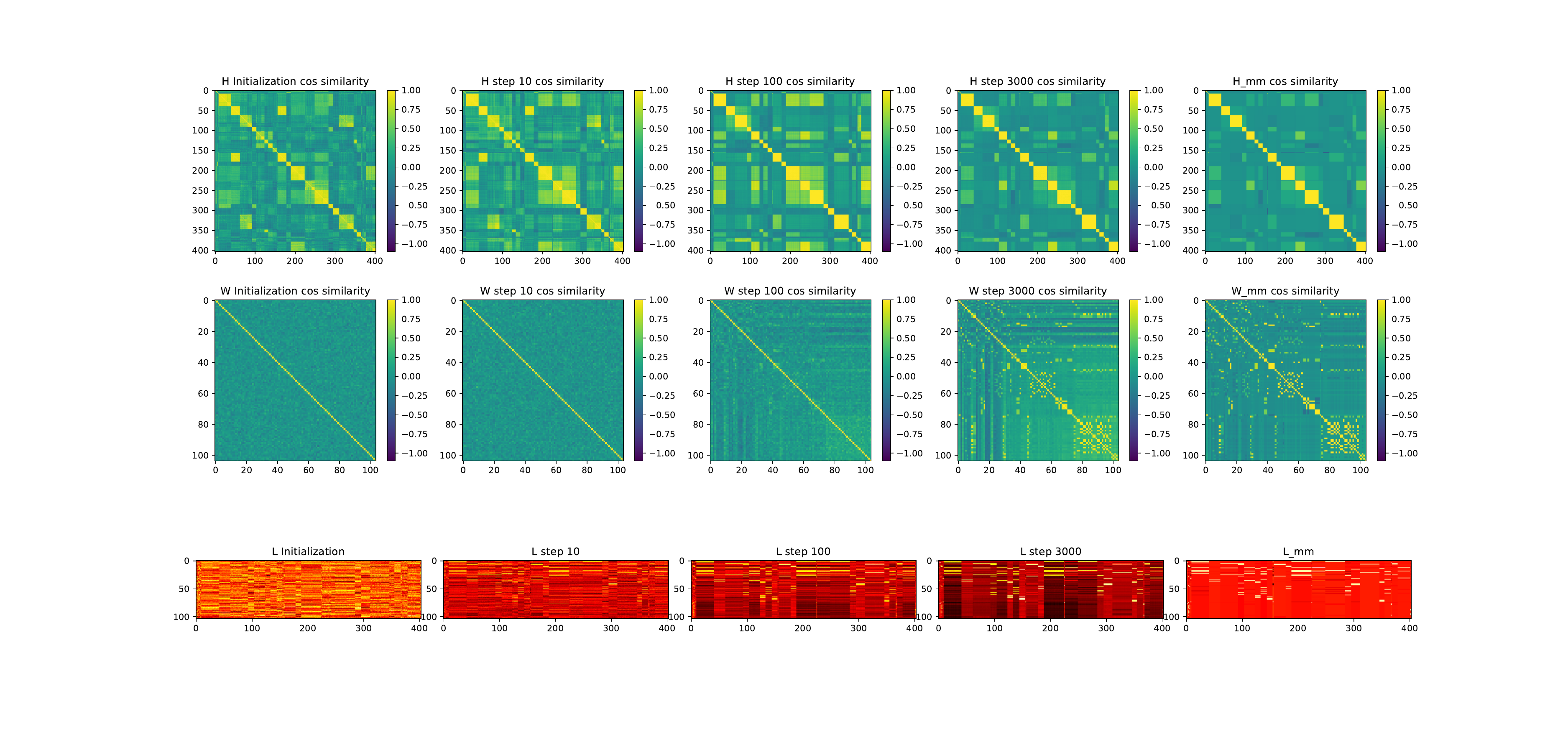}  
\captionsetup{width=\textwidth}
\vspace{-40pt}
    \caption{Evolution of $\Hb_k$, $\W_k$ and $\Lb_k$ to \ref{eq:ufm-svm} solution $\Hmm$, $\Wmm$ and $\Lbmm$ throughout training. The embeddings are trained by the transformer on the \simTS~dataset. See Sec.~\ref{sec:exp_main}. \looseness=-1 }
    \label{fig:m404steps}
\end{figure}

\begin{figure}[h!]
    \hspace{-80pt}
    \includegraphics[width=1.3\linewidth]{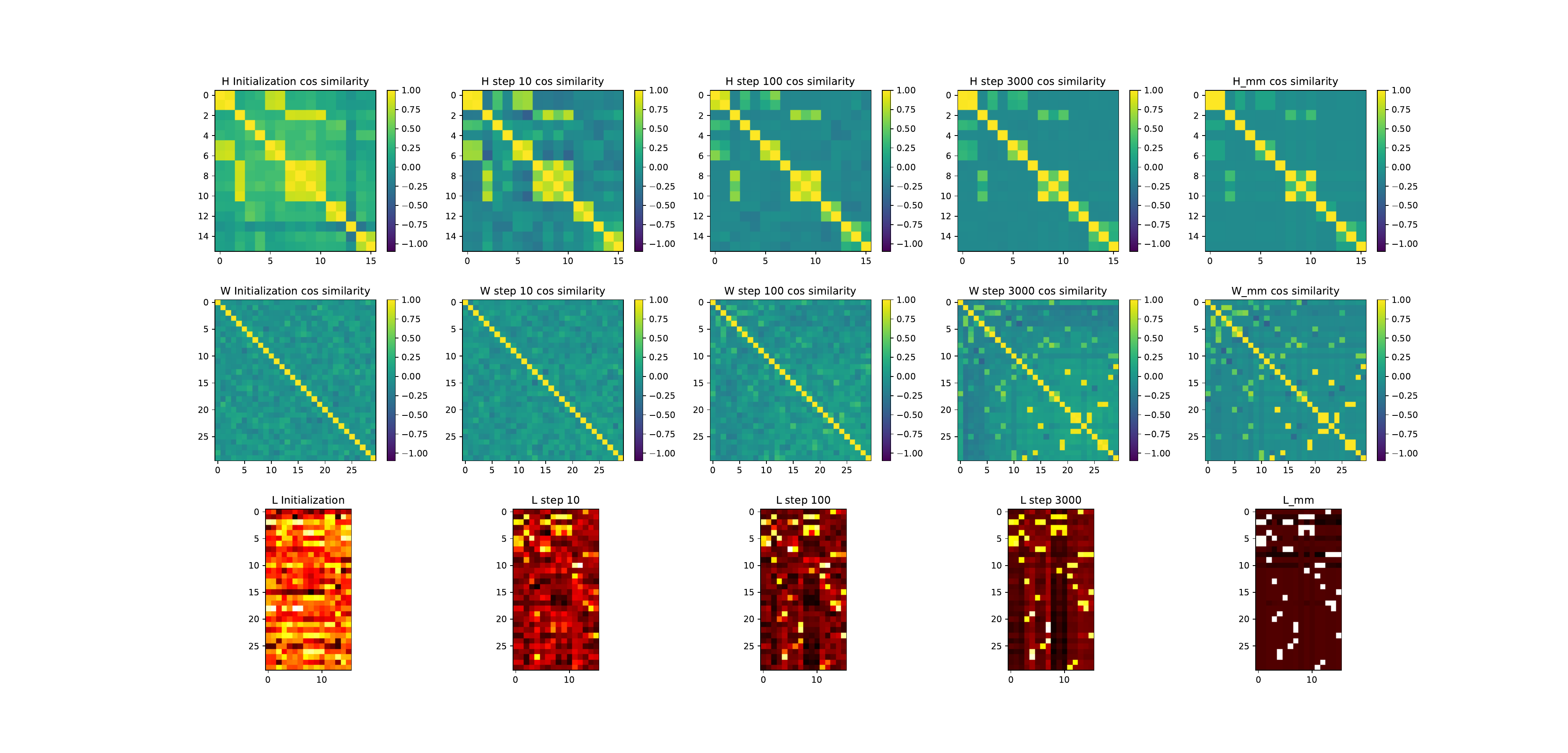}  
\captionsetup{width=\textwidth}
    \caption{Same as Fig.~\ref{fig:m404steps}, this time for the \synth~dataset. See Sec.~\ref{sec:exp_main}.  }
    \label{fig:verysmallsteps}
\end{figure}


\begin{figure}[h]
    \centering
    \begin{tikzpicture}
        \node[inner sep=0] (image) at (0,0) {\includegraphics[width=0.85\linewidth]{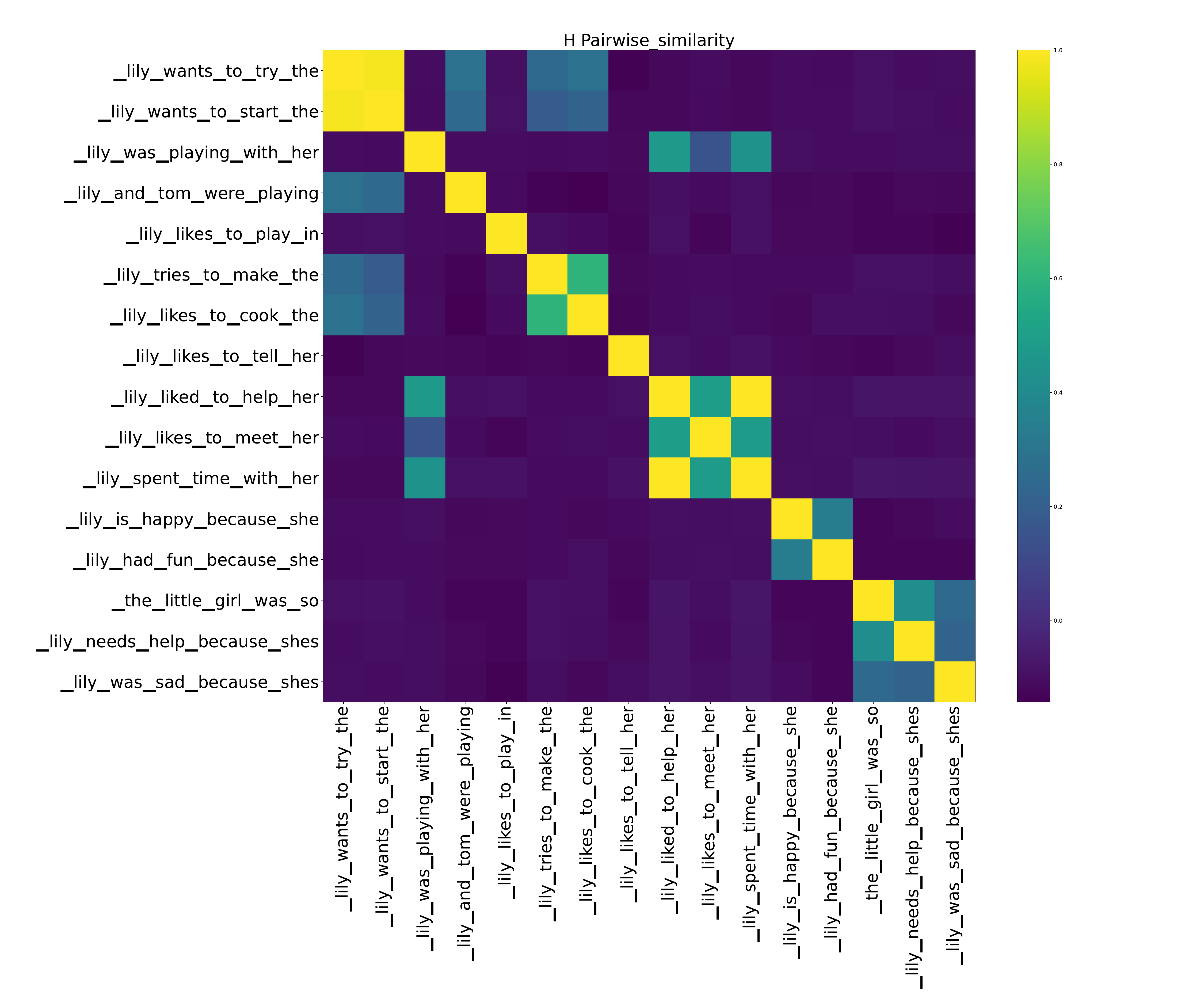}};
        \node[align=center, fill=white, fill opacity=1, text width=0.5\linewidth] at (0.5,4.8) {$\corr{\Hb}$}; 
    \end{tikzpicture}
    \captionsetup{width=\textwidth}
    \label{fig:sim_H_label_verysmall}

    \centering
    \vspace{-20pt}
    \begin{tikzpicture}
        \node[inner sep=0] (image) at (0,0) {\includegraphics[width=0.85\linewidth]{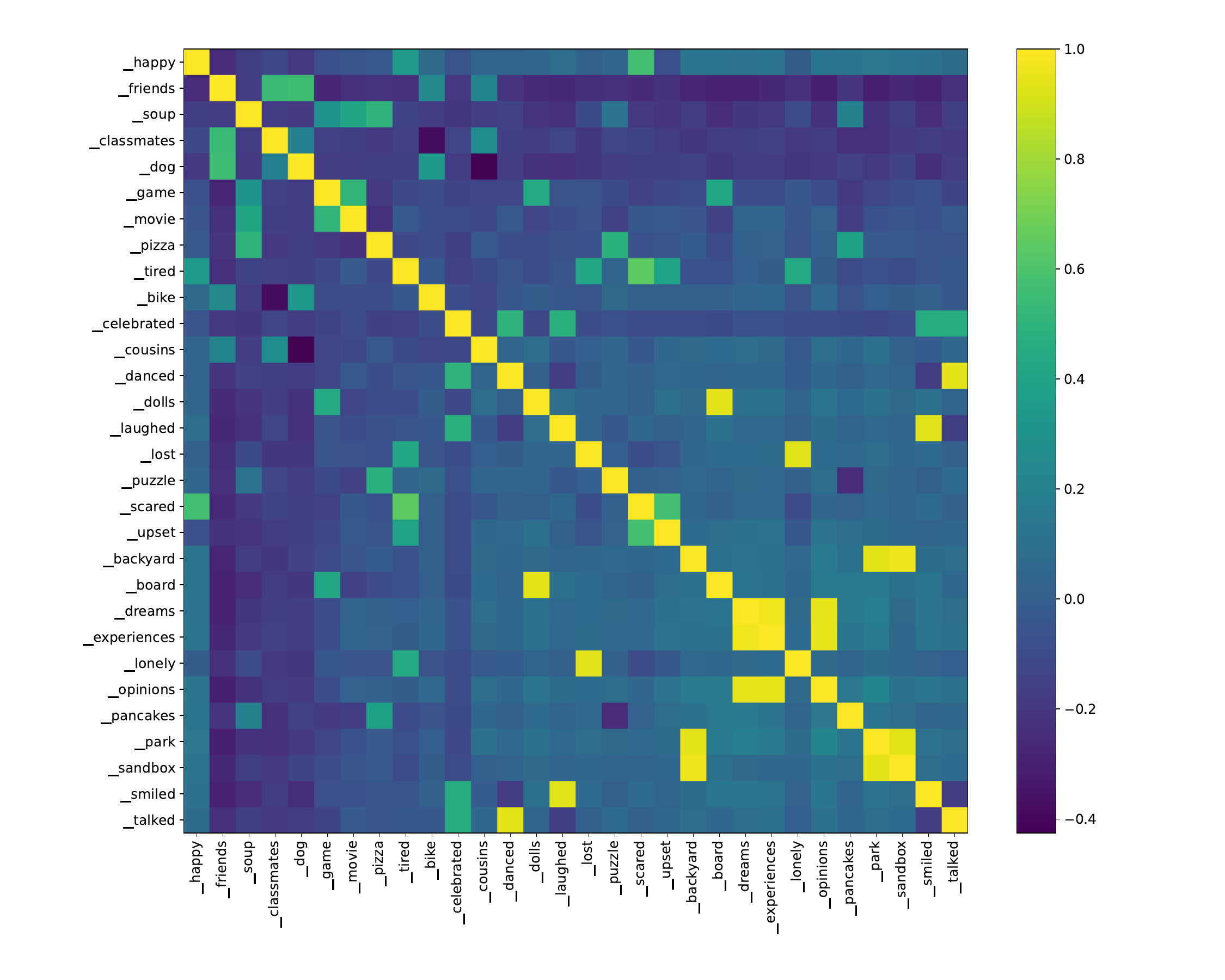}};
        \node[align=center, fill=white, fill opacity=1, text width=0.5\linewidth] at (0.0,4.8) {$\corr{\W^\top}$}; 
    \end{tikzpicture}
    \captionsetup{width=\textwidth}
    \caption{Geometry of context and word embeddings for \synth~dataset. A lighter color indicates higher similarity in the embedding space.}
    \label{fig:sim_W_label_verysmall}
\end{figure}



\FloatBarrier

\vspace{-10pt}
\subsection{Other architectures}\label{sec:mlp}
The results in Thms.~\ref{thm:reg path} and \ref{thm:reg path finite} hold provided the model is expressive enough to generate (approximately) unconstrained embeddings and the loss can be minimized close to the lower-bound. Depending on the expressiveness of a specific network design, the size of the network required for achieving the entropy lower-bound can vary significantly. 

To explore this, we hereby repeat our experiments on \simTS~by replacing the TF model with a multi-layer perceptron (MLP). We use an MLP consisting of 20 layers organized into four blocks, each containing five layers, with hidden dimension sizes 1024, 512, 256, and 128. This leads to a network with around $8$ times more parameters than the TF model. The final embeddings geometry is depicted in Fig.~\ref{fig:Sim_SHWFixedLMLP_d128}. We observe that the geometrical patterns appearing in $\corr{\Hb}$ and $\corr{\W^\top}$ at a coarse level are similar to those of the TF model. However, the MLP, even with $\sim10$ times larger size, struggles to recover the fine-grained patterns. In terms of loss convergence, we find the MLP converges to the empirical entropy in the order of $10^{-2}$ but TF converges in the order of $10^{-4}$ on the same dataset. We conjecture that an even larger MLP might be able to achieve better convergence to our theoretical prediction, but we also caution of potential optimization bottlenecks. We encourage additional experiments with other architectures such as LSTMs or state-space models as future work. 
 
\begin{figure}[h]
    \centering
    \hspace{-10mm}
    \vspace{-10pt}
    \begin{tikzpicture}
        \node[anchor=south west, inner sep=0] (image) at (0,0) {\includegraphics[width=1.\linewidth]{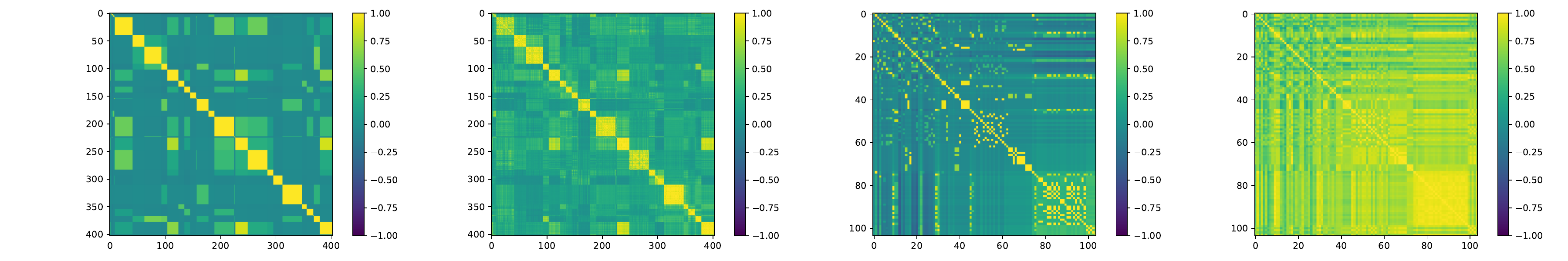}};
        \begin{scope}[x={(image.south east)}, y={(image.north west)}]
            \node[align=center] at (0.39, 1.05) {$\corr{\Hb}$};
            \node[align=center] at (0.15, 1.05){$\corr{\smatbar}$};
            \node[align=center] at (0.885, 1.05) {$\corr{\W^\top}$};
            \node[align=center] at (0.64, 1.05) {$\corr{\smatbar ^\top}$};
        \end{scope}
        \draw[black, line width=2pt] (7.3,3.) -- (7.3,-0.1);
    \end{tikzpicture}
\captionsetup{width=\linewidth}     \vspace{4pt}
    \caption{Embedding geometries trained by MLP. \textbf{Left:} Geometry comparison between $\corr{\smatbar}$ and $\corr{\Hb}$; \textbf{Right:} Geometry comparison between $\corr{\smatbar ^\top}$ and $\corr{\W^\top}$. See App.~\ref{sec:mlp} for details.\looseness=-1}
    \label{fig:Sim_SHWFixedLMLP_d128}
\end{figure}
\vspace{-10pt}
\subsection{Auto-regressive training}\label{sec:ar}
Here, we train the model auto-regressively. In the previous settings, we focused on training samples that were all of fixed length $T-1=5$. In this section, we let the model learn the embeddings for different sequence lengths $1\leq T-1\leq 16$. Note that the length of the context does not have an impact on our theoretical analysis other than affecting the sparsity pattern $\Sb$ of the training set which in turn influences the implicit geometry as we have seen. 

We train an 8-layer TF on 200 stories from \TS~using character-level tokenizer, which limits the vocabulary to approximately 40 characters and promotes higher entropy in next-token distribution. We denote the loss value across different positions $T\in\{2,\cdots,17\}$ at iteration $k$ by $\CE(\Lb_{T,k})$. In Fig.~\ref{fig:positional_losses}, we display the distance of each loss component $\CE(\Lb_{T,k})$ from its empirical entropy lower-bound $\Hc_T$, i.e., the T-gram entropy of the training set. We observe generally better convergence for shorter sequence lengths $T$.
%
In Fig.~\ref{fig:H_S_sim_label_ar}, we illustrate the context embeddings similarities.  For visualization, we select 5 contexts per each sequence length that end in token ``y\_'' or ``\_t''. The context embeddings and the proxy \ref{P proxy} show similar patterns at a coarse level. However, at finer scale, the learned context embeddings that end with the same tokens exhibit strong correlation on average, even when their support sets do not align. We defer further investigation into whether the embeddings' dependence is due to insufficient network capacity or an optimization bottleneck to future work. \looseness=-1
%
%



\begin{figure}[ht]
    \hspace{-50pt}
   \begin{tikzpicture}
        \node[inner sep=0pt] (image) at (0,0) {\includegraphics[width=1.13\linewidth]{yize_figures_final/H_S_sim_label_ar.pdf}};
        
        \node[align=center] at ($(image.north)-(3,0.5)$) {$\corr{\smatbar}$};

        \node[align=center] at ($(image.north)+(3,-0.50)$) {$\corr{\Hb}$};
    \end{tikzpicture}
    \vspace{-7pt}\captionsetup{width=\textwidth}
        \caption{
        Geometry of context embeddings $\corr{\Hb}$ and the heuristic proxy \ref{P proxy}, $\corr{\smatbar}$, in the autoregressive experiment of App.~\ref{sec:ar}.}
    \label{fig:H_S_sim_label_ar}
\end{figure}

\begin{figure}[ht]
\vspace{-5pt}
    \centering
    \resizebox{0.7\textwidth}{!}{
    \begin{tikzpicture}
        \node[anchor=south west,inner sep=0] (image) at (0,0) {\includegraphics[width=0.8\linewidth]{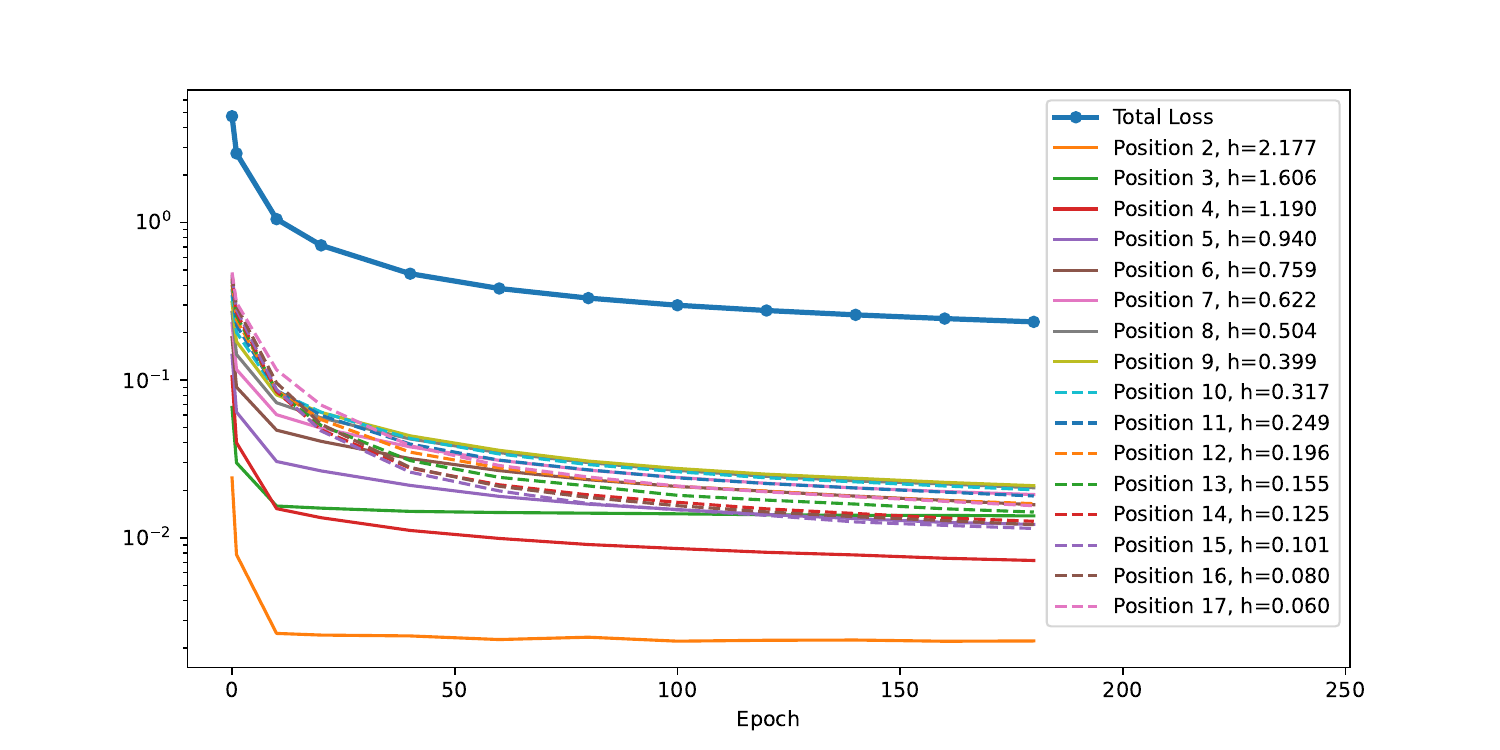}};
        \node[align=center, fill=white, fill opacity=0.5, text opacity=1, text width=10cm] at ($(image.center)+(0,2.7)$) {Loss convergence across positions};
        \node[align=center, fill=white, fill opacity=0.5, text opacity=1, text width=5cm, rotate=90] at ($(image.center)-(5.5,0)$) {\small $\text{CE}(\Lb_{T,k})-\Hc_T$};

    \end{tikzpicture}
    }
    \captionsetup{width=\textwidth}
    \caption{TF trained autoregressively on a subset of \TS~dataset: Loss convergence to its empirical entropy lower-bound for each sequence length $T=2,\cdots,17$. See App.~\ref{sec:ar} for details.
    }
    \vspace{-5pt}\label{fig:positional_losses}
\end{figure}

\subsection{Discussion on large vocabulary setting}\label{sec:d<V}
%
Throughout our analysis, we require the embedding dimension to be larger than the vocabulary size, i.e., $d\geq V$. This condition was necessary to make the non-convex problem in \eqref{eq:ufm relax} convex. 
However, if there exists a low-rank optimal solution in \ref{eq:ufm-svm}, i.e., $\rank{\Lbmm}<V$, the condition $d\geq V$ can be relaxed in the analysis. In general, the smallest rank among all minimizers depends on the sparsity pattern of the language. It is intriguing to investigate this dependence further.


This section examines the network's performance when the decoder dimension \(d\) is smaller than the vocabulary size \(V\). To ensure a fair comparison between different setups, we fix the transformer blocks' inner embedding dimension $d_\text{TF}=64$ to maintain comparable expressivity for all networks. To vary the final layer embedding dimension $d$, we add an additional linear layer on top of the network to adjust the context embedding dimension. 

In Fig.~\ref{fig:d<V}, we show the learned context embeddings for $d=128$ and $d=64$, for a 10-layer TF trained on the \simTS~dataset with $V=104$. We observe that with smaller $d$, the speed of convergence to the entropy lower-bound $\Hc$ decreases. However, the learned context embeddings exhibit relatively similar geometry with moderate values of $d<V$. We leave more theoretical and empirical exploration of this setting to future works.

\begin{figure}[h]
\centering
\vspace{-10pt}
    \begin{tikzpicture}
        \node at (-0,-0) 
        {\includegraphics[scale=0.27]{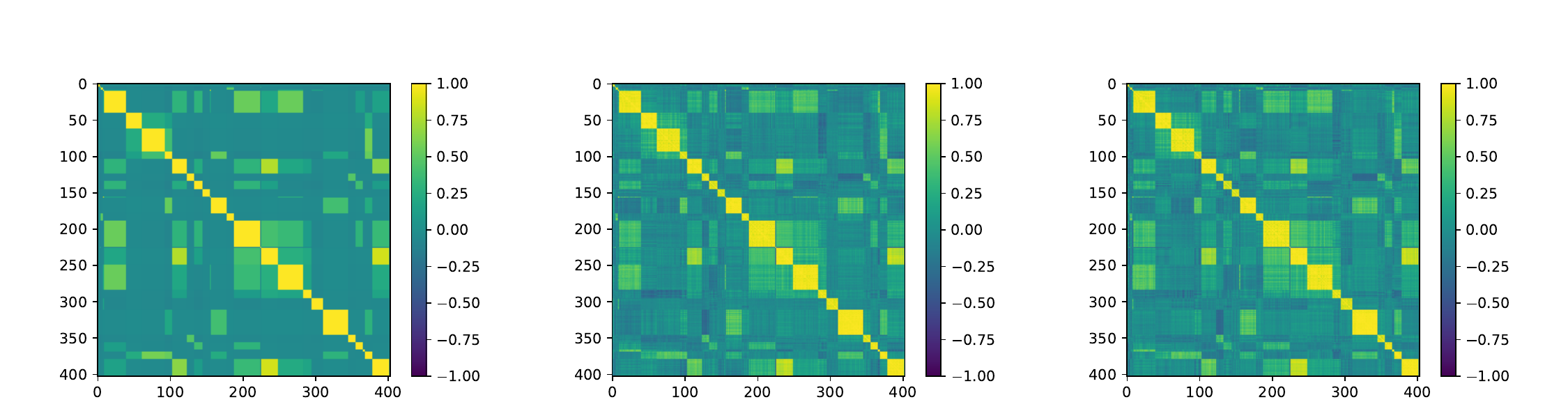}};
        \node at (-3.5,1.2) [align=center,scale=0.9]{$\corr{\smatbar}$};
        \node at (-0.2,1.2) [align=center,scale=0.9]{$\corr{\Hb}$, $d=128$};
        \node at (3.5,1.2) [align=center,scale=0.9]{$\corr{\Hb}$, $d=64$};
        \node at (0.0,-1.7) [scale=0.9]{(a) cosine similarity heatmaps};
        \node at (7,-0) {\includegraphics[scale=0.27]{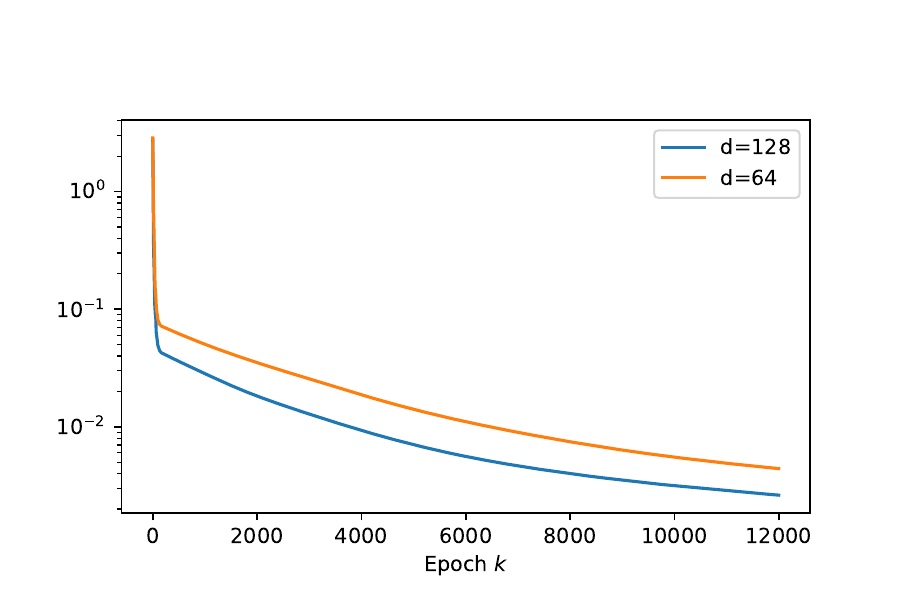}};
    
       \node at (7,-1.7) [scale=0.9]{(b) Loss convergence};
       \node at (5,0) [align=center,scale=0.4, rotate=90]{$\text{CE}(\Lb_{k})-\Hc$};
   \end{tikzpicture}

    \captionsetup{width=\textwidth}
    \vspace{-7pt}
    \caption{Varying embedding dimension $d$ in the \simTS~experiment. \textbf{(a)} Left: Proxy \ref{P proxy} for context embeddings. Middle and Right: Geometry of context embeddings trained with TF with $d=128>V$ and $d=64<V$, respectively, \textbf{(b)} Loss convergence to the empirical entropy lower bound $\Hc$. See App.~\ref{sec:d<V} for details.\looseness=-1}

    \label{fig:d<V}
    \end{figure}

\newpage





\end{document}